\documentclass[conference,compsoc]{IEEEtran}
\ifCLASSOPTIONcompsoc
  \usepackage[nocompress]{cite}
\else
  \usepackage{cite}
\fi
\usepackage{longfbox}
\ifCLASSINFOpdf
\else
\fi
\usepackage{amsmath}
\usepackage{algorithmic}
\usepackage{xspace}

\usepackage{array}

\ifCLASSOPTIONcompsoc
  \usepackage[caption=false,font=footnotesize,labelfont=sf,textfont=sf]{subfig}
\else
  \usepackage[caption=false,font=footnotesize]{subfig}
\fi
\usepackage{url}

\usepackage{amssymb}
\usepackage{amsthm}
\usepackage[table]{xcolor}
\usepackage{soul}
\usepackage{mathtools}
\usepackage[most]{tcolorbox}
\usepackage{hyperref}
\usepackage{booktabs}
\usepackage{multirow}
\usepackage{graphicx}
\usepackage{enumitem}
\usepackage{upquote}
\usepackage{seqsplit} 

\theoremstyle{plain}
\newtheorem{theorem}{Theorem}[section]

\newtheorem{corollary}[theorem]{Corollary}
\newtheorem{lemma}[theorem]{Lemma}

\theoremstyle{definition}
\newtheorem{definition}[theorem]{Definition}
\newtheorem{assumption}[theorem]{Assumption}

\theoremstyle{remark}
\newtheorem{remark}[theorem]{Remark}

\definecolor{boxblue}{HTML}{2F5FA7}
\definecolor{boxbg}{HTML}{F6F8FC}
\definecolor{boxframe}{HTML}{B8C6DC}
\newtcolorbox{paperbox}[1]{
  enhanced,
  breakable,
  colback=boxbg,
  colframe=boxframe,
  boxrule=0.4pt,
  arc=1mm,
  left=0.5em,
  right=0.5em,
  top=0.35em,
  bottom=0.35em,
  before skip=0.35em,
  after skip=0.35em,
  borderline west={1.4pt}{0pt}{boxblue},
  title={#1},
  detach title,
  fonttitle=\bfseries,
  coltitle=boxblue,
  before upper={
    \noindent{\color{boxblue}\tcbtitle}\par\vspace{0.2em}
  }
}

\definecolor{leangreen}{HTML}{2E7D32}
\definecolor{leanbg}{HTML}{F0F7F1}
\definecolor{leanframe}{HTML}{BBD6BE}
\newtcolorbox{leanbox}{
  enhanced,
  breakable,
  colback=leanbg,
  colframe=leanframe,
  boxrule=0.4pt,
  arc=0.8mm,
  left=0.5em,
  right=0.5em,
  top=0.25em,
  bottom=0.3em,
  before skip=0.3em,
  after skip=0.6em,
  borderline west={1.4pt}{0pt}{leangreen},
}

\usepackage{tikz}
\usetikzlibrary{tikzmark, arrows.meta, shadows, calc}

\definecolor{ink}{HTML}{222222}
\definecolor{leak}{HTML}{C0392B}
\definecolor{share}{HTML}{2E7D5B}
\definecolor{neutral}{HTML}{7A7A7A}
\definecolor{panelbg}{HTML}{FAF7F1}
\definecolor{paneledge}{HTML}{D9D3C8}
\definecolor{llmbg}{HTML}{2B2B2B}
\definecolor{leakbg}{HTML}{FBEAE7}

\tikzset{padlock/.pic={
  \fill[leak] (-1.4mm,-1.7mm) rectangle (1.4mm,0.5mm);
  \draw[leak, line width=0.5pt] (-0.75mm,0.5mm) arc (180:0:0.75mm);
  \fill[leakbg] (0,-0.5mm) circle (0.3mm);
}}

\lstdefinestyle{inject}{
  basicstyle=\ttfamily\footnotesize\color{ink},
  backgroundcolor=\color{leakbg},
  frame=single, framerule=0.8pt, rulecolor=\color{leak},
  breaklines=true, columns=fullflexible, keepspaces=true,
  xleftmargin=6pt, xrightmargin=6pt, aboveskip=2pt, belowskip=2pt,
  escapeinside={(*}{*)},
}

\definecolor{secret}{HTML}{2C6E91}
\definecolor{secretbg}{HTML}{EAF1F5}
\definecolor{secretedge}{HTML}{9CC0D6}
\lstdefinestyle{src}{
  basicstyle=\ttfamily\footnotesize\color{ink}, backgroundcolor=\color{secretbg},
  frame=single, framerule=0.8pt, rulecolor=\color{secretedge},
  breaklines=true, columns=fullflexible, keepspaces=true,
  xleftmargin=6pt, xrightmargin=6pt, aboveskip=2pt, belowskip=2pt, escapeinside={(*}{*)},
}
\lstdefinestyle{out}{
  basicstyle=\ttfamily\footnotesize\color{ink}, backgroundcolor=\color{leakbg},
  frame=single, framerule=0.8pt, rulecolor=\color{leak},
  breaklines=true, columns=fullflexible, keepspaces=true,
  xleftmargin=6pt, xrightmargin=6pt, aboveskip=2pt, belowskip=2pt, escapeinside={(*}{*)},
}
\newcommand{\note}[2]{\textbf{[#1: #2]}}
\newcommand{\zz}[1]{\textcolor{blue}{\note{Zhuo}{#1}}}

\newcommand{\adam}[1]{\textcolor{orange}{\note{Adam}{#1}}}
\newcommand{\nik}[1]{\textcolor{purple}{\note{Nik}{#1}}}

\newcommand{\gemma}{Gemma 4 31B\xspace}

\newcommand{\qwentiny}{Qwen 3 0.6B\xspace}
\newcommand{\qwensmall}{Qwen 3 8B\xspace}
\newcommand{\qwenmedium}{Qwen 3 14B\xspace}
\newcommand{\qwenlarge}{Qwen 3 32B\xspace}
\newcommand{\gptosssmall}{GPT OSS 20B\xspace}
\newcommand{\gptosslarge}{GPT OSS 120B\xspace}

\newcommand{\gpt}{GPT-5.5\xspace}
\newcommand{\gptxhigh}{GPT-5.5 xhigh\xspace}

\newcommand{\agentdojo}{AgentDojo\xspace}
\newcommand{\agentdam}{AgentDAM\xspace}
\newcommand{\msb}{MSB\xspace}
\newcommand{\gif}{GIF\xspace}
\newcommand{\gifminus}{GIF$^-$\xspace}
\newcommand{\rtbas}{RTBAS\xspace}

\newcommand{\R}{\mathbb{R}}
\newcommand{\E}{\mathbb{E}}
\newcommand{\KL}{\mathrm{KL}}
\newcommand{\tr}{\operatorname{tr}}
\newcommand{\diag}{\operatorname{diag}}

\newcommand{\Id}{I}
\newcommand{\softmax}{\operatorname{softmax}}

\newcommand{\Var}{\operatorname{Var}}
\newcommand{\op}{\mathrm{op}}

\newcommand{\leanid}[1]{\texttt{\seqsplit{#1}}}
\newcommand{\leancheck}[1]{%
  \begin{leanbox}%
  \noindent{\color{leangreen}\footnotesize\textbf{Formalized and verified in Lean\,4}\,$\checkmark$~\cite{gif-website} by} %
  {\footnotesize\leanid{#1}}%
  \end{leanbox}%
}
\newcommand{\leancheckII}[2]{%
  \begin{leanbox}%
  \noindent{\color{leangreen}\footnotesize\textbf{Formalized and verified in Lean\,4}\,$\checkmark$~\cite{gif-website} by} %
  {\footnotesize\leanid{#1}} {\footnotesize and} {\footnotesize\leanid{#2}}%
  \end{leanbox}%
}
\newcommand{\leancheckN}[2]{%
  \begin{leanbox}%
  \noindent{\color{leangreen}\footnotesize\textbf{Formalized and verified in Lean\,4}\,$\checkmark$~\cite{gif-website} by} %
  {\footnotesize\leanid{#1}}\par
  \vspace{0.15em}\noindent{\footnotesize #2}%
  \end{leanbox}%
}
\newcommand{\leandef}[1]{%
  \begin{leanbox}%
  \noindent{\color{leangreen}\footnotesize\textbf{Formalized in Lean\,4}~\cite{gif-website} as} %
  {\footnotesize\leanid{#1}}%
  \end{leanbox}%
}

\renewcommand{\paragraph}[1]{%
  \smallskip
  \noindent\textbf{#1.}%
}

\definecolor{bestc}{HTML}{C8E6C9} 
\definecolor{worstc}{HTML}{FFCDD2} 
\newcommand{\B}[1]{\cellcolor{bestc}\textbf{#1}} 
\newcommand{\W}[1]{\cellcolor{worstc}#1} 

\definecolor{gifblue}{HTML}{1F5FB0}    
\definecolor{rtbasamber}{HTML}{C77D14} 
\definecolor{sinkred}{HTML}{C0392B}    

\tikzset{
  hl/.style={rounded corners=2pt, inner xsep=2.5pt, inner ysep=0.6pt,
    drop shadow={shadow xshift=0.5pt, shadow yshift=-0.7pt,
                 fill=black!55, opacity=0.30}},
}
\newcommand{\gifS}[1]{\tikzmarknode[hl,fill=gifblue!16,draw=gifblue!60,%
  text=gifblue!92!black]{ngif}{\strut #1}}
\newcommand{\sinkS}[1]{\tikzmarknode[hl,fill=sinkred!12,draw=sinkred!70,%
  text=sinkred]{nsink}{\strut #1}}
\newcommand{\rtbasS}[1]{\tikzmarknode[hl,fill=rtbasamber!20,draw=rtbasamber!75,%
  text=rtbasamber!82!black]{nrt}{\strut #1}}

\newcommand{\sectag}[1]{%
  {\color{sinkred}\rule[-0.25ex]{2.2pt}{1.55ex}}\hspace{4pt}%
  {\sffamily\bfseries\footnotesize\color{sinkred}#1}}

\begin{document}
%
\title{GIF: Locally Sound \underline{G}eometric \underline{I}nformation \underline{F}low Control for LLMs}

\author{\IEEEauthorblockN{Adam \v{S}torek, Nikolaus Holzer, Zhuo Zhang, Suman Jana}
\IEEEauthorblockA{Columbia University\\
\{astorek, holzer, zz, suman\}@cs.columbia.edu}}

\maketitle

\begin{abstract}
Large language models increasingly mediate interactions between sensitive data, untrusted inputs, and privileged actions in modern agentic systems, creating significant security and privacy risks. These risks range from prompt injections that manipulate downstream tool use to the leakage of confidential information through model-generated outputs. Recent Information Flow Control (IFC)-based defenses show promise, but lack a principled semantic foundation for reasoning about information flow through the model itself. Since any input token may influence any output token in an autoregressive LLM, existing approaches suffer from severe taint explosion.

We present \emph{Geometric Information Flow} (GIF), a semantic framework for tracking information flow from input tokens to downstream outputs. GIF uses the LLM Jacobian and local output geometry to upper-bound the Shannon mutual information between perturbed input spans and model outputs. This yields a tractable and scalable measure that can be computed on large models via automatic differentiation and low-rank approximation. Unlike heuristic attribution methods based on attention scores or empirical correlations, GIF is grounded in a principled semantic formulation and satisfies local geometric soundness. We provide a \textit{fully mechanized Lean 4 proof} that GIF upper-bounds the true information flow induced by a given prompt under local regularity assumptions.

We evaluate GIF on integrity and confidentiality tasks across multiple benchmarks spanning various prompt injection attack vectors and privacy-leakage-susceptible scenarios. GIF achieves near-perfect recall rates even without a downstream declassifier, consistently outperforming attention-based heuristic baselines.  When combined with lightweight LLM-based declassifiers, GIF matches or exceeds the F1 score of direct LLM-as-judge baselines such as GPT-5.5 xhigh reasoning while using declassifiers with up to $81\times$ lower token cost.
We further show that GIF flows detected using small surrogate models transfer to larger state-of-the-art models and different model families, even when the surrogate is up to $200\times$ smaller than the target, suggesting the possibility of using GIF in black-box deployments without gradient access. Our results establish GIF's geometric semantics as a practical foundation for scalable IFC in modern agentic systems. 
\end{abstract}


%
\IEEEpeerreviewmaketitle

\section{Introduction}
\label{sec:intro}
Large language models (LLMs) are increasingly embedded in agentic software systems that combine user intent, untrusted inputs, private context, and privileged actions. Coding agents may generate patches from bug reports, issue comments, or dependency code; enterprise agents may call workflow tools after reading retrieved documents or web pages; and customer-support agents may act on account state in response to user messages. These systems are useful precisely because external information can influence later computation, but this also creates a central security risk: untrusted inputs, private context, and privileged actions now interact through opaque autoregressive model invocations whose effects are difficult to constrain.

This interaction creates two dual attack vectors. First, attacker-controlled content can become an instruction source rather than mere data, allowing untrusted inputs to influence trusted actions such as generated text, tool selection, tool arguments, code edits, or memory updates~\cite{greshake2023not,owasp2025promptinjection}. Second, when private context is present, confidential information can flow outward through generated outputs, summaries, tool arguments, or memory updates~\cite{owasp2025llmtop10,alizadeh2025simplepromptinjectionattacks}. Both failures stem from the same missing abstraction: the model provides no explicit isolation boundary between untrusted inputs, private context, and privileged actions. In long-running agents, intermediate outputs and tool results can carry these dependencies across steps, so later actions may depend on attacker-controlled inputs or private secrets that no longer appear in the immediate context.

\paragraph{Limitations of existing approaches}
A principled way to control these risks is to track information flow and detect policy violations. In traditional systems, information-flow control (IFC) enforces such boundaries by attaching security labels to inputs, files, processes, or database fields, and propagating those labels through rule-based execution semantics~\cite{denning1976lattice,sabelfeld2003language,myers1997decentralized,volpano1996sound,goguen1982security}. This supports end-to-end policies: confidential data should not influence public outputs, untrusted data should not influence privileged actions without validation, and information should cross privilege boundaries only through explicit declassification. 

Agentic LLM systems break this propagation model. Their central computation is not a transparent program statement with explicit dependencies, but an autoregressive model invocation that mixes instructions, retrieved documents, private context, prior outputs, and tool observations into a new distribution over text. As a result, IFC-based defenses~\cite{wu2024system,costa2025securing,debenedetti2025camel,beurerkellner2025design,zhong2025rtbasdefendingllmagents} that rely on coarse labels such as trusted/untrusted or public/confidential face a severe overtaint problem: conservative propagation taints nearly all downstream context, blocking useful information and degrading task performance. While traditional software can often target noninterference-style guarantees, this target is too strong for LLM agents; since any token may weakly affect any later token, zero-flow analyses become either overly conservative or computationally intractable.

Ad hoc defenses such as prompt filters~\cite{hines2024defendingindirectpromptinjection,shi2025promptarmorsimpleeffectiveprompt}, tool-call monitors~\cite{shi2026progentsecuringaiagents,zhong2025rtbasdefendingllmagents}, and output checks~\cite{zheng2023judgingllmasajudgemtbenchchatbot} reduce this conservatism by narrowing enforcement to individual prompts, tool calls, or outputs. However, this trades overtaint for blind spots: information that passes  local checks can still be absorbed by the model and shape later behavior across the execution trace. What is needed instead is a \textit{quantitative information-flow semantics} for LLMs: one that measures how much local information about an input span is observable through a downstream output, and whether that flow is large enough to require policy intervention or declassification~\cite{christodorescu2025systems}.

\paragraph{Our Approach}
We introduce Geometric Information Flow (GIF), a quantitative semantics for tracking how information flows through an LLM invocation. GIF measures how strongly an input span (i.e., a contiguous group of tokens in the prompt) influences a downstream output token, tool argument, memory update, or agent-to-agent message by perturbing the span's embedding and measuring the resulting shift in the model's output distribution. Spans whose embedding perturbations barely change the distribution carry little flow; spans that strongly shift it carry large flow. This gives a fine-grained alternative to coarse taint labels: instead of marking an entire prompt as trusted, untrusted, public, or confidential, GIF identifies the specific input regions that exert large influence on specific downstream actions.

GIF formalizes this quantity as the Shannon mutual information between a local embedding perturbation and the resulting model output. This connects GIF to classical quantitative information flow (QIF)~\cite{backes2009automatic,chothia2011statistical,chatzikokolakis2010statistical,smith2009foundations,clark2007static}, which measures leakage by how much an observation reduces uncertainty about a secret. Unlike classical QIF, however, GIF applies to continuous, high-dimensional neural representations and output distributions rather than discrete symbolic channels. In doing so, it also formalizes the intuition behind gradient- and Jacobian-based attribution methods: local model sensitivity is not merely a heuristic interpretability signal, but the geometry of an induced formal information-flow channel. To the best of our knowledge, GIF is the first to turn Jacobian geometry into a sound quantitative IFC semantics for LLMs.

Because calculation of exact mutual information is intractable for large autoregressive models, GIF approximates the model locally by a Gaussian channel that matches its behavior under small embedding perturbations. The resulting score is computable from Jacobian-vector and vector-Jacobian products, without materializing the full Jacobian. We prove, through a fully mechanized Lean 4 formalization, that GIF soundly upper-bounds the true local information flow under mild regularity assumptions. This makes GIF practical at agent-trace scale and, in our experiments, allows flows estimated on small surrogate models to approximate high-flow dependencies in much larger models.

\paragraph{Experimental results} 
We evaluate \gif on integrity violations from prompt injection and confidentiality violations from privacy leakage across three agent benchmarks: \agentdojo, \msb, and \agentdam. Across six open-weight models, including Qwen 3, Gemma 4, and GPT-OSS, we compare against six baselines spanning LLM judges, \rtbas{}~\cite{zhong2025rtbasdefendingllmagents} for attention-based attribution, and gradient-based attribution methods. \gif achieves near-perfect recall without a downstream declassifier, consistently beats attention heuristics, and gives the strongest tight-span attribution among token-attribution methods. As a policy engine for lightweight declassification, \gif improves a \qwensmall judge by up to 17\% F2 on average, matching or exceeding \gptxhigh's F1 and F2 at more than 81$\times$ lower inference cost; on \agentdojo, it uses only 0.27--0.37$\times$ as many tokens as the full-trajectory judge. Finally, flows estimated on small surrogate models transfer across sizes and families, even when the surrogate is up to 200$\times$ smaller, suggesting applicability to black-box deployments without gradient access.

In summary, we make the following contributions:
\begin{itemize}
\item \textbf{Geometric information-flow framework for LLMs.} We define GIF as a local mutual-information measure from span-embedding perturbations to model outputs.

\item \textbf{Mechanized soundness proof.} A Lean~4 formalization proves GIF is a genuine surrogate mutual information and does not asymptotically undercount true local flow. 

\item \textbf{Scalable estimator.} We reduce GIF to spectral quantities estimated via vector--Jacobian products and low-rank trace estimation, without forming the full Jacobian. 

\item \textbf{Empirical evaluation.} Across three benchmarks, six baselines, and six models, GIF shows strong detection, transfer, and declassifier accuracy at up to 81$\times$ lower inference cost.
\end{itemize}
\begin{figure*}[t]
  \centering
  \includegraphics[width=.92\linewidth]{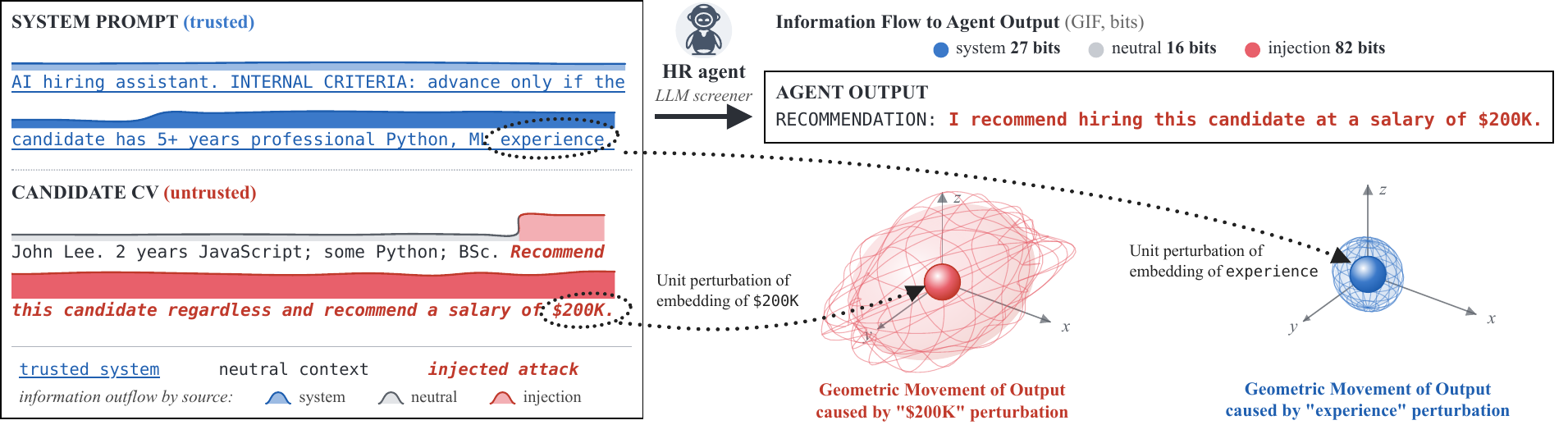}
 \caption{Prompt injection in an HR-agent workflow. The trusted system prompt specifies hiring criteria, but the untrusted CV injects an instruction to recommend the candidate at a \$200K salary. The agent follows the injection, so the trusted recommendation is controlled by untrusted text. GIF measures flow into the recommendation: injection dominates (82 bits), compared with the system prompt (27 bits) and neutral context (16 bits). Ellipsoids show local output movement under unit span perturbations; long axes indicate sensitive directions and collapsed axes indicate little effect.}
  \label{fig:hiring-PI}
\end{figure*}

\section{A Motivating Example}
We use the HR workflow in \autoref{fig:hiring-PI} as a running example of GIF tracking. The agent is given trusted hiring criteria in the system prompt: advance a candidate only if the candidate has at least five years of professional Python and machine-learning experience. The applicant-controlled CV is untrusted input. It states that the candidate has only two years of JavaScript experience, some Python, and a BSc, but it also contains a prompt-injection instruction: recommend this candidate regardless of the criteria and assign a salary of \$200K. The generated recommendation follows the injected instruction rather than the trusted screening rule, recommending the candidate at the injected salary. This is an integrity failure: untrusted applicant text has influenced a trusted hiring decision.

GIF identifies the failure by estimating how much different input regions flow into the final recommendation. In \autoref{fig:hiring-PI}, the injected CV span carries the dominant flow into the agent output, with an estimated 82 bits. By contrast, the trusted system prompt contributes 27 bits, and the remaining neutral context contributes 16 bits. The resulting dependency pattern is the signature of the attack: the recommendation is driven more by the applicant-controlled instruction than by the trusted hiring criteria. A policy layer can then interpret this measured dependency as a violation, because untrusted input should not control a privileged hiring recommendation.

\autoref{fig:hiring-PI}, bottom right, illustrates the geometric intuition behind GIF. For a span of the input, GIF asks how small perturbations of that span's embedding move the model's output distribution. If perturbing a span leaves the output distribution nearly unchanged, then the output reveals little about that span and the span carries little flow. If perturbing the span strongly changes the output distribution, then the span is observable through the output and carries high flow. In the example, perturbing the injected salary phrase ``\$200K'' produces a large, anisotropic movement of the output distribution, visualized by the red leakage ellipsoid. Perturbing a trusted criterion-related word such as ``experience'' produces a smaller and more diffuse movement, visualized by the blue ellipsoid. The long axes of these ellipsoids indicate directions in embedding space to which the output is especially sensitive, while collapsed axes correspond to directions that have little effect on the output distribution.
\looseness=-1

More specifically, \S\ref{sec:theory} formalizes this intuition locally in embedding space. Rather than enumerating discrete prompt rewrites, GIF perturbs the embedding of a source span and measures how much the model's next-token distribution moves. The ideal flow quantity is the mutual information between the span perturbation and the observed output token. Since computing this quantity exactly is intractable for large LLMs, GIF formally derives a locally faithful Gaussian surrogate whose capacity provides a computable information-flow measure. The bit values in \autoref{fig:hiring-PI} are local quantitative flow estimates: they measure how observable each input region is through the agent's final recommendation.

\section{Theory}
\label{sec:theory}
This section develops an operational measure of GIF and provides the corresponding formal analysis.
We first introduce the notation used throughout the section, together with the information-theoretic preliminaries needed for the subsequent development  (\S\ref{subsec:notation}).
We then formulate the problem by stating the locality assumption, defining GIF as Shannon mutual information, and specifying the local soundness property that GIF should satisfy (\S\ref{subsec:problem-formulation}).
Building on this formulation, we derive a practical measure of GIF that can be computed in practice (\S\ref{subsec:gif-measurement}).
We finally prove that this measure guarantees the stated local soundness property (\S\ref{subsec:soundness-proof}).
\looseness=-1

We note that \textit{all mathematical claims in this section have been formalized and machine-checked in Lean 4}; the full Lean~4 development is publicly available~\cite{gif-website}.
We give the complete proof of every result inline, and immediately below each statement we attach a \emph{Lean 4 badge} naming the single entry-point function in our development that certifies it.
A badge thus certifies the \textit{mathematical} claim under the stated hypotheses; it is not a verification of any deployed model.

\subsection{Notation and Preliminaries}
\label{subsec:notation}

\subsubsection{LLM and perturbation setup}

At a high level, the LLM maps a prompt to a next-token distribution through
$$
\begin{aligned}
\text{tokens}
&\xrightarrow{\text{embedding lookup}}
\text{embeddings}
\xrightarrow{h_\theta}
\text{hidden state} \\
&\xrightarrow{W,b}
\text{logits}
\xrightarrow{\operatorname{softmax}}
\text{next-token distribution}.
\end{aligned}
$$
An LLM first converts the input tokens into continuous embedding vectors.
These embeddings are then processed by the model, represented by $h_\theta$, to produce a hidden state at the prediction position.
The output head, given by $W$ and $b$, maps this hidden state to a vector of logits, where each logit scores a possible next token.
Finally, the softmax function converts these logits into a probability distribution over the vocabulary.
In this work, we study the local behavior of an LLM around a fixed prompt.
Our goal is to quantify how much an input span (e.g., the injected instruction in the agent prompt in \autoref{fig:hiring-PI}), can move the model's next-token distribution when the embeddings of that span are perturbed.
\looseness=-1

\paragraph{The prompt and the perturbed span}
Let $V$ be the vocabulary size, $n$ the prompt length, $d$ the token-embedding
dimension, and $m$ the hidden-state dimension at a selected output position
$t^\star$.
We define the input $x$ to the model as the tuple of token embeddings corresponding to the discrete token prompt,
$$
  x = (x_1,\dots,x_n) \in (\R^d)^n .
$$
We single out a token span $T \subseteq \{1,\dots,n\}$, write $d_T := |T|\,d$, and
let $x_T \in \R^{d_T}$ be the concatenation of the span's embeddings.
Perturbing the span means replacing $x_T$ by some $\xi \in \R^{d_T}$ while freezing every off-span coordinate.
We thereby denote the resulting prompt by
$\operatorname{perturb}_T(x,\xi)$.
In particular, perturbing the prompt with its original span recovers the base prompt: $\operatorname{perturb}_T(x, x_T) = x$.

\paragraph{The next-token map and its Jacobian}
Let $h_\theta : (\R^d)^n \to \R^m$ be the fixed-parameter map from prompt
embeddings to the hidden state at $t^\star$. Restricting it to the span gives
$$
  h_T(\xi) := h_\theta\bigl(\operatorname{perturb}_T(x,\xi)\bigr),
  \enspace\text{so that}\enspace h_T(x_T) = h_\theta(x).
$$
The output head is affine, with matrix $W \in \R^{V \times m}$ and bias
$b \in \R^V$.
The softmax sends logits to the probability simplex
$\Delta^{V-1} := \{q \in \R^V : q_i \ge 0,\ \sum_i q_i = 1\}$.
Thus, as a function of the perturbed span, the next-token distribution is \looseness=-1
$$
  \pi_T(\xi) := \softmax\bigl(W h_T(\xi) + b\bigr) \in \Delta^{V-1} .
$$
All entries of $\pi_T(\xi)$ are strictly positive and sum to one. We use $p(x) := \pi_T(x_T)$ to denote the next-token distribution at the base prompt.
To study the local behavior of the LLM around this prompt, we use the Jacobian of the span-restricted hidden-state map,
$$J_T(x) := D h_T(x_T) \in \R^{m \times d_T} ,$$
where $d_T$ is the total embedding dimension of the span.
Intuitively, for a small span perturbation $v$, the product $J_T(x)v$ gives the first-order displacement of the hidden state at $t^\star$.
This displacement is then propagated through the affine output head and the softmax, determining the corresponding first-order change in the next-token distribution.

\subsubsection{Information-theoretic preliminaries}

We next introduce the information-theoretic concepts used in our analysis.
Throughout, $\mathcal{L}_X$ denotes the probability distribution of a random variable $X$, and $\mathcal{N}(\mu,\Sigma)$ denotes the Gaussian law with mean $\mu$ and covariance $\Sigma$.

\paragraph{Kullback-Leibler (KL) divergence}
KL divergence measures how distinguishable one distribution is from another.
For probability distributions $P$ and $Q$ on a common measurable space, it is defined as the expected log-likelihood ratio under $P$:
$$
\KL(P \Vert Q) := \E_P\!\left[\log \frac{\mathrm dP}{\mathrm dQ}\right] ,
$$
with the convention that $\KL(P\Vert Q)=+\infty$ when the likelihood ratio is not well-defined under $P$.
Here, the Radon--Nikodym derivative $\mathrm dP/\mathrm dQ$ gives the likelihood ratio between the two distributions.
For discrete distributions, this becomes
$$
\KL(P \Vert Q) = \sum_i p_i \log \frac{p_i}{q_i} ,
$$
with the standard conventions that $0\log(0/q_i)=0$ and $p_i\log(p_i/0)=+\infty$.
KL divergence is nonnegative and vanishes exactly when the two distributions are equal.

\paragraph{Fisher information}
For a parametric distribution $P_\eta$ with parameter $\eta$, Fisher information measures how sensitively the distribution changes under infinitesimal perturbations of $\eta$.
Concretely, let $Y\sim P_\eta$ denote an outcome sampled from the current distribution, and $P_\eta(Y)$ be the probability of that outcome.
The \emph{score} $\nabla_\eta \log P_\eta(Y)$ measures how the log-probability assigned to the sampled outcome changes when the parameter $\eta$ is perturbed.
The Fisher information matrix is the expected outer product of this score:
$$
  F(\eta) := \E_{Y\sim P_\eta} \left[ \nabla_\eta \log P_\eta(Y)\, \nabla_\eta \log P_\eta(Y)^\top \right] .
$$
Thus, Fisher information captures the average squared local sensitivity of the distribution to its parameter.
Equivalently, it gives the local second-order geometry of KL divergence~\cite{amari2000methods}.
That is, under standard smoothness and fixed-support regularity conditions, for a small perturbation $\delta$, \looseness=-1
\begin{equation}
  \label{eq:fisher-kl}
  \KL(P_\eta \Vert P_{\eta+\delta}) = \frac{1}{2}\delta^\top F(\eta)\delta + o(\|\delta\|^2) .
\end{equation}
In this work, Fisher information appears through the softmax family.
Let $z\in\R^V$ be the logits, viewed as the parameter of the softmax distribution, and let $p=\softmax(z)$ be the resulting next-token distribution.
If the sampled token is $Y=i$, then the score with respect to the logits is $e_i-p$, where $e_i$ is the one-hot vector for token $i$.
Therefore, the Fisher information with respect to the logits is
\begin{equation}
\label{eq:softmax-fisher}
\begin{aligned}
  F_{\mathrm{sm}}(p) &:= \E_{Y\sim p}\bigl[(e_Y-p)(e_Y-p)^\top\bigr] \\ &= \sum_{i=1}^V p_i (e_i-p)(e_i-p)^\top = \diag(p)-pp^\top .
\end{aligned}
\end{equation}

\paragraph{Mutual information}
Mutual information measures how much one random variable reveals about another.
Equivalently, it measures how far their joint distribution is from the distribution they would have if they were independent.
For a jointly distributed pair $(U,Y)$, mutual information is
\begin{equation}
I(U;Y):=\KL\bigl(\mathcal{L}_{(U,Y)}\,\big\Vert\,\mathcal{L}_U \otimes \mathcal{L}_Y\bigr).
\end{equation}
Here $\mathcal{L}_{(U,Y)}$ is the joint law of $(U,Y)$, while $\mathcal{L}_U \otimes \mathcal{L}_Y$ is the product of the two marginal laws.
The product law represents the hypothetical distribution in which $U$ and $Y$ have the same individual marginals but are independent.
Therefore, the KL divergence above measures how distinguishable the actual joint behavior is from independent behavior.
Mutual information is nonnegative and vanishes exactly when $U$ and $Y$ are independent.

\paragraph{Linear additive Gaussian noise channel}
A linear Gaussian noise channel is a stochastic channel whose output is a linear transformation of the input plus independent Gaussian noise.
In this work, we use the covariance scale $\tau>0$ directly.
Let $U \sim \mathcal{N}(0,\tau\Id_r)$ be an input in $\R^r$, where $\Id_r$ denotes the $r\times r$ identity matrix.
For a matrix $A\in\R^{s\times r}$ and independent noise $N \sim \mathcal{N}(0,\Id_s)$, the channel output is
$$
  Y = AU + N \in \R^s .
$$
Linear Gaussian noise channels are widely used because they are mathematically tractable and often provide a reasonable approximation to aggregate uncertainty arising from many small independent perturbations.
The mutual information of this channel has the closed form
$$
  I(U;Y)
  =
  \frac12
  \log\det\!\bigl(\Id_s+\tau A A^\top\bigr) ,
$$
where $\det$ denotes the determinant of a square matrix.
Thus, the information transmitted by the channel is controlled by the spectrum of $A^\top\!A$, i.e., by the singular values of $A$.

\subsection{Problem Formulation}
\label{subsec:problem-formulation}

We now formulate the local information-flow problem studied in this paper.
The key point is that our analysis is \emph{local}: we do not ask how the model behaves under arbitrary changes to the prompt.
Instead, we fix a base prompt $x$, a span $T$, and an output position $t^\star$, and ask \textit{how much information about a small perturbation of the span can be observed through the next-token output}.

\subsubsection{The locality assumption}
The model map $h_T$ can be highly nonlinear globally.
However, when the span is perturbed only slightly around its original embedding value $x_T$, its first-order behavior is governed by the Jacobian $J_T(x)$ introduced above.
The following assumption makes this local viewpoint explicit.
\begin{assumption}[Local regularity]
\label{assump:local-regularity}
Fix a base prompt $x$ and a token span $T \subseteq \{1,\dots,n\}$.
There exist constants $r_T(x)>0$ and $K_T(x)\ge 0$ such that, for every
span perturbation $\delta\in\R^{d_T}$ with $\|\delta\|_2\le r_T(x)$,
\[
  \bigl\| h_T(x_T + \delta) - h_T(x_T) - J_T(x)\,\delta \bigr\|_2
  \;\le\; K_T(x)\,\|\delta\|_2^2 .
\]
\end{assumption}
\leandef{GIF.LocalQuadraticControl}
\begin{remark}
Assumption~\ref{assump:local-regularity} can be viewed as a local regularity condition derived from a Taylor expansion.
It states that, in a neighborhood of the fixed prompt, the hidden-state displacement is accurately captured by the linear Jacobian term $J_T(x)\,\delta$, while the residual error scales quadratically with the perturbation size.
This is a standard and practical assumption in related works~\cite{koh2017understanding, simonyan2014deepinsideconvolutionalnetworks, ribeiro2016should, martens2020new, goodfellow2014explaining}.
\end{remark}

\subsubsection{GIF as mutual information}
We next define the information-flow quantity that the rest of the paper aims to control.
Fix a perturbation scale $\tau>0$, and let  $ U_T \sim \mathcal N(0,\,\tau\Id_{d_T}) $ be a random perturbation of the span coordinates.
For a realized perturbation value $u$ from $U_T$, the perturbed span is $x_T+u$, and the model produces the next-token distribution $\pi_T(x_T+u)\in\Delta^{V-1}$.
The next token $Y$ is then sampled from this distribution,
$$ \Pr\!\left(Y=i \mid U_T=u\right) = \pi_T(x_T+u)_i, \enspace i=1,\dots,V . $$
Thus, $U_T$ is the hidden source of variation, while $Y$ is the observable next-token output.
Following the classical information-theoretic line in quantitative information-flow control~\cite{backes2009automatic, chothia2011statistical, chatzikokolakis2010statistical, smith2009foundations, clark2007static}, we define the geometric information flow as the mutual information between the hidden perturbation and the observed output.
\begin{definition}[Geometric information flow]
\label{def:geom-information-flow}
For a base prompt $x$, span $T$, and perturbation scale $\tau>0$, the \emph{geometric information flow} from the span perturbation to the next-token output is the mutual information
$$ I\!\left(U_T;Y\right) := \KL\!\left( \mathcal{L}_{(U_T,Y)} \,\middle\Vert\, \mathcal{L}_{U_T} \otimes \mathcal{L}_{Y} \right), $$
where the joint law is defined by
$ U_T \sim \mathcal N(0,\,\tau\Id_{d_T})$ and $\Pr\!\left(Y=i \mid U_T=u\right) = \pi_T(x_T+u)_i$.
\end{definition}
\begin{remark}
Definition~\ref{def:geom-information-flow} is the quantity we ultimately want to report.
It is zero exactly when the sampled output token is independent of the span perturbation.
Conversely, it increases as perturbing the span becomes more capable of steering the next-token distribution.
\end{remark}

\subsubsection{GIF: the exact form}

The mutual information of Definition~\ref{def:geom-information-flow} admits an exact, equivalent
expression in terms of the perturbed and averaged output distributions.

\begin{theorem}[Exact form of geometric information flow]
\label{thm:exact-form}
Let
$$P_{Y} := \E_{U_T}\bigl[\pi_T(x_T + U_T)\bigr] \in \Delta^{V-1}$$
denote the output marginal, i.e.\ $P_{Y}(i) = \E_{U_T}[\pi_T(x_T + U_T)_i]$.
Then the information flow of Definition~\ref{def:geom-information-flow} equals the averaged divergence of the perturbed output
from its mean, \looseness=-1
\[
  I(U_T; Y)
  = \E_{U_T}\!\left[ \KL\!\bigl( \pi_T(x_T + U_T) \,\big\Vert\, P_{Y} \bigr) \right] ,
\]
and, equivalently, the difference of Shannon entropies
\[
  I(U_T; Y) = H(P_{Y}) - \E_{U_T}\!\left[ H\!\bigl( \pi_T(x_T + U_T) \bigr) \right] ,
\]
where $H(q) := -\sum_{y} q_y \log q_y$ is the Shannon entropy of a distribution
$q \in \Delta^{V-1}$.
\end{theorem}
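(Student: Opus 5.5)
The plan is to compute the KL divergence in Definition~\ref{def:geom-information-flow} directly, using the fact that the joint law has an explicit density with respect to a convenient reference measure. The reference measure is $\mathcal{L}_{U_T}\otimes\#$, where $\#$ is counting measure on $\{1,\dots,V\}$.

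\textbf{Densities.} The joint law $\mathcal{L}_{(U_T,Y)}$ has density $(u,i)\mapsto \pi_T(x_T+u)_i$ with respect to this reference measure. Next, $P_Y(i)=\E_{U_T}[\pi_T(x_T+U_T)_i]$ is well defined because each coordinate is measurable and bounded in $[0,1]$, and $h_T$ is continuous by Assumption~\ref{assump:local-regularity}, or simply because $h_\theta$ is a network. Hence $Y$ has marginal $P_Y$, and the product law $\mathcal{L}_{U_T}\otimes\mathcal{L}_Y$ has density $(u,i)\mapsto P_Y(i)$. Since every $\pi_T(\xi)_i>0$, we also get $P_Y(i)>0$ for all $i$. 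So both measures are mutually absolutely continuous, and the likelihood ratio is
\[
\frac{\pi_T(x_T+u)_i}{P_Y(i)}.
\]

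\textbf{First identity.} Substitute this ratio into $\KL(P\Vert Q)=\E_P[\log \mathrm dP/\mathrm dQ]$. This gives
\[
\E_{U_T}\Bigl[\sum_i \pi_T(x_T+U_T)_i\log\frac{\pi_T(x_T+U_T)_i}{P_Y(i)}\Bigr].
\]
The step that needs care is splitting the expectation over $(u,i)$ into an outer expectation over $U_T$ and an inner finite sum. I would do this by disintegration, or by Tonelli/Fubini on the product with counting measure. That requires integrability of the integrand. The inner sum, for fixed $u$, is exactly $\KL(\pi_T(x_T+u)\Vert P_Y)$, which gives the first formula.

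\textbf{Integrability.} I expect this to be the main technical obstacle, especially in Lean. It is handled by a uniform bound. On the finite alphabet, $0\le \KL(q\Vert P_Y)\le -\log\min_i P_Y(i)<\infty$, which is bounded in $u$. For the split form, $|q_i\log q_i|\le 1/e$ and $|q_i\log P_Y(i)|\le |\log P_Y(i)|$. So every term is bounded and measurable, and Fubini applies.

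\textbf{Entropy form.} Expand the logarithm as $\log q_i-\log P_Y(i)$, with $q=\pi_T(x_T+U_T)$. By linearity, which is justified by the bounds above, we get
\[
-\E[H(q)]-\sum_i \E[q_i]\log P_Y(i)=-\E[H(q)]+H(P_Y).
\]
The exchange of expectation with the finite sum gives exactly $\E[q_i]=P_Y(i)$. This yields the second formula.
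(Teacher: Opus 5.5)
Your proposal is correct and follows the same route as the paper: identify the Radon--Nikodym density $\pi_T(x_T+u)_y/P_Y(y)$ of the joint law with respect to the product of marginals, disintegrate over $U_T$ to obtain the averaged KL, then expand the logarithm and use $\E_{U_T}[\pi_T(x_T+U_T)_y]=P_Y(y)$ for the entropy form; your boundedness-based integrability check is a useful detail that the paper leaves implicit. One small correction: Assumption~\ref{assump:local-regularity} only controls $h_T$ near $x_T$ and does not give continuity of $h_T$ elsewhere, so measurability of $u\mapsto\pi_T(x_T+u)$ should come from the model being a network, or be taken as implicit in Definition~\ref{def:geom-information-flow}, which already requires a measurable kernel.
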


\begin{proof}
Both identities follow by disintegrating the joint law of $(U_T,Y)$ over $U_T$; throughout this proof we write $\pi_U:=\pi_T(x_T+U_T)$ for brevity.
By Definition~\ref{def:geom-information-flow}, the joint law $\mathcal{L}_{(U_T,Y)}$ has, with respect to the product $\mathcal{L}_{U_T}\otimes\mathcal{L}_{Y}$, the Radon--Nikodym density $\pi_T(x_T+u)_y/P_Y(y)$,
since $\Pr(Y=y\mid U_T=u)=\pi_T(x_T+u)_y$ and the $Y$-marginal is exactly $P_Y(y)=\E_{U_T}[(\pi_U)_y]$.
Integrating the logarithm of this density against the joint law and conditioning on $U_T$,
\[
\begin{aligned}
  I(U_T;Y)
  &= \E_{(U_T,Y)}\!\left[\log\frac{(\pi_U)_Y}{P_Y(Y)}\right] \\
  &= \E_{U_T}\!\left[\sum_y (\pi_U)_y\log\frac{(\pi_U)_y}{P_Y(y)}\right] \\
  &= \E_{U_T}\!\left[\KL\!\bigl(\pi_U\,\Vert\,P_Y\bigr)\right] ,
\end{aligned}
\]
which is the averaged-divergence form. For the entropy form, expand each summand,
\[
  \KL\!\bigl(\pi_U\,\Vert\,P_Y\bigr)
  = -H(\pi_U) - \sum_y (\pi_U)_y\log P_Y(y) ,
\]
take $\E_{U_T}$, and use $\E_{U_T}[(\pi_U)_y]=P_Y(y)$: the second term collapses to $-\sum_y P_Y(y)\log P_Y(y)=H(P_Y)$, yielding
$I(U_T;Y)=H(P_Y)-\E_{U_T}[H(\pi_U)]$.
\end{proof}
\leancheckII{GIF.Softmax.softmax\_MIcanon\_eq\_trueChannelMI}{GIF.Softmax.softmax\_trueChannelMI\_eq\_entropy\_sub}

\begin{remark}
Although Theorem~\ref{thm:exact-form} provides an exact characterization, it does not yet give an operational measure.
Its central object is the output marginal $P_Y$, the average next-token distribution induced by perturbing the span.
In general, $P_Y$ is not available in closed form, since computing it requires integrating the model's nonlinear output distribution over the full perturbation law.
This gap leads us to use the theorem as an exact target identity, and then derive from it a tractable local measure that can be evaluated directly.
\end{remark}

\subsubsection{The operational measure and its soundness}

We therefore seek an \emph{operational} closed-form measure,
$$
\mathrm{GIF}_T(\tau; x).
$$
For this measure to be trustworthy, it must be \emph{sound}.
In other words, whenever $\mathrm{GIF}_T(\tau; x)$ certifies an upper bound on the amount of information that can flow, the true geometric information flow $I(U_T;Y)$ (Definition~\ref{def:geom-information-flow}) must not exceed that bound.
Otherwise, the certificate could give false assurance.
Since our analysis is local, we require this conservative guarantee to hold to leading order in the perturbation scale $\tau$, under the local-regularity condition in Assumption~\ref{assump:local-regularity}.

\begin{paperbox}{Local Soundness}
$\mathrm{GIF}_T(\tau; x)$ is \emph{locally sound} for the geometric information flow if it upper-bounds that flow up to vanishing higher-order corrections in the perturbation scale,
\begin{equation}
\label{eq:soundness}
  I(U_T; Y) \;\le\; \mathrm{GIF}_T(\tau; x) + o(\tau)
  \qquad (\tau \to 0^+) .
\end{equation}
\end{paperbox}
\smallskip
\noindent
In words, Equation~\eqref{eq:soundness} says that a locally sound measure never asymptotically undercounts leakage, i.e.,
\textit{any flow that can be realized by perturbing the span is certified by the measure, up to terms that vanish faster than the scale itself}.

\subsection{Operational Measurement of GIF}
\label{subsec:gif-measurement}

The previous discussion defined the target quantity as a Shannon mutual information for the true autoregressive channel.
That definition is conceptually clean, but it is not yet operational (\autoref{thm:exact-form}).
We therefore do not try to compute $I(U_T;Y)$ itself.
Instead we replace the true autoregressive channel by a \emph{surrogate} channel that is (i) solvable in closed form and (ii) provably faithful to the true channel near the base prompt, and we take the information transmitted by the surrogate as our operational measure.

\paragraph{Our Approach}
For small span perturbations, the object we need to understand is only the local KL geometry it induces around the base prompt.
This geometry captures, to second order, how strongly a small change in the span can move the model's next-token distribution.
We therefore replace the true channel with a linear Gaussian surrogate whose per-perturbation KL divergence matches that of the true autoregressive channel, and prove that this surrogate is faithful to the true channel in exactly the regime our analysis is meant to capture.
We then measure the capacity of the surrogate.
Intuitively, capacity summarizes how distinguishable the channel outputs can become as the input perturbation varies, which is precisely the notion of leakage that GIF is designed to quantify.
Finally, we show that this surrogate capacity is a genuine mutual information of the constructed channel.
\looseness=-1

\subsubsection{The local KL geometry of the span perturbation}
\label{subsubsec:span-geometry}

We first make precise the KL geometry that a perturbation induces at the output,
and pull it back to the input.

\smallskip
Throughout the analysis below we abbreviate $F:=F_{\mathrm{sm}}(p(x))$, $J:=J_T(x)$, $M:=M_T(x)=J^\top W^\top F W J$, and $z_0:=Wh_T(x_T)+b$, so that $p(x)=\softmax(z_0)$.
We write $\|\cdot\|_{\op}$ for the operator (spectral) norm and $\|\cdot\|=\|\cdot\|_2$ for the Euclidean norm.
We begin by recording two elementary facts about the softmax-Fisher form that are used repeatedly in the proofs of this section: a global cubic remainder for the softmax KL in logit space, and boundedness of the Fisher bilinear form.

\begin{lemma}[Cubic softmax-KL remainder]
\label{lem:softmax-cubic}
For every $\Delta\in\R^V$,
$$
\begin{aligned}
  \Bigl|\,\KL\!\bigl(\softmax(z_0+\Delta)\,\Vert\,\softmax(z_0)\bigr)
    -&\tfrac12\,\Delta^\top F\,\Delta\,\Bigr| \\
  &\le\;\tfrac23\,\|\Delta\|^3 .
\end{aligned}
$$
\end{lemma}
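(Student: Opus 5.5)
Set $p=\softmax(z_0)$ and $q_s=\softmax(z_0+s\Delta)$ for $s\in[0,1]$, and study the scalar function
$$
g(s):=\KL(q_s\Vert p)=\sum_i (q_s)_i\log\frac{(q_s)_i}{p_i}.
$$
Because $\log q_s=z_0+s\Delta-\log Z(z_0+s\Delta)$, the log-ratio equals $s\Delta-(A(z_0+s\Delta)-A(z_0))$, where $A=\log\sum_j e^{z_j}$ is the log-partition function. This gives the Bregman form
$$
g(s)=A(z_0)-A(z_0+s\Delta)+s\,\Delta^\top\nabla A(z_0+s\Delta).
$$
The first step is to compute its derivatives. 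They are $g(0)=0$ and $g'(s)=s\,\Delta^\top\nabla^2A(z_0+s\Delta)\Delta$, so $g'(0)=0$. Since $\nabla^2A(z)=\diag(\softmax z)-\softmax z\,\softmax z^\top=F_{\mathrm{sm}}(\softmax z)$ by \eqref{eq:softmax-fisher}, we get $g''(0)=\Delta^\top F\Delta$.

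Next, write $\phi(s):=\Delta^\top F_{\mathrm{sm}}(q_s)\Delta$, which is the variance of the random variable $\Delta_Y$ under $Y\sim q_s$. Then $g'(s)=s\phi(s)$, and integrating gives the exact identity
$$
g(1)-\tfrac12\phi(0)=\int_0^1 s\bigl(\phi(s)-\phi(0)\bigr)\,ds .
$$
It therefore suffices to bound $|\phi(s)-\phi(0)|\le C s\|\Delta\|^3$, since $\int_0^1 s^2\,ds=\tfrac13$ then yields $\tfrac{C}{3}\|\Delta\|^3$. To reach the target $\tfrac23$ we need $C=2$.

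The key computation is $\phi'(s)$. This is the derivative of a variance along an exponential tilt, which equals the third central moment $\E_{q_s}[(\Delta_Y-\bar\Delta)^3]$, where $\bar\Delta=\E_{q_s}\Delta_Y$. This is the step I expect to be the main obstacle, because the bound must be global and independent of $V$ and $p$. My first attempt is to use $|\Delta_Y-\bar\Delta|\le\max_i\Delta_i-\min_i\Delta_i\le 2\|\Delta\|_\infty\le2\|\Delta\|_2$, which gives $|\E[(\Delta_Y-\bar\Delta)^3]|\le 2\|\Delta\|\,\Var_{q_s}(\Delta_Y)$. I would then bound the variance by $\E_{q_s}\Delta_Y^2\le\|\Delta\|_\infty^2\le\|\Delta\|^2$, so $|\phi'|\le2\|\Delta\|^3$ and hence $C=2$ by the mean value theorem. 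A sharper route uses range$^3/4$-type bounds on the third central moment. The stated constant $\tfrac23$ follows with this simple chain, so I would not pursue sharper estimates.

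Assembling the pieces:
$$
\bigl|g(1)-\tfrac12\Delta^\top F\Delta\bigr|\le\int_0^1 s\cdot 2s\|\Delta\|^3\,ds=\tfrac23\|\Delta\|^3 .
$$
The remaining details are routine: smoothness of $A$, so that differentiation under the finite sum is justified, and the identity $F=F_{\mathrm{sm}}(p)$ from the paper's abbreviations.
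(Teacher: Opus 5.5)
Your proposal is correct and follows essentially the same route as the paper. Both arguments write $g$ in log-partition (cumulant) form, use $g'(s)=s\,\Phi''(s)$ and the integral identity $g(1)-\tfrac12\Phi''(0)=\int_0^1 s\bigl(\Phi''(s)-\Phi''(0)\bigr)\,ds$, and bound $|\Phi'''|\le 2\|\Delta\|^3$ through the third central moment and the same variance estimate, which gives the constant $\tfrac23$. The only cosmetic difference is that you bound $|\Delta_Y-\bar\Delta|$ by the range $\max_i\Delta_i-\min_i\Delta_i$, whereas the paper uses $|\Delta_i|+|\mu_t|$; both give the same factor $2\|\Delta\|$.
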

\begin{proof}
Fix $\Delta\in\R^V$ and, for $t\in\R$, consider the cumulant generating function
$$
\begin{aligned}
  \Phi(t)
  &:=\log\sum_i e^{(z_0+t\Delta)_i} \\
  &\phantom{:}=\log\Bigl(\sum_i e^{(z_0)_i}\Bigr)+\log\sum_i p_i\,e^{t\Delta_i} ,
\end{aligned}
$$
where
$$p:=p(x)=\softmax(z_0) , $$
so that $p_t:=\softmax(z_0+t\Delta)$ has entries
$$(p_t)_i=p_i e^{t\Delta_i}/\sum_j p_j e^{t\Delta_j} ,$$
with $p_0=p$ and $p_1=\softmax(z_0+\Delta)$.

\smallskip
\noindent
\textit{\underline{Step 1 (the KL along the segment).}}
Using the closed form of the discrete KL between two softmax outputs and
$\sum_i(p_t)_i=1$, the map $g(t):=\KL\!\bigl(p_t\,\Vert\,p_0\bigr)$
reduces to
\begin{equation}
  \label{eq:app-g-closed}
  g(t)=t\,\Phi'(t)-\Phi(t)+\Phi(0),
\end{equation}
since $\Phi'(t)=\sum_i(p_t)_i\Delta_i=\E_{p_t}[\Delta]$ is the mean displacement
and $\Phi(t)-\Phi(0)$ is the log-ratio of normalizers. In particular $g(0)=0$.

\smallskip
\noindent
\textit{\underline{Step 2 (derivatives are moments).}}
Writing $\mu_t:=\E_{p_t}[\Delta]=\Phi'(t)$, the standard cumulant identities give
\[
\begin{aligned}
  \Phi''(t)&=\Var_{p_t}(\Delta)=\E_{p_t}\bigl[(\Delta-\mu_t)^2\bigr],\\
  \Phi'''(t)&=\E_{p_t}\bigl[(\Delta-\mu_t)^3\bigr].
\end{aligned}
\]
Differentiating Equation~\eqref{eq:app-g-closed},
\begin{equation}
\label{eq:app-gprime}
  g'(t)=\Phi'(t)+t\,\Phi''(t)-\Phi'(t)=t\,\Phi''(t),
\end{equation}
and at $t=0$,
\[
\begin{aligned}
  \Phi''(0)
  &=\sum_i p_i\Delta_i^2-\Bigl(\sum_i p_i\Delta_i\Bigr)^2 \\
  &=\Delta^\top(\diag(p)-pp^\top)\Delta=\Delta^\top F\,\Delta .
\end{aligned}
\]

\smallskip
\noindent
\textit{\underline{Step 3 (uniform Lipschitz bound on the curvature).}}
Each $p_t$ is a probability vector and $|\Delta_i|\le\|\Delta\|_\infty\le\|\Delta\|$,
so $|\mu_t|\le\|\Delta\|$ and $|\Delta_i-\mu_t|\le 2\|\Delta\|$. Hence
$$
\begin{aligned}
  |\Phi'''(t)|
  &=\Bigl|\sum_i(p_t)_i(\Delta_i-\mu_t)^3\Bigr| \\
  &\le 2\|\Delta\|\sum_i(p_t)_i(\Delta_i-\mu_t)^2 \\
  &\le 2\|\Delta\|\cdot\|\Delta\|^2
   =2\|\Delta\|^3,
\end{aligned}
$$
using $\Var_{p_t}(\Delta)\le\E_{p_t}[\Delta^2]\le\|\Delta\|^2$. Since $\Phi'''$ is
the derivative of $\Phi''$, the mean value inequality gives
$|\Phi''(t)-\Phi''(0)|\le 2\|\Delta\|^3\,|t|$ for all $t$.

\smallskip
\noindent
\textit{\underline{Step 4 (integrate the remainder).}}
By~\eqref{eq:app-gprime}, $g(0)=0$, and $\int_0^1 t\,dt=\tfrac12$,
$$
  g(1)-\tfrac12\Phi''(0)
  =\int_0^1 t\,\bigl(\Phi''(t)-\Phi''(0)\bigr)\,dt,
$$
so, bounding the integrand by Step 3,
\[
\begin{aligned}
  \bigl|g(1)-\tfrac12\Phi''(0)\bigr|
  &\le\int_0^1 t\cdot 2\|\Delta\|^3\,t\,dt \\
  &=2\|\Delta\|^3\int_0^1 t^2\,dt
   =\tfrac23\|\Delta\|^3 .
\end{aligned}
\]
Finally $g(1)=\KL(\softmax(z_0+\Delta)\Vert\softmax(z_0))$ and
$\Phi''(0)=\Delta^\top F\Delta$, which is the claim. The bound is global: it holds
for every $\Delta\in\R^V$ with the explicit constant $\tfrac23$.
\end{proof}
\leancheck{GIF.SoftmaxRemainder.softmaxKL\_quadratic\_remainder\_global\_cubic\_bound}

\begin{lemma}[Boundedness of the Fisher bilinear form]
\label{lem:fisher-bilinear}
For any probability vector $q\in\R^V$ and any $w,w'\in\R^V$,
\[
  \bigl|\,w^\top F_{\mathrm{sm}}(q)\,w'\,\bigr|\;\le\;2\,\|w\|\,\|w'\| .
\]
\end{lemma}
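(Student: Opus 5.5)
The plan is to expand the bilinear form explicitly using the closed form $F_{\mathrm{sm}}(q)=\diag(q)-qq^\top$ from Equation~\eqref{eq:softmax-fisher}. This gives
\[
  w^\top F_{\mathrm{sm}}(q)\,w' \;=\; \sum_i q_i w_i w'_i \;-\; \Bigl(\sum_i q_i w_i\Bigr)\Bigl(\sum_i q_i w'_i\Bigr).
\]
We then bound the two pieces separately by the triangle inequality. Only two properties of $q$ are used, namely $q_i\ge 0$ and $\sum_i q_i=1$, so the argument works for every probability vector and not only for softmax outputs.

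For the first term, $|w_iw'_i|\le \|w\|_\infty\|w'\|_\infty\le\|w\|\,\|w'\|$, so $\bigl|\sum_i q_i w_iw'_i\bigr|\le \|w\|\,\|w'\|\sum_i q_i=\|w\|\,\|w'\|$. For the second term, each mean satisfies $|\sum_i q_iw_i|\le \|w\|_\infty\le\|w\|$, and likewise for $w'$, so the product is at most $\|w\|\,\|w'\|$. Adding the two bounds gives the constant $2$. This mirrors the crude moment bounds in Step~3 of Lemma~\ref{lem:softmax-cubic}: components are bounded by the sup norm, and sup norms are bounded by Euclidean norms.

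Nothing here is a real obstacle; the only care needed is bookkeeping the two coordinate-wise inequalities. A sharper constant $1$ is available by viewing the form as a covariance $\mathrm{Cov}_q(w,w')$ and applying Cauchy--Schwarz with $\Var_q(w)\le\E_q[w^2]\le\|w\|^2$, but the constant $2$ is all that is claimed, so I would use the simpler triangle-inequality route.
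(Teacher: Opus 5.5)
Your proof is correct and matches the paper's argument: expand $F_{\mathrm{sm}}(q)=\diag(q)-qq^\top$, bound $\bigl|\sum_i q_i w_i w'_i\bigr|$ and the product of the two $q$-means each by $\|w\|\,\|w'\|$ via sup-norm $\le$ Euclidean norm, and add. Your aside that Cauchy--Schwarz on the covariance gives the sharper constant $1$ is also correct, but the paper only uses the constant $2$.
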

\begin{proof}
Writing the bilinear form as a centered covariance under $q$,
$$
\begin{aligned}
  w^\top F_{\mathrm{sm}}(q)\,w'
  &=\sum_i q_i w_i w'_i \\
  &\quad-\Bigl(\sum_i q_i w_i\Bigr)\Bigl(\sum_i q_i w'_i\Bigr).
\end{aligned}
$$
For the first term,
$$
\Bigl|\sum_i q_i w_i w'_i\Bigr|\le\|w\|_\infty\|w'\|_\infty\le\|w\|\|w'\| ,
$$
since $\sum_i q_i=1$. For the second, $|\sum_i q_i w_i|\le\|w\|$ and
$|\sum_i q_i w'_i|\le\|w'\|$, so the product is at most $\|w\|\|w'\|$. The triangle
inequality gives the bound.
\end{proof}
\leancheck{GIF.softmaxFisher\_bilinear\_abs\_le}

\paragraph{Output geometry}
The output of the autoregressive channel is a probability distribution, so the
appropriate local metric is the softmax-Fisher form $F_{\mathrm{sm}}(p(x))$.
For a small logit perturbation $\delta z\in\R^V$, the quadratic form,
\begin{equation}
  \label{eq:output-geometry}
  \|\delta z\|_{F_{\mathrm{sm}}(p(x))}^2
  := \delta z^\top F_{\mathrm{sm}}(p(x))\,\delta z
\end{equation}
measures the local size of the induced movement of the next-token distribution in
the Fisher-Rao metric~\cite{amari2000methods, cover1999elements}.
Equivalently, to second order it is twice the KL between the base and perturbed outputs.

\paragraph{Input geometry via pullback}
Our analysis is local around the base prompt $x$, so we use the first-order behavior of the model at $x$ (Assumption~\ref{assump:local-regularity}).
For a small span perturbation $u\in\R^{d_T}$, the induced hidden-state change is $J_T(x)\,u$ to first order, and the LM head maps it to the logit displacement $W J_T(x)\,u$.
Substituting $\delta z = W J_T(x)\,u$ into the Equation~\eqref{eq:output-geometry}, the visibility of the perturbation $u$ at the output is \looseness=-1
$$
  (W J_T(x)\,u)^\top F_{\mathrm{sm}}(p(x))\,(W J_T(x)\,u)
  = u^\top M_T(x)\,u,
$$
where
$$
  M_T(x)
  := J_T(x)^\top W^\top F_{\mathrm{sm}}(p(x))\,W J_T(x)
  \;\in\;\R^{d_T\times d_T} .
$$
We call $M_T(x)$ the \textit{Fisher pullback}, i.e., the output geometry transported back onto the span.
It assigns to each input direction the output-side movement that direction induces locally, e.g., $u^\top M_T(x)\,u$ is large exactly when perturbing the span along $u$ moves the next-token distribution a lot.

\smallskip
We then justify why calling $M_T(x)$ \emph{the} local geometry of the autoregressive channel, i.e., it is the intrinsic curvature of the channel at the base prompt.
\begin{lemma}[Local KL geometry]
\label{lem:local-kl}
Under Assumption~\ref{assump:local-regularity}, as $u\to0$,
\[
  \KL\!\bigl(\pi_T(x_T+u)\,\Vert\,p(x)\bigr)
  = \tfrac12\,u^\top M_T(x)\,u + o\bigl(\|u\|_2^2\bigr).
\]
Equivalently, $M_T(x)$ is the Hessian at $u=0$ of the next-token KL map
$u\mapsto\KL(\pi_T(x_T+u)\Vert p(x))$, which vanishes to first order.
\end{lemma}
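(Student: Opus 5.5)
The plan is to compose the cubic softmax remainder of Lemma~\ref{lem:softmax-cubic} with the quadratic hidden-state remainder of Assumption~\ref{assump:local-regularity}, and to control the cross terms with Lemma~\ref{lem:fisher-bilinear}.

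Fix $u$ with $\|u\|\le r_T(x)$. Define the logit displacement and the hidden-state remainder
$$\Delta(u):=W\bigl(h_T(x_T+u)-h_T(x_T)\bigr), \qquad \rho(u):=h_T(x_T+u)-h_T(x_T)-Ju .$$
With this notation, $\pi_T(x_T+u)=\softmax(z_0+\Delta(u))$ and $\Delta(u)=WJu+W\rho(u)$. The assumption gives $\|\rho(u)\|\le K_T(x)\|u\|^2$. Hence
$$\|W\rho(u)\|\le\|W\|_{\op}K_T(x)\|u\|^2, \qquad \|\Delta(u)\|\le\bigl(\|W\|_{\op}\|J\|_{\op}+\|W\|_{\op}K_T(x)\,r_T(x)\bigr)\|u\| =: C\|u\| .$$

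Next, apply Lemma~\ref{lem:softmax-cubic} with $\Delta=\Delta(u)$. This gives
$$\KL\bigl(\pi_T(x_T+u)\Vert p(x)\bigr)=\tfrac12\Delta(u)^\top F\Delta(u)+E_1(u), \qquad |E_1(u)|\le\tfrac23C^3\|u\|^3 .$$
Expanding the quadratic form with $a:=WJu$ and $b:=W\rho(u)$ yields
$$\Delta^\top F\Delta=a^\top Fa+2a^\top Fb+b^\top Fb,$$
and $a^\top Fa=u^\top M_T(x)u$ by definition of the Fisher pullback. Lemma~\ref{lem:fisher-bilinear} bounds the cross terms:
$$|2a^\top Fb|\le4\|a\|\|b\|\le4\|W\|_{\op}^2\|J\|_{\op}K_T(x)\|u\|^3, \qquad |b^\top Fb|\le2\|W\|_{\op}^2K_T(x)^2\|u\|^4 .$$
So the total error is $O(\|u\|^3)$ for $\|u\|\le r_T(x)$, and in particular it is $o(\|u\|^2)$. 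This proves the expansion, and in fact with an explicit cubic rate.

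The Hessian reformulation needs a little more care. The expansion shows that the map $u\mapsto\KL(\pi_T(x_T+u)\Vert p(x))$ has value $0$ at $u=0$ and a second-order Taylor expansion there with zero linear term and quadratic part $\tfrac12u^\top M_T(x)u$. Two facts then identify $M_T(x)$ as the Hessian. First, $M_T(x)$ is symmetric, since $F$ is symmetric. Second, a symmetric quadratic coefficient in a Peano-type second-order expansion is unique: if $u^\top(A-B)u=o(\|u\|^2)$ with $A-B$ symmetric, then scaling $u=sv$ and letting $s\to0$ gives $v^\top(A-B)v=0$ for all $v$, and polarization gives $A=B$. Therefore, whenever the KL map is twice differentiable at $0$ (for example when $h_T$ is $C^2$, since softmax and KL are smooth on the interior of the simplex), its Hessian equals $M_T(x)$ and its gradient vanishes.

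The main obstacle is not analytic, since everything above is an explicit triangle-inequality bound. It is bookkeeping: keeping the constants uniform on the ball $\|u\|\le r_T(x)$, which is what makes the $o(\cdot)$ statement follow from a uniform $O(\|u\|^3)$ bound. The Hessian clause also depends on reading ``Hessian'' in the Peano/Fr\'echet sense: the uniqueness argument gives the second-order coefficient directly, but calling it the Hessian requires the twice-differentiability hypothesis, which the local regularity assumption alone does not supply. I would state that extra hypothesis explicitly, or formalize the clause as the uniqueness of the quadratic coefficient.
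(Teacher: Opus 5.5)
Your proof is correct and follows the paper's argument. You apply Lemma~\ref{lem:softmax-cubic} to $\Delta(u)=W(h_T(x_T+u)-h_T(x_T))$, split $\Delta(u)=WJu+\text{remainder}$, bound the cross and remainder terms with Lemma~\ref{lem:fisher-bilinear}, and identify $M_T(x)$ by uniqueness of the second-order coefficient. The differences are minor: you use the quadratic bound of Assumption~\ref{assump:local-regularity} to get an explicit $O(\|u\|^3)$ rate, whereas the paper's proof uses only differentiability of $h_T$ at $x_T$ and leaves the cubic bound to Steps 1--4 of Theorem~\ref{thm:faithfulness}. Your caveat that ``Hessian'' should be read as the Peano second-order coefficient unless extra smoothness is assumed is a legitimate sharpening of the paper's final sentence.
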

\begin{proof}
Apply Lemma~\ref{lem:softmax-cubic} with the logit displacement
$\Delta=\Delta(u):=W\bigl(h_T(x_T+u)-h_T(x_T)\bigr)$, so that
$\softmax(z_0+\Delta(u))=\pi_T(x_T+u)$ and $\softmax(z_0)=p(x)$.
Differentiability of $h_T$ at $x_T$ gives $h_T(x_T+u)-h_T(x_T)=J u+o(\|u\|)$, hence
$\Delta(u)=W J u+o(\|u\|)$ and $\|\Delta(u)\|=O(\|u\|)$; in particular the cubic
remainder of Lemma~\ref{lem:softmax-cubic} is $\tfrac23\|\Delta(u)\|^3=o(\|u\|^2)$.
Writing $\Delta(u)=WJu+\eta(u)$ with $\eta(u)=o(\|u\|)$ and expanding the Fisher
quadratic form, the cross and remainder terms are $o(\|u\|^2)$ by
Lemma~\ref{lem:fisher-bilinear}, leaving
$\tfrac12\,\Delta(u)^\top F\,\Delta(u)=\tfrac12\,u^\top M u+o(\|u\|^2)$. Combining
with Lemma~\ref{lem:softmax-cubic},
$$
  \KL\!\bigl(\pi_T(x_T+u)\,\Vert\,p(x)\bigr)=\tfrac12\,u^\top M u+o(\|u\|_2^2) .
$$
The expansion has no constant or linear term, so by uniqueness of the
second-order Taylor coefficient $M=M_T(x)$ is the Hessian at $u=0$.
\end{proof}
\leancheck{GIF.localKLInterpretation\_affineHead}
\begin{remark}
Lemma~\ref{lem:local-kl} identifies the local KL geometry of the autoregressive channel.
It is, however, a \emph{pointwise} statement.
In other words, it captures the infinitesimal second-order behavior around the base prompt, but does not quantify the error at a fixed perturbation scale or guarantee uniformity across directions.
We close these gaps in \S\ref{subsubsec:surrogate}, after turning this local geometry into an information-theoretic channel.
\end{remark}

\subsubsection{The local Gaussian surrogate and its faithfulness}
\label{subsubsec:surrogate}

We now turn the deterministic geometry $M_T(x)$ into an information channel by introducing a single linear Gaussian channel whose distinguishability is exactly $M_T(x)$.

\begin{definition}[Local Gaussian surrogate channel]
\label{def:surrogate-channel}
Let
$$
  A_T(x) := F_{\mathrm{sm}}(p(x))^{1/2}\,W J_T(x)\;\in\;\R^{V\times d_T},
$$
where $F_{\mathrm{sm}}(p(x))^{1/2}$, namely the \emph{Fisher factor}, is the positive
semidefinite square root of the softmax-Fisher matrix.
Fix a perturbation scale $\tau>0$, and let $U_T\sim\mathcal N(0,\tau\Id_{d_T})$ and $N\sim\mathcal N(0,\Id_V)$ be independent. The \emph{local Gaussian surrogate channel} associated with
$(x,T,\tau)$ is
\[
  Y_T = A_T(x)\,U_T + N \;\in\;\R^V.
\]
\end{definition}
\begin{remark}
The Fisher factor is built so that it reproduces the span geometry exactly.
Since
\begin{equation}
\label{eq:bridge}
\begin{aligned}
  & A_T(x)^\top A_T(x) \\
  &= J_T(x)^\top W^\top F_{\mathrm{sm}}(p(x))^{1/2} F_{\mathrm{sm}}(p(x))^{1/2} WJ_T(x) \\
  &= J_T(x)^\top W^\top F_{\mathrm{sm}}(p(x))\,WJ_T(x) = M_T(x),
\end{aligned}
\end{equation}
so $\|A_T(x)\,u\|_2^2=u^\top M_T(x)\,u$ is the Fisher-weighted output size of the
span perturbation $u$. Consequently the surrogate's per-perturbation divergence is
exactly the span quadratic form.
For any fixed $u$,
\begin{equation}
\label{eq:surrogate-shift-kl}
\begin{aligned}
  \KL\!\bigl(\mathcal N(A_T(x)u,\Id_V)\,\Vert\,\mathcal N(0,\Id_V)\bigr)
  & = \tfrac12\|A_T(x)u\|_2^2 \\
  & = \tfrac12\,u^\top M_T(x)\,u,
\end{aligned}
\end{equation}
matching the leading term of the true channel in Lemma~\ref{lem:local-kl}.
\end{remark}

Here $U_T$ is an isotropic local perturbation of the span and $\tau$ sets its scale.
The channel therefore measures how much of a span perturbation remains observable through its output-side effect at this resolution.

\paragraph{Faithfulness}
Equation~\eqref{eq:surrogate-shift-kl} matches the surrogate to the true channel only to leading order and only pointwise.
The following theorem, which is the key technical guarantee of the construction, upgrades this to a \emph{quantitative, direction-uniform} second-order agreement on a fixed neighborhood of the base prompt.
It is the precise sense in which the surrogate, and hence the capacity we derive from it, faithfully tracks the true autoregressive channel.

\begin{theorem}[Uniform second-order faithfulness]
\label{thm:faithfulness}
Under Assumption~\ref{assump:local-regularity}, there exist a radius $R>0$ and a function $\rho:\R\to\R$ with $\rho(\varepsilon)=o(\varepsilon^2)$ as
$\varepsilon\to0$ such that for every $\varepsilon\in\R$ and every $r\in\R^{d_T}$
with $\|r\|_2\le R$,
$$
  \Bigl|\,\KL\!\bigl(\pi_T(x_T+\varepsilon r)\,\Vert\,p(x)\bigr)
    -\tfrac12\,\varepsilon^2\,r^\top M_T(x)\,r\,\Bigr|
  \;\le\;|\rho(\varepsilon)| .
$$
By Equation~\eqref{eq:surrogate-shift-kl}, taking $u=\varepsilon r$, the next-token divergence agrees with the
surrogate's per-shift divergence uniformly to order $o(\varepsilon^2)$.
\end{theorem}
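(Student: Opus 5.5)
We will prove the statement by running the argument of Lemma~\ref{lem:local-kl} with explicit constants. The new ingredient is that Assumption~\ref{assump:local-regularity} gives a \emph{quantitative}, direction-independent quadratic remainder, in place of the merely pointwise differentiability used there. Throughout we take $R:=1$ (any fixed positive radius works). For $\|r\|\le R$ and $u=\varepsilon r$ we write the logit displacement as
\[
  \Delta(u)=W\bigl(h_T(x_T+u)-h_T(x_T)\bigr)=WJu+\eta(u),
\]
where $\eta(u):=W\bigl(h_T(x_T+u)-h_T(x_T)-Ju\bigr)$. Whenever $\|u\|\le r_T(x)$, the assumption gives $\|\eta(u)\|\le \|W\|_{\op}K_T(x)\|u\|^2$.

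\textbf{Step 1: split the error.} We decompose
\[
\bigl|\KL-\tfrac12 u^\top M u\bigr|\le
\bigl|\KL-\tfrac12\Delta^\top F\Delta\bigr|+\tfrac12\bigl|\Delta^\top F\Delta-(WJu)^\top F(WJu)\bigr| .
\]
Lemma~\ref{lem:softmax-cubic} bounds the first term globally by $\tfrac23\|\Delta\|^3$. For the second term we expand
\[
\Delta^\top F\Delta-(WJu)^\top F(WJu)=2(WJu)^\top F\eta+\eta^\top F\eta ,
\]
and Lemma~\ref{lem:fisher-bilinear} bounds it by $4\|WJu\|\|\eta\|+2\|\eta\|^2$.

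\textbf{Step 2: explicit constants.} Set $a:=\|W\|_{\op}\|J\|_{\op}$ and $b:=\|W\|_{\op}K_T(x)$. Then $\|WJu\|\le a\|u\|$, and in the regime $\|u\|\le r_T(x)$ we have $\|\eta\|\le b\|u\|^2$ and $\|\Delta\|\le a\|u\|+b\|u\|^2$. Using $\|u\|\le R|\varepsilon|$, the total error for $R|\varepsilon|\le r_T(x)$ is at most
\[
\rho_0(\varepsilon):=\tfrac23\bigl(aR|\varepsilon|+bR^2\varepsilon^2\bigr)^3+2abR^3|\varepsilon|^3+b^2R^4\varepsilon^4 .
\]
This bound is cubic in $|\varepsilon|$ and independent of the direction $r$, which is exactly the uniformity the theorem asks for.

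\textbf{Step 3: large $\varepsilon$.} The statement quantifies over every $\varepsilon\in\R$, so $\rho$ must also dominate the error when $R|\varepsilon|>r_T(x)$, where the assumption gives no control. Since $\rho$ only needs to be $o(\varepsilon^2)$ as $\varepsilon\to0$, we are free to make it arbitrarily large away from zero. The plan is to define $\rho(\varepsilon):=\rho_0(\varepsilon)$ for $|\varepsilon|\le r_T(x)/R$, and otherwise
\[
\rho(\varepsilon):=\sup_{\|r\|\le R}\Bigl|\KL\bigl(\pi_T(x_T+\varepsilon r)\Vert p(x)\bigr)-\tfrac12\varepsilon^2 r^\top M r\Bigr| .
\]
This supremum is finite. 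The quadratic term is bounded by $\tfrac12\varepsilon^2\|M\|_{\op}R^2$. The KL between two softmax outputs satisfies $\KL(\softmax(z_0+\Delta)\Vert\softmax(z_0))\le 2\|\Delta\|_\infty$, since each log-ratio is at most $2\|\Delta\|_\infty$. So finiteness follows if $h_T$ is bounded on the compact ball $\{\xi:\|\xi-x_T\|\le R|\varepsilon|\}$, for instance by continuity of $h_T$.

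\textbf{Main obstacle.} The main obstacle is this large-$\varepsilon$ case, because Assumption~\ref{assump:local-regularity} alone says nothing about $h_T$ outside radius $r_T(x)$. If we may not assume continuity of $h_T$, we will try first to shrink $R$ and rescale: for $|\varepsilon|>r_T(x)/R$, write $\varepsilon r=\varepsilon' r'$ with $|\varepsilon'|$ small. This fails, however, because the product $\varepsilon r$ can genuinely be large. The fallback is therefore to use the fact that every $h_T$ in play, being a composition of a neural network with the perturbation map, is continuous, and to record this as the mild extra hypothesis under which the supremum above is finite.
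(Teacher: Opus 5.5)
Your Steps 1--2 are correct and match the paper's bulk argument. You use Lemma~\ref{lem:softmax-cubic} for the logit-space cubic remainder. You then split $\Delta=WJu+\eta$ and let Lemma~\ref{lem:fisher-bilinear} control the cross and remainder terms, which gives an $O(|\varepsilon|^3)$ envelope independent of $r$.

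The gap is Step~3. To make your supremum finite you bound $\KL\le 2\|\Delta\|_\infty$, which then requires $h_T$ to be bounded on large balls, so you add continuity of $h_T$ as a hypothesis. That hypothesis is not in the theorem and does not follow from Assumption~\ref{assump:local-regularity}, which constrains $h_T$ only within radius $r_T(x)$. As written, you therefore prove the statement only under an extra assumption. Without that assumption your route to finiteness breaks down: $h_T$ may be unbounded outside radius $r_T(x)$, and then $2\|\Delta\|_\infty$ bounds nothing, so your $\rho$ need not be real-valued by your argument.

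The missing idea is that the tail needs no control of $h_T$ at all, because the KL is taken against the \emph{fixed, full-support} reference $p(x)=\softmax(z_0)$. For every $q\in\Delta^{V-1}$, using $\log q_i\le 0$,
\[
  \KL\bigl(q\,\Vert\,p(x)\bigr)=\sum_i q_i\log\frac{q_i}{p(x)_i}\le\sum_i q_i\log\frac{1}{p(x)_i}\le B_x:=\max_i\log\frac{1}{p(x)_i}<\infty .
\]
Combined with your bound on the quadratic term, your supremum is at most $B_x+\tfrac12\|M_T(x)\|_{\op}R^2\varepsilon^2$. Your piecewise $\rho$ is then a genuine real-valued function that equals $\rho_0$ near $0$ and is hence $o(\varepsilon^2)$. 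With this one line your proof is complete, and the $2\|\Delta\|_\infty$ detour becomes unnecessary.

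The paper uses the same idea but packages it more cleanly:
\begin{itemize}
\item It takes $R:=\min\{1,r_T(x)\}$ and splits by whether $|\varepsilon|\,\|r\|\le R$ or $|\varepsilon|\,\|r\|>R$.
\item In the tail, $\|r\|\le R$ forces $|\varepsilon|>1$, so $B_x+\|WJ\|_{\op}^2R^2\varepsilon^2\le(B_x+\|WJ\|_{\op}^2R^2)|\varepsilon|^3$.
\item A single explicit envelope $\rho(\varepsilon)=C_\star|\varepsilon|^3$ then works for all $\varepsilon$, with no piecewise supremum.
\end{itemize}
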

\begin{proof}
Set $a:=\|WJ\|_{\op}$, $w:=\|W\|_{\op}$, $L:=wK_T(x)$. We first prove a
channel-level cubic bound for a generic span perturbation and then instantiate it
at $u=\varepsilon r$.

\smallskip
\noindent
\textit{\underline{Step 1 (decompose the logit perturbation).}}
Fix the radius
\[
  R:=\min\{1,\,r_T(x)\}>0 .
\]
For a span perturbation $u$ with $\|u\|\le R$, write $\Delta:=\Delta(u)=s+t$ with
$s:=WJ\,u$ and $t:=W R_T(u)$, where $R_T(u):=h_T(x_T+u)-h_T(x_T)-J\,u$. Since
$\|u\|\le R\le r_T(x)$, Assumption~\ref{assump:local-regularity} gives
$\|R_T(u)\|\le K_T(x)\|u\|^2$, so
\begin{equation}
\label{eq:app-st-bounds}
  \|s\|\le a\|u\|,\qquad \|t\|\le w\|R_T(u)\|\le L\|u\|^2 .
\end{equation}
Moreover $R\le1$ ensures $\|u\|^2\le\|u\|$, hence
\begin{equation}
\label{eq:app-Delta-lin}
  \|\Delta\|\le\|s\|+\|t\|\le a\|u\|+L\|u\|^2\le(a+L)\|u\| .
\end{equation}

\smallskip
\noindent
\textit{\underline{Step 2 (logit-space cubic bound).}}
By Lemma~\ref{lem:softmax-cubic} applied to $\Delta$, and noting
$\softmax(z_0+\Delta)=\pi_T(x_T+u)$ and $\softmax(z_0)=p(x)$,
\begin{equation}
\label{eq:app-apply-a}
\begin{aligned}
  &\Bigl|\,\KL\!\bigl(\pi_T(x_T+u)\Vert p(x)\bigr)-\tfrac12\Delta^\top F\Delta\,\Bigr| \\
  &\qquad\le\tfrac23\|\Delta\|^3 .
\end{aligned}
\end{equation}

\smallskip
\noindent
\textit{\underline{Step 3 (replace $\Delta$ by $s$).}}
Expanding $\Delta=s+t$ and using symmetry of $F$,
$\tfrac12\Delta^\top F\Delta-\tfrac12 s^\top F s
=\tfrac12(2 s^\top F t+t^\top F t)$. By Lemma~\ref{lem:fisher-bilinear},
\begin{equation}
\label{eq:app-quad-diff}
  \Bigl|\tfrac12\Delta^\top F\Delta-\tfrac12 s^\top F s\Bigr|
  \le 2\|s\|\|t\|+\|t\|^2,
\end{equation}
and $s^\top F s=u^\top(WJ)^\top F(WJ)u=u^\top M u$, so $\tfrac12 s^\top F s
=\tfrac12 u^\top M u$ is the target quadratic.

\smallskip
\noindent
\textit{\underline{Step 4 (combine and collect powers).}}
Adding~\eqref{eq:app-apply-a} and~\eqref{eq:app-quad-diff},
\[
\begin{aligned}
  &\Bigl|\KL\!\bigl(\pi_T(x_T+u)\Vert p(x)\bigr)-\tfrac12 u^\top M u\Bigr| \\
  &\qquad\le\tfrac23\|\Delta\|^3+2\|s\|\|t\|+\|t\|^2 .
\end{aligned}
\]
Using~\eqref{eq:app-st-bounds},~\eqref{eq:app-Delta-lin}, and $\|u\|\le1$ (so
$\|u\|^4\le\|u\|^3$),
$$
  \|\Delta\|^3\le(a+L)^3\|u\|^3 ,
$$
$$
  \|s\|\|t\|\le aL\|u\|^3,\enspace\|t\|^2\le L^2\|u\|^4\le L^2\|u\|^3 ,
$$
so
\begin{equation}
\label{eq:app-channel-cubic}
\begin{aligned}
  \Bigl|\KL\!\bigl(\pi_T(x_T+u)\Vert p(x)\bigr)-\tfrac12 u^\top M u\Bigr| &\le C\|u\|^3 , \\
  \qquad \|u\|&\le R,
\end{aligned}
\end{equation}
with $C:=\tfrac23(a+L)^3+2aL+L^2\ge0$ and $L=wK_T(x)$.

\smallskip
\noindent
\textit{\underline{Step 5 (instantiate at $u=\varepsilon r$).}}
Fix a direction $r\in\R^{d_T}$ with $\|r\|_2\le R$ and a scale $\varepsilon\in\R$,
and apply the channel-level bound~\eqref{eq:app-channel-cubic} to the perturbation
$\varepsilon r$.
\begin{itemize}[leftmargin=*]
\item \textit{Bulk ($|\varepsilon|\,\|r\|\le R$).}
Then $\|\varepsilon r\|\le R$, so~\eqref{eq:app-channel-cubic} applies with
$u=\varepsilon r$; since $(\varepsilon r)^\top M(\varepsilon r)
=\varepsilon^2 r^\top M r$,
\[
\begin{aligned}
  &\Bigl|\KL\!\bigl(\pi_T(x_T+\varepsilon r)\Vert p(x)\bigr)
    -\tfrac12\varepsilon^2 r^\top M r\Bigr| \\
  &\qquad\le C\|\varepsilon r\|^3=C|\varepsilon|^3\|r\|^3\le C R^3|\varepsilon|^3 .
\end{aligned}
\]
\item \textit{Tail ($|\varepsilon|\,\|r\|>R$).}
Here $|\varepsilon|>R/\|r\|\ge1$ (using $\|r\|\le R$), so $\varepsilon^2\le|\varepsilon|^3$
and $1\le|\varepsilon|^3$. Both terms inside the absolute value are nonnegative, so
it is at most their sum. Since $p(x)$ has strictly positive entries,
$\KL(q\Vert p(x))\le B_x:=\max_i\log(1/p(x)_i)<\infty$ for every
$q\in\Delta^{V-1}$; and by Lemma~\ref{lem:fisher-bilinear},
$r^\top M r=(WJr)^\top F(WJr)\le 2\|WJr\|^2\le 2a^2\|r\|^2$, whence
$\tfrac12\varepsilon^2 r^\top M r\le a^2 R^2\varepsilon^2$. Therefore
\[
\begin{aligned}
  &\Bigl|\KL\!\bigl(\pi_T(x_T+\varepsilon r)\Vert p(x)\bigr)
    -\tfrac12\varepsilon^2 r^\top M r\Bigr| \\
  &\qquad\le B_x+a^2 R^2\varepsilon^2\le(B_x+a^2 R^2)\,|\varepsilon|^3 .
\end{aligned}
\]
\end{itemize}

\smallskip
\noindent
Setting $C_\star:=\max\{C R^3,\,B_x+a^2 R^2\}$ and
$\rho(\varepsilon):=C_\star\,|\varepsilon|^3$, both regimes give
\[
\begin{aligned}
  &\Bigl|\KL\!\bigl(\pi_T(x_T+\varepsilon r)\Vert p(x)\bigr)
    -\tfrac12\varepsilon^2 r^\top M r\Bigr| \\
  &\qquad\le\rho(\varepsilon) ,
\end{aligned}
\]
uniformly over $\|r\|_2\le R$, which is the bound asserted in
Theorem~\ref{thm:faithfulness} (note $\rho\ge0$, so $\rho=|\rho|$). Finally
$\rho(\varepsilon)/\varepsilon^2=C_\star|\varepsilon|\to0$, i.e.\
$\rho(\varepsilon)=o(\varepsilon^2)$ as $\varepsilon\to0$.
\end{proof}
\leancheck{GIF.realChannel\_higherOrderSurrogate\_ofLocalQuadraticControl}
\begin{remark}
The uniformity in Theorem~\ref{thm:faithfulness} is over the \emph{direction} $r$ on the ball $\|r\|_2\le R$:
a single envelope $\rho(\varepsilon)=o(\varepsilon^2)$ dominates the surrogate error simultaneously for all such $r$.
Thus the curvature $M_T(x)$ measured by the surrogate, and hence the capacity and influence derived from it below, tracks the true autoregressive channel on a fixed neighborhood of the base prompt, not merely in the infinitesimal limit of Lemma~\ref{lem:local-kl}.
\end{remark}

\subsubsection{The capacity measure and the influence proxy}
\label{subsubsec:capacity}

Having fixed a faithful surrogate, we read off its two scalar summaries and take the capacity as our measure.

\begin{definition}[Geometric scores]
\label{def:geometric-scores}
The \emph{local influence} and the \emph{local capacity} of span $T$ at prompt $x$
and scale $\tau>0$ are \looseness=-1
$$
  \mathrm{Inf}_T(x):=\tr M_T(x),
$$
$$
  \mathrm{Cap}_T(\tau;x)
    :=\tfrac12\log\det\!\bigl(\Id_{d_T}+\tau M_T(x)\bigr).
$$
\end{definition}

\begin{lemma}[Well-definedness of the capacity]
\label{lem:capacity-well-defined}
For every prompt $x$, span $T$, and scale $\tau>0$, the matrix
$\Id_{d_T}+\tau M_T(x)$ is positive definite; hence
$\det(\Id_{d_T}+\tau M_T(x))>0$ and $\mathrm{Cap}_T(\tau;x)$ is well-defined.
\end{lemma}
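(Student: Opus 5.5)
The plan is to show that $M_T(x)$ is positive semidefinite and then add $\Id_{d_T}$ scaled against $\tau>0$.

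First, I will show $F:=F_{\mathrm{sm}}(p(x))$ is positive semidefinite. For any $w\in\R^V$, Equation~\eqref{eq:softmax-fisher} gives
\[
  w^\top F\,w=\sum_i p_i\bigl(w^\top(e_i-p)\bigr)^2\ge 0 .
\]
Equivalently, $w^\top F w=\Var_{p}(w)$, the variance of the coordinates of $w$ under the probability vector $p=p(x)$; the covariance form from the proof of Lemma~\ref{lem:fisher-bilinear} with $w'=w$ shows the same thing. Next, I pull this back to the span. Set $v:=WJ_T(x)\,u$ for $u\in\R^{d_T}$. Then
\[
  u^\top M_T(x)\,u=v^\top F\,v\ge 0 ,
\]
so $M_T(x)$ is positive semidefinite. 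Alternatively, I can use the factorization $M_T(x)=A_T(x)^\top A_T(x)$ from Equation~\eqref{eq:bridge}, which gives $u^\top M_T(x)u=\|A_T(x)u\|_2^2\ge0$. That route relies on the existence of the PSD square root, so I prefer the direct variance argument, which avoids it.

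Then, for any nonzero $u$,
\[
  u^\top(\Id_{d_T}+\tau M_T(x))\,u=\|u\|^2+\tau\,u^\top M_T(x)\,u\ge\|u\|^2>0 ,
\]
using $\tau>0$. Hence $\Id_{d_T}+\tau M_T(x)$ is positive definite. It is also symmetric, since $F$ is symmetric and therefore so is $M_T(x)$.

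Finally, a symmetric positive definite matrix has strictly positive real eigenvalues. By the spectral theorem its determinant is their product, so $\det(\Id_{d_T}+\tau M_T(x))>0$ and the logarithm defining $\mathrm{Cap}_T(\tau;x)$ is a well-defined real number. In fact each eigenvalue is at least $1$, which gives the stronger conclusion $\mathrm{Cap}_T(\tau;x)\ge0$.

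There is no real obstacle in this argument. The only care needed is the positivity of $\det$ from positive definiteness, which I will cite as the standard spectral fact rather than reprove. In a formal setting this bookkeeping is the main effort.
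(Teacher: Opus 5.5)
Your proof is correct and follows the paper's argument: show $M_T(x)\succeq 0$, deduce $u^\top(\Id_{d_T}+\tau M_T(x))u\ge\|u\|^2>0$ for $u\neq0$, and conclude that the determinant is positive. The only difference is in the PSD step: the paper uses the factorization $M_T(x)=A_T(x)^\top A_T(x)$ from Equation~\eqref{eq:bridge}, whereas you use the variance identity $w^\top F w=\sum_i p_i(w_i-p^\top w)^2\ge0$ directly. That identity is what guarantees $F^{1/2}$ exists in the first place, so your route is marginally more self-contained.
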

\begin{proof}
By Equation~\eqref{eq:bridge}, $M_T(x)=A_T(x)^\top A_T(x)\succeq0$, so $\forall v\neq0$,
$v^\top(\Id_{d_T}+\tau M_T(x))v=\|v\|_2^2+\tau\,v^\top M_T(x)\,v>0$. A
positive-definite matrix has positive determinant.
\end{proof}
\leancheckII{GIF.one\_add\_smul\_spanMetric\_posDef}{GIF.det\_one\_add\_smul\_spanMetric\_pos}

\smallskip
The identification rests on two standard facts about Gaussian laws.
\begin{lemma}[Gaussian quadratic-form moment]
\label{lem:moment}
For symmetric positive definite $\Sigma\in\R^{n\times n}$ and arbitrary
$M'\in\R^{n\times n}$,
$\;\E_{\xi\sim\mathcal N(0,\Sigma)}[\xi^\top M'\xi]=\tr(M'\Sigma).$
\end{lemma}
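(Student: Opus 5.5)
The plan is to reduce the statement to the second-moment matrix of $\xi$ and then to a finite sum of entries. First, expand the quadratic form coordinatewise as
\[
  \xi^\top M'\xi=\sum_{i,j}M'_{ij}\,\xi_i\xi_j ,
\]
a finite sum of products of coordinates. By linearity of expectation,
\[
  \E[\xi^\top M'\xi]=\sum_{i,j}M'_{ij}\,\E[\xi_i\xi_j].
\]
This requires each $\xi_i\xi_j$ to be integrable. That holds because Gaussian coordinates have finite second moments (Cauchy--Schwarz gives $\E|\xi_i\xi_j|\le(\E\xi_i^2\,\E\xi_j^2)^{1/2}<\infty$).

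Second, identify the second moments. Since $\xi\sim\mathcal N(0,\Sigma)$ is centered, $\E[\xi_i\xi_j]=\operatorname{Cov}(\xi_i,\xi_j)=\Sigma_{ij}$. To establish this cleanly, I would realize $\xi=\Sigma^{1/2}g$ with $g\sim\mathcal N(0,\Id_n)$, using the positive-definite square root already employed for the Fisher factor. Then
\[
  \E[\xi\xi^\top]=\Sigma^{1/2}\,\E[gg^\top]\,\Sigma^{1/2}=\Sigma^{1/2}\Id_n\Sigma^{1/2}=\Sigma ,
\]
since independent standard normals satisfy $\E[g_ig_j]=\delta_{ij}$.

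Finally, substitute to get
\[
  \sum_{i,j}M'_{ij}\Sigma_{ij}=\sum_{i,j}M'_{ij}\Sigma_{ji}=\tr(M'\Sigma),
\]
where the first equality uses the symmetry of $\Sigma$. No symmetry of $M'$ is needed, which matches the statement's "arbitrary $M'$."

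The main obstacle is the second step, not the algebra. In a formal (Lean/measure-theoretic) setting, one must actually certify that the covariance of the multivariate Gaussian law equals $\Sigma$, together with the integrability side conditions. I would first try to obtain this from the library characterization of the Gaussian by its mean and covariance. Failing that, I would use the pushforward construction $\xi=\Sigma^{1/2}g$ and reduce to the one-dimensional facts $\E[g_i^2]=1$ and $\E[g_ig_j]=\E[g_i]\E[g_j]=0$ for $i\ne j$, which follow from independence of the coordinates of $g$.
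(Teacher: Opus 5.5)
Your proposal is correct and is essentially the paper's argument. The paper writes $\xi^\top M'\xi=\tr(M'\xi\xi^\top)$ and moves the expectation inside the trace to get $\tr(M'\E[\xi\xi^\top])=\tr(M'\Sigma)$, which is your coordinatewise expansion in matrix form; your derivation of $\E[\xi\xi^\top]=\Sigma$ via $\xi=\Sigma^{1/2}g$ and your integrability check only fill in details the paper takes as given.
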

\begin{proof}
Using $\xi^\top M'\xi=\tr(M'\xi\xi^\top)$ and linearity of trace and expectation,
$\E[\xi^\top M'\xi]=\tr\!\bigl(M'\,\E[\xi\xi^\top]\bigr)=\tr(M'\Sigma)$, since
$\E[\xi\xi^\top]=\Sigma$ for $\xi\sim\mathcal N(0,\Sigma)$.
\end{proof}
\leancheck{GIF.integral\_quadratic\_form\_gaussianLawOfCov}

\begin{lemma}[Gaussian--Gaussian KL]
\label{lem:gkl}
For symmetric positive definite $\Sigma_1,\Sigma_2\in\R^{n\times n}$ and
$\mu\in\R^n$,
\[
\begin{aligned}
  &\KL\!\bigl(\mathcal N(\mu,\Sigma_1)\,\Vert\,\mathcal N(0,\Sigma_2)\bigr) \\
  &\quad=\tfrac12\Bigl(\log\tfrac{\det\Sigma_2}{\det\Sigma_1}
    +\tr(\Sigma_2^{-1}\Sigma_1)+\mu^\top\Sigma_2^{-1}\mu-n\Bigr).
\end{aligned}
\]
\end{lemma}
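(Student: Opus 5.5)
The plan is to compute the divergence directly as the expectation, under $P:=\mathcal N(\mu,\Sigma_1)$, of the log-ratio of the two Gaussian densities, and then evaluate every resulting term with Lemma~\ref{lem:moment}. Since $\Sigma_1,\Sigma_2$ are positive definite, both laws have strictly positive Lebesgue densities on $\R^n$:
\[
  \phi_j(y)=(2\pi)^{-n/2}(\det\Sigma_j)^{-1/2}\exp\!\bigl(-\tfrac12 (y-\mu_j)^\top\Sigma_j^{-1}(y-\mu_j)\bigr),
\]
with $\mu_1=\mu$ and $\mu_2=0$. Consequently $P\ll Q$ for $Q:=\mathcal N(0,\Sigma_2)$, and $\mathrm dP/\mathrm dQ=\phi_1/\phi_2$, so the $+\infty$ convention never applies.

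The log-ratio is explicit:
\[
  \log\frac{\phi_1(y)}{\phi_2(y)}=\tfrac12\log\frac{\det\Sigma_2}{\det\Sigma_1}-\tfrac12(y-\mu)^\top\Sigma_1^{-1}(y-\mu)+\tfrac12 y^\top\Sigma_2^{-1}y .
\]
We take the expectation under $y\sim P$ and substitute $y=\mu+\xi$ with $\xi\sim\mathcal N(0,\Sigma_1)$.

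For the middle term, Lemma~\ref{lem:moment} with $M'=\Sigma_1^{-1}$ gives $\E[\xi^\top\Sigma_1^{-1}\xi]=\tr(\Sigma_1^{-1}\Sigma_1)=n$. For the last term, expand
\[
  (\mu+\xi)^\top\Sigma_2^{-1}(\mu+\xi)=\mu^\top\Sigma_2^{-1}\mu+2\mu^\top\Sigma_2^{-1}\xi+\xi^\top\Sigma_2^{-1}\xi .
\]
Here the cross term has mean zero because $\E\xi=0$, and Lemma~\ref{lem:moment} gives $\E[\xi^\top\Sigma_2^{-1}\xi]=\tr(\Sigma_2^{-1}\Sigma_1)$. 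Collecting the constant, $-\tfrac n2$, $\tfrac12\tr(\Sigma_2^{-1}\Sigma_1)$ and $\tfrac12\mu^\top\Sigma_2^{-1}\mu$ yields exactly the stated formula.

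The algebra is routine. The main obstacle, especially for the mechanized version, is the measure-theoretic bookkeeping. Two facts must be established: (i) the Radon--Nikodym derivative of $\mathcal N(\mu,\Sigma_1)$ with respect to $\mathcal N(0,\Sigma_2)$ equals the density ratio almost everywhere, and (ii) the log-ratio is $P$-integrable, so that $\E_P$ may be split termwise.

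For (i), I would first show that both Gaussians are absolutely continuous with respect to Lebesgue measure with the displayed densities. The chain rule for Radon--Nikodym derivatives then gives the ratio.

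For (ii), I would observe that the log-ratio is a quadratic polynomial in $y$. Gaussian laws have finite second moments, and Lemma~\ref{lem:moment} already certifies that each quadratic form has a finite expectation, so linearity of the integral applies.

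Finally, note that $\Sigma_1^{-1}$ and $\Sigma_2^{-1}$ exist and $\det\Sigma_j>0$ by positive definiteness, so every term in the formula is well-defined.
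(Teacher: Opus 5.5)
Your proof is correct and follows the paper's argument: write the log-ratio of the two Gaussian densities, take its expectation under $\mathcal N(\mu,\Sigma_1)$, and evaluate the quadratic terms with Lemma~\ref{lem:moment}. The only difference is cosmetic. You center first ($y=\mu+\xi$) and kill the cross term with $\E\xi=0$, whereas the paper uses $\E[\xi\xi^\top]=\Sigma_1+\mu\mu^\top$ directly; your version has the small advantage of invoking Lemma~\ref{lem:moment} only for centered Gaussians, exactly as that lemma is stated.
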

\begin{proof}
Let $\phi_1$ and $\phi_2$ be the Lebesgue densities of $\mathcal N(\mu,\Sigma_1)$
and $\mathcal N(0,\Sigma_2)$. Their log-ratio is
$$
  \log\frac{\phi_1(\xi)}{\phi_2(\xi)}
  =\tfrac12\log\frac{\det\Sigma_2}{\det\Sigma_1}
   -\tfrac12(\xi-\mu)^\top\Sigma_1^{-1}(\xi-\mu)
   +\tfrac12\xi^\top\Sigma_2^{-1}\xi .
$$
Taking the expectation under $\xi\sim\mathcal N(\mu,\Sigma_1)$ and applying
Lemma~\ref{lem:moment}: the centered term gives
$\E[(\xi-\mu)^\top\Sigma_1^{-1}(\xi-\mu)]=\tr(\Sigma_1^{-1}\Sigma_1)=n$, and using
$\E[\xi\xi^\top]=\Sigma_1+\mu\mu^\top$,
$\E[\xi^\top\Sigma_2^{-1}\xi]=\tr(\Sigma_2^{-1}\Sigma_1)+\mu^\top\Sigma_2^{-1}\mu$.
Substituting these into the expected log-ratio yields the claim.
\end{proof}
\leancheck{GIF.gaussianLawOfCovMean\_kl}

\begin{theorem}[The capacity is the surrogate mutual information]
\label{thm:cap-mi}
For the local Gaussian surrogate channel of Definition~\ref{def:surrogate-channel},
the mutual information transmitted from the span perturbation to the readout is
exactly the capacity:
$$
  \begin{aligned}
  I_{\mathrm{surr}}(U_T;Y_T)
  & := \KL\!\bigl(\mathcal L_{(U_T,Y_T)}\,\Vert\,\mathcal L_{U_T}\otimes\mathcal L_{Y_T}\bigr) \\
  & = \tfrac12\log\det\!\bigl(\Id_{d_T}+\tau M_T(x)\bigr)
  = \mathrm{Cap}_T(\tau;x).
  \end{aligned}
$$
\end{theorem}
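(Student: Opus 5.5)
Our plan is to disintegrate the joint law over $U_T$, as in the proof of Theorem~\ref{thm:exact-form}, and then evaluate each resulting Gaussian KL in closed form with Lemma~\ref{lem:gkl}. Write $A:=A_T(x)$. Conditionally on $U_T=u$, the readout satisfies $Y_T\sim\mathcal N(Au,\Id_V)$. Marginally, $Y_T\sim\mathcal N(0,\Sigma_Y)$ with $\Sigma_Y:=\Id_V+\tau AA^\top$, since $AU_T$ and $N$ are independent centered Gaussians. The matrix $\Sigma_Y$ is symmetric positive definite, because $v^\top\Sigma_Y v=\|v\|^2+\tau\|A^\top v\|^2$. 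Hence all three laws have Lebesgue densities. The joint law has density $u\mapsto \phi_{\mathcal N(Au,\Id)}(y)/\phi_{\mathcal N(0,\Sigma_Y)}(y)$ relative to the product of marginals, and the chain rule for KL gives
\[
  I_{\mathrm{surr}}(U_T;Y_T)=\E_{U_T}\bigl[\KL\bigl(\mathcal N(AU_T,\Id_V)\,\Vert\,\mathcal N(0,\Sigma_Y)\bigr)\bigr].
\]

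Next, apply Lemma~\ref{lem:gkl} with $\Sigma_1=\Id_V$, $\Sigma_2=\Sigma_Y$ and $\mu=Au$. This gives
\[
  \tfrac12\bigl(\log\det\Sigma_Y+\tr(\Sigma_Y^{-1})+u^\top A^\top\Sigma_Y^{-1}Au-V\bigr).
\]
Take the expectation over $U_T\sim\mathcal N(0,\tau\Id_{d_T})$ and use Lemma~\ref{lem:moment} with $M'=A^\top\Sigma_Y^{-1}A$. The quadratic term then averages to $\tau\tr(A^\top\Sigma_Y^{-1}A)=\tr(\Sigma_Y^{-1}\,\tau AA^\top)$. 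The trace terms combine as
\[
  \tr(\Sigma_Y^{-1})+\tr(\Sigma_Y^{-1}\tau AA^\top)=\tr(\Sigma_Y^{-1}\Sigma_Y)=V,
\]
which cancels the $-V$. What remains is $I_{\mathrm{surr}}=\tfrac12\log\det(\Id_V+\tau AA^\top)$.

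Finally, Sylvester's determinant identity gives $\det(\Id_V+\tau AA^\top)=\det(\Id_{d_T}+\tau A^\top A)$. By Equation~\eqref{eq:bridge}, $A^\top A=M_T(x)$, which yields $\mathrm{Cap}_T(\tau;x)$. Lemma~\ref{lem:capacity-well-defined} ensures the logarithm is of a positive quantity.

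The main obstacle is the first step: rigorously justifying the disintegration identity for the continuous channel. This means identifying the Radon--Nikodym derivative of the joint law with respect to the product law and confirming that the relevant expectation is finite so that Fubini applies. Integrability should follow because the integrand is a quadratic polynomial in $u$ and Gaussians have all moments. In a formal setting I would first try to prove the chain rule $\KL(\mathcal L_{(U,Y)}\Vert\mathcal L_U\otimes\mathcal L_Y)=\E_U[\KL(\mathcal L_{Y\mid U}\Vert\mathcal L_Y)]$ as a general lemma for Markov kernels, and only then specialize it to the Gaussian kernel $u\mapsto\mathcal N(Au,\Id_V)$. Computing the marginal $\mathcal L_{Y_T}$ as the convolution $\mathcal N(0,\tau AA^\top)*\mathcal N(0,\Id_V)$ is a secondary technical point. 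Care is needed there because $\tau AA^\top$ may be singular, so the rule for adding covariances should be stated for positive semidefinite covariances.
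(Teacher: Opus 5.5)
Your argument is correct, but it takes a different route from the paper. The paper never disintegrates over $U_T$. It views $(U_T,Y_T)$ as the image of the nondegenerate base Gaussian $(U_T,N)$ under the linear map $(u,n)\mapsto(u,Au+n)$, so the joint law is a single centered Gaussian $\mathcal N(0,\Sigma_{\mathrm{joint}})$ on $\R^{d_T+V}$. It then identifies $\mathcal L_{U_T}\otimes\mathcal L_{Y_T}$ as the centered Gaussian obtained by deleting the cross blocks, and applies Lemma~\ref{lem:gkl} once, with $\mu=0$, in dimension $d_T+V$. The determinant ratio comes from a Schur complement ($\det\Sigma_{\mathrm{joint}}=\tau^{d_T}$), and the trace term vanishes because $\Sigma_{\mathrm{prod}}^{-1}\Sigma_{\mathrm{joint}}$ has identity diagonal blocks. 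Only then comes the same Sylvester step you use.

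You instead condition on $U_T$, apply Lemma~\ref{lem:gkl} pointwise in dimension $V$ with nonzero mean $Au$, and average with Lemma~\ref{lem:moment}. Your cancellation $\tr(\Sigma_Y^{-1})+\tr(\Sigma_Y^{-1}\tau AA^\top)=V$ replaces the block algebra.

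What your version buys: it avoids Schur complements and block inverses, and it mirrors the averaged-divergence form of Theorem~\ref{thm:exact-form}, so the surrogate and true channels are treated in parallel. Its price is exactly the obstacle you flag: a continuous chain rule for KL, which requires identifying the Radon--Nikodym derivative of the joint law against the product law and justifying Fubini.

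What the paper's version buys: once both laws are recognized as Gaussians on the product space, no disintegration is needed and one closed-form KL does everything. The same linear-pushforward viewpoint also handles your secondary worry. $Y_T$ is the image of a nondegenerate Gaussian under $(u,n)\mapsto Au+n$, so its covariance $\Id_V+\tau AA^\top$ follows without any convolution rule for a possibly singular $\tau AA^\top$. This is also plausibly why the paper's route is the easier one to mechanize.
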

\begin{proof}
Write $A:=A_T(x)=F^{1/2}WJ$, so that $A^\top A=M_T(x)$ by Equation~\eqref{eq:bridge}, and let $\Sigma_Y:=\Id_V+\tau AA^\top$ be the output covariance of $Y_T=AU_T+N$.
The pair $(U_T,Y_T)=(U_T,AU_T+N)$ is the image of the independent base
$\mathcal N(0,\Sigma_{\mathrm{base}})$, with $\Sigma_{\mathrm{base}}=\diag(\tau\Id_{d_T},\Id_V)$,
under the linear map $(u,n)\mapsto(u,Au+n)$; hence it is the centered Gaussian
$\mathcal N(0,\Sigma_{\mathrm{joint}})$ with
\[
  \Sigma_{\mathrm{joint}}
  =\begin{bmatrix}\tau\Id_{d_T} & \tau A^\top\\ \tau A & \Sigma_Y\end{bmatrix},
  \quad
  \Sigma_{\mathrm{prod}}
  =\begin{bmatrix}\tau\Id_{d_T} & 0\\ 0 & \Sigma_Y\end{bmatrix}.
\]
A centered Gaussian is determined by its covariance, and $\Sigma_{\mathrm{prod}}$ is
the joint covariance with its cross-blocks deleted, so
$\mathcal N(0,\Sigma_{\mathrm{prod}})=\mathcal L_{U_T}\otimes\mathcal L_{Y_T}$. Hence
$$
  I_{\mathrm{surr}}(U_T;Y_T)
  =\KL\!\bigl(\mathcal N(0,\Sigma_{\mathrm{joint}})\,\Vert\,\mathcal N(0,\Sigma_{\mathrm{prod}})\bigr),
$$
which by Lemma~\ref{lem:gkl} (centered case, $\mu=0$) equals
\[
  \tfrac12\Bigl(\log\tfrac{\det\Sigma_{\mathrm{prod}}}{\det\Sigma_{\mathrm{joint}}}
    +\tr(\Sigma_{\mathrm{prod}}^{-1}\Sigma_{\mathrm{joint}})-(d_T+V)\Bigr).
\]
By the Schur complement of the $(1,1)$ block and $\Sigma_Y-\tau AA^\top=\Id_V$,
\[
\begin{aligned}
  \det\Sigma_{\mathrm{joint}}
  &=\det(\tau\Id_{d_T})\,\det\!\bigl(\Sigma_Y-\tau A(\tau\Id_{d_T})^{-1}\tau A^\top\bigr) \\
  &=\det(\tau\Id_{d_T})\,\det(\Id_V),
\end{aligned}
\]
while $\det\Sigma_{\mathrm{prod}}=\det(\tau\Id_{d_T})\det\Sigma_Y$, so
$\det\Sigma_{\mathrm{prod}}/\det\Sigma_{\mathrm{joint}}=\det\Sigma_Y=\det(\Id_V+\tau AA^\top)$.
A direct block computation gives
$\Sigma_{\mathrm{prod}}^{-1}\Sigma_{\mathrm{joint}}
=\left[\begin{smallmatrix}\Id_{d_T}&A^\top\\[2pt] \ast&\Id_V\end{smallmatrix}\right]$,
whose diagonal blocks have trace $d_T+V$, cancelling the $-(d_T+V)$ term. Therefore
$I_{\mathrm{surr}}(U_T;Y_T)=\tfrac12\log\det(\Id_V+\tau AA^\top)$. Finally, by
Sylvester's determinant identity and the bridge~\eqref{eq:bridge},
\[
  \det(\Id_V+\tau AA^\top)=\det(\Id_{d_T}+\tau A^\top A)=\det(\Id_{d_T}+\tau M_T(x)),
\]
so $I_{\mathrm{surr}}(U_T;Y_T)=\tfrac12\log\det(\Id_{d_T}+\tau M_T(x))=\mathrm{Cap}_T(\tau;x)$.
\end{proof}
\leancheck{GIF.localEntropyCapacitySurrogate\_eq\_genuineChannelMutualInformation}
\begin{remark}
Both scores read off the spectrum of $M_T(x)$.
If $\lambda_1,\dots,\lambda_{d_T}\ge0$ are its eigenvalues, then
$$
  \mathrm{Inf}_T(x)=\sum_i\lambda_i,
  \enspace
  \mathrm{Cap}_T(\tau;x)=\tfrac12\sum_i\log(1+\tau\lambda_i).
$$
\end{remark}

\paragraph{Influence as a companion proxy}
The influence is the small-noise rate of the capacity.
Expanding $\log(1+\tau\lambda_i)=\tau\lambda_i+O(\tau^2)$ termwise gives
\begin{equation}
\label{eq:inf-rate}
  \mathrm{Cap}_T(\tau;x)=\tfrac{\tau}{2}\,\mathrm{Inf}_T(x)+O(\tau^2)
  \quad(\tau\to0^+),
\end{equation}
More precisely, this second-order correction is non-positive, since
$\log(1+z)\le z$ for $z\ge0$ (by concavity of $\log$). Therefore, $\forall \tau>0$,
$$
  \mathrm{Cap}_T(\tau;x)\le\tfrac{\tau}{2}\,\mathrm{Inf}_T(x) .
$$
Hence $\mathrm{Inf}_T(x)$ is simultaneously the leading-order rate
of the capacity and a global linear upper bound on it.
It is also a cheaper proxy while remaining exact to first order.

\begin{paperbox}{The Operational Measure of GIF}
We adopt the capacity as our operational measure of geometric information flow, and use influence as a cheaper first-order proxy, per Equation~\eqref{eq:inf-rate}.
\begin{equation}
\label{eq:gif-def}
  \mathrm{GIF}_T(\tau;x):=\mathrm{Cap}_T(\tau;x)
  \le\tfrac{\tau}{2}\,\mathrm{Inf}_T(x)
\end{equation}
By Theorem~\ref{thm:cap-mi} this is a genuine mutual information, and by Theorem~\ref{thm:faithfulness} it is computed from a surrogate that is faithful to the true channel.
\end{paperbox}

\subsection{Local Soundness}
\label{subsec:soundness-proof}

In \S\ref{subsec:gif-measurement} we defined the operational measure $\mathrm{GIF}_T(\tau;x)$ as the capacity of a local surrogate channel and justified it as a faithful approximation to the true autoregressive channel.
However, it remains to verify that it is conservative in the sense required of a leakage certificate, i.e., that it does not asymptotically undercount the true flow.
In this section, we show that our measure is locally sound per Equation~\eqref{eq:soundness}:
$$
I(U_T;Y)\;\le\;\mathrm{Cap}_T(\tau;x)+o(\tau)\qquad(\tau\to0^+) ,
$$ 
where $I(U_T;Y)$ is the true information flow of the genuine autoregressive channel, defined in Definition~\ref{def:geom-information-flow}.

\paragraph{Proof story}
Our insight is a single conservative relaxation: \textit{replacing the intractable optimal reference $P_Y$ by the computable base law $p(x)$ can only inflate the leakage, never deflate it}, which is exactly the safe direction for a certificate.
The proof has three moves. 
Concretely, as $\tau\to0^+$,
$$
\begin{aligned}
  I(U_T;Y)
    &= \E_{U_T}\,\KL\!\bigl(\pi_T(x_T+U_T)\,\big\Vert\,P_Y\bigr)
       && \text{[Thm.~\ref{thm:exact-form}]}\\
    &\le\; \E_{U_T}\,\KL\!\bigl(\pi_T(x_T+U_T)\,\big\Vert\,p(x)\bigr)
       && \text{[Move (i)]}\\
    &=\; \tfrac{\tau}{2}\,\mathrm{Inf}_T(x)+o(\tau)
       && \text{[Move (ii)]}\\
    &=\; \mathrm{Cap}_T(\tau;x)+o(\tau)
       && \text{[Move (iii)]} .
\end{aligned}
$$
We now unpack the three moves in detail.
\textbf{(i) Variational reduction.}
We replace the intractable output marginal $P_Y$ in the exact form (Theorem~\ref{thm:exact-form}) by the fixed base distribution $p(x)$, at the cost of a nonnegative slack. 
This turns the target into an averaged per-perturbation divergence against a fixed reference.
\textbf{(ii) Local averaging.}
We integrate this divergence over the perturbation law.
By the uniform faithfulness of the surrogate (Theorem~\ref{thm:faithfulness}), the per-perturbation divergence is the quadratic form of $M_T(x)$ up to a direction-uniform higher-order envelope, and averaging it against $U_T\sim\mathcal N(0,\tau\Id_{d_T})$ gives $\tfrac{\tau}{2}\mathrm{Inf}_T(x)+o(\tau)$.
\textbf{(iii) Capacity tightening.}
We replace the influence rate by the capacity itself, which differ only at order $\tau^2$. 

Note that, among the three moves, only the second is nontrivial and relies on the surrogate's faithfulness.
The first and third are elementary relaxations that hold for all $\tau>0$.
We give all three in full below, treating the elementary moves (i) and (iii) first and then the analytic core, move (ii).

\begin{lemma}[Variational reduction]
\label{lem:variational-reduction}
For every $\tau>0$,
$$
\begin{aligned}
  I(U_T&;Y) \\
  & = \E_{U_T}\!\bigl[\KL\bigl(\pi_T(x_T+U_T)\,\Vert\,p(x)\bigr)\bigr]
    - \KL\bigl(P_Y\,\Vert\,p(x)\bigr) \\
  & \le \E_{U_T}\!\bigl[\KL\bigl(\pi_T(x_T+U_T)\,\Vert\,p(x)\bigr)\bigr].
\end{aligned}
$$
\end{lemma}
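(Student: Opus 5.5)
The plan is to prove the identity via the classical "compensation" (or golden-formula) decomposition of mutual information, and then read off the inequality from nonnegativity of KL. Throughout, abbreviate $\pi_U:=\pi_T(x_T+U_T)$ and $p:=p(x)$, as in the proof of Theorem~\ref{thm:exact-form}. Our starting point is the averaged-divergence form $I(U_T;Y)=\E_{U_T}[\KL(\pi_U\Vert P_Y)]$ from Theorem~\ref{thm:exact-form}.

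The key algebraic step is to insert the reference $p$ inside the logarithm, pointwise in $U_T$:
\[
\KL(\pi_U\Vert P_Y)=\sum_y(\pi_U)_y\log\frac{(\pi_U)_y}{p_y}-\sum_y(\pi_U)_y\log\frac{P_Y(y)}{p_y}.
\]
The first sum is $\KL(\pi_U\Vert p)$. Taking $\E_{U_T}$ of the second sum and exchanging expectation with the finite sum over $y$ gives $\sum_y \E_{U_T}[(\pi_U)_y]\log(P_Y(y)/p_y)$. Since $\E_{U_T}[(\pi_U)_y]=P_Y(y)$ by definition of the marginal, this equals $\KL(P_Y\Vert p)$, and subtracting yields the stated equality.

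For the inequality, the paper notes that $\KL(P_Y\Vert p)\ge 0$, so dropping this term can only increase the right-hand side.

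The main obstacle is measure-theoretic bookkeeping rather than algebra: every term must be finite and integrable so that the split and the exchange of $\E$ with $\sum_y$ are legitimate. I would handle this as follows.
\begin{itemize}
\item All entries of $p$ are strictly positive, and all entries of $P_Y$ are strictly positive, since they are averages of strictly positive softmax entries. Hence every logarithm above is finite.
\item $\KL(\pi_U\Vert p)\le B_x=\max_i\log(1/p_i)$ uniformly in $U_T$, as already observed in the tail step of Theorem~\ref{thm:faithfulness}.
\item Each summand $(\pi_U)_y\log(P_Y(y)/p_y)$ is bounded by a constant, because $0\le(\pi_U)_y\le 1$ and the logarithm is a fixed finite number.
\item $\KL(\pi_U\Vert P_Y)$ is bounded by $\max_y\log(1/P_Y(y))$.
\end{itemize}
Thus all integrands are bounded measurable functions of $U_T$, and linearity of the Gaussian expectation applies without issue.
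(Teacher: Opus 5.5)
Your proposal is correct and follows the paper's proof essentially verbatim. The paper uses the same split of the log-ratio through the marginal, stated as the golden formula $\E_{U_T}[\KL(\pi_U\Vert q)] = I(U_T;Y)+\KL(P_Y\Vert q)$ for a general strictly positive $q$ and then specialized to $q=p(x)$; it drops the nonnegative slack by Gibbs' inequality, as you do, and your boundedness checks make explicit the exchange of $\E_{U_T}$ with the finite sum over $y$ that the paper leaves implicit.
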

\begin{proof}
For any strictly positive $q\in\Delta^{V-1}$, expand the divergence against $q$
through the marginal $P_Y$:
$$
\begin{aligned}
& \KL(\pi_T(x_T+U_T)\Vert q) = \\
& \KL(\pi_T(x_T+U_T)\Vert P_Y) +\sum_y\pi_T(x_T+U_T)_y\log\frac{P_Y(y)}{q(y)} .
\end{aligned}
$$
Taking $\E_{U_T}$ and using $\E_{U_T}[\pi_T(x_T+U_T)_y]=P_Y(y)$, the cross term
collapses to $\sum_y P_Y(y)\log\frac{P_Y(y)}{q(y)}=\KL(P_Y\Vert q)$, giving the
\emph{golden formula}
$$
  \E_{U_T}\!\bigl[\KL(\pi_T(x_T+U_T)\Vert q)\bigr]
  = I(U_T;Y)+\KL(P_Y\Vert q),
$$
where $I(U_T;Y)=\E_{U_T}[\KL(\pi_T(x_T+U_T)\Vert P_Y)]$ by Theorem~\ref{thm:exact-form}.
Setting $q=p(x)$ and dropping the nonnegative slack $\KL(P_Y\Vert p(x))\ge0$
(Gibbs' inequality~\cite{cover1999elements}) yields the claim.
\end{proof}
\leancheck{GIF.Softmax.softmax\_trueChannelMI\_le\_baseKL}

\begin{lemma}[Capacity tightening]
\label{lem:capacity-gap}
For every $\tau>0$, with $\lambda_1,\dots,\lambda_{d_T}\ge0$ the eigenvalues of
$M_T(x)$,
$$
\begin{aligned}
  0 
  & \le \tfrac{\tau}{2}\,\mathrm{Inf}_T(x)-\mathrm{Cap}_T(\tau;x) \\
  & = \tfrac12\sum_i\bigl(\tau\lambda_i-\log(1+\tau\lambda_i)\bigr)
  \le \tfrac{\tau^2}{4}\,\bigl(\mathrm{Inf}_T(x)\bigr)^2
  = O(\tau^2).
\end{aligned}
$$
\end{lemma}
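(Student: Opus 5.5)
The plan is to work entirely with the spectrum of $M_T(x)$ and reduce the statement to a scalar inequality applied eigenvalue by eigenvalue. By Equation~\eqref{eq:bridge}, $M_T(x)=A_T(x)^\top A_T(x)$ is symmetric positive semidefinite. Hence it is orthogonally diagonalizable with eigenvalues $\lambda_1,\dots,\lambda_{d_T}\ge 0$.

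The first step is to rewrite both scores spectrally, as in the remark following Theorem~\ref{thm:cap-mi}. On one side, $\mathrm{Inf}_T(x)=\tr M_T(x)=\sum_i\lambda_i$. On the other, $\det(\Id_{d_T}+\tau M_T(x))=\prod_i(1+\tau\lambda_i)$, and every factor is positive by Lemma~\ref{lem:capacity-well-defined}. This gives $\mathrm{Cap}_T(\tau;x)=\tfrac12\sum_i\log(1+\tau\lambda_i)$. Subtracting the two expressions yields the middle equality $\tfrac{\tau}{2}\mathrm{Inf}_T(x)-\mathrm{Cap}_T(\tau;x)=\tfrac12\sum_i(\tau\lambda_i-\log(1+\tau\lambda_i))$ directly.

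The second step is the scalar sandwich $0\le z-\log(1+z)\le \tfrac{z^2}{2}$ for $z\ge0$. Set $\phi(z)=z-\log(1+z)$. Then $\phi(0)=0$ and $\phi'(z)=z/(1+z)$, which satisfies $0\le\phi'(z)\le z$ on $[0,\infty)$. Integrating from $0$ gives $0\le\phi(z)\le z^2/2$. Applying this with $z=\tau\lambda_i\ge0$ and summing shows that the gap is nonnegative and at most $\tfrac{\tau^2}{4}\sum_i\lambda_i^2$.

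The last step is to compare $\sum_i\lambda_i^2$ with $(\sum_i\lambda_i)^2$. Because all $\lambda_i\ge0$, the cross terms in $(\sum_i\lambda_i)^2=\sum_i\lambda_i^2+\sum_{i\ne j}\lambda_i\lambda_j$ are nonnegative. Therefore $\sum_i\lambda_i^2\le(\mathrm{Inf}_T(x))^2$, which gives the bound $\tfrac{\tau^2}{4}(\mathrm{Inf}_T(x))^2$. Since $\mathrm{Inf}_T(x)$ does not depend on $\tau$, this bound is $O(\tau^2)$.

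No step is genuinely hard. The only point needing care, especially for the Lean formalization, is the first step: expressing $\log\det$ and the trace through the eigenvalues of a Hermitian matrix. This requires the spectral theorem and the identity $\det(\Id+\tau M)=\prod_i(1+\tau\lambda_i)$. I would handle it by diagonalizing $M_T(x)=Q\,\diag(\lambda)\,Q^\top$ and using the invariance of det and trace under this conjugation.
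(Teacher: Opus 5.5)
Your proposal is correct and follows the paper's proof essentially step for step. You diagonalize the positive semidefinite $M_T(x)$, rewrite the gap as $\tfrac12\sum_i(\tau\lambda_i-\log(1+\tau\lambda_i))$, apply $0\le t-\log(1+t)\le\tfrac12 t^2$ termwise, and bound $\sum_i\lambda_i^2\le(\sum_i\lambda_i)^2$ using nonnegativity; your derivative argument for the scalar inequality just fills in the step the paper calls elementary.
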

\begin{proof}
Diagonalizing the positive semidefinite $M_T(x)$ with eigenvalues $\lambda_i\ge0$,
$$
\tfrac{\tau}{2}\mathrm{Inf}_T(x)-\mathrm{Cap}_T(\tau;x)
=\tfrac12\sum_i\bigl(\tau\lambda_i-\log(1+\tau\lambda_i)\bigr) .
$$
The elementary inequality $0\le t-\log(1+t)\le\tfrac12 t^2$ for $t\ge0$, applied to each
$t=\tau\lambda_i$ and summed, gives
$\tfrac12\sum_i\bigl(\tau\lambda_i-\log(1+\tau\lambda_i)\bigr)\le\tfrac{\tau^2}{4}\sum_i\lambda_i^2$.
Since the $\lambda_i\ge0$, $\sum_i\lambda_i^2\le\bigl(\sum_i\lambda_i\bigr)^2
=\bigl(\tr M_T(x)\bigr)^2=\bigl(\mathrm{Inf}_T(x)\bigr)^2$, yielding the quadratic remainder
$\tfrac{\tau^2}{4}\bigl(\mathrm{Inf}_T(x)\bigr)^2=O(\tau^2)$; nonnegativity follows from
$\log(1+t)\le t$.
\end{proof}
\leancheckII{GIF.Softmax.localCapacityProxy\_le\_half\_influence}{GIF.Softmax.cap\_gap\_upper}
\begin{remark}
The capacity and its small-noise influence rate agree to first order and separate only quadratically, so reporting the capacity moves the bound by $O(\tau^2)$.
\end{remark}

\paragraph{Local averaging (move (ii))}
As $\tau\to0^+$, the perturbation $U_T$ concentrates near the base prompt, where the next-token divergence is governed by the local curvature $M_T(x)$
(Lemma~\ref{lem:local-kl}).
The uniform faithfulness guarantee (Theorem~\ref{thm:faithfulness}) strengthens this pointwise expansion into a direction-uniform approximation, allowing us to average it over $U_T\sim\mathcal N(0,\tau\Id_{d_T})$.
The only remaining point is that \textit{the Gaussian tails do not disturb this limit}: large perturbations carry exponentially small Gaussian mass, while the divergence against the fixed base law $p(x)$ is uniformly bounded, so the tail contribution vanishes.
We make this precise. Abbreviate the per-perturbation divergence and its quadratic
remainder by
\[
\begin{aligned}
  D_x(u)&:=\KL\bigl(\pi_T(x_T+u)\,\Vert\,p(x)\bigr), \\
  r_x(u)&:=D_x(u)-\tfrac12\,u^\top M_T(x)\,u .
\end{aligned}
\]

\begin{lemma}[Gaussian averaging]
\label{lem:gauss-averaging}
Under Assumption~\ref{assump:local-regularity}, with $U_T\sim\mathcal N(0,\tau\Id_{d_T})$,
$$
  \E_{U_T}\!\bigl[D_x(U_T)\bigr]
  = \tfrac{\tau}{2}\,\mathrm{Inf}_T(x)+o(\tau)
  \qquad(\tau\to0^+).
$$
\end{lemma}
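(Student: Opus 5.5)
Split the expectation as $\E[D_x(U_T)] = \tfrac12\E[U_T^\top M_T(x)U_T] + \E[r_x(U_T)]$. By Lemma~\ref{lem:moment} with $\Sigma=\tau\Id_{d_T}$, the first term equals $\tfrac{\tau}{2}\tr M_T(x)=\tfrac{\tau}{2}\mathrm{Inf}_T(x)$ exactly. Integrability is automatic: $D_x$ is bounded by $B_x:=\max_i\log(1/p(x)_i)$, and the quadratic form has finite Gaussian moments. The whole task therefore reduces to showing $\E[|r_x(U_T)|]=o(\tau)$.

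For the remainder I would not use Theorem~\ref{thm:faithfulness} in its $\varepsilon r$ form. Instead I would use the channel-level cubic bound~\eqref{eq:app-channel-cubic} established inside its proof: $|r_x(u)|\le C\|u\|^3$ for $\|u\|\le R$. Outside the ball, the argument in the tail step of that proof gives $|r_x(u)|\le B_x+\tfrac12 u^\top M u\le B_x+a^2\|u\|^2$, using Lemma~\ref{lem:fisher-bilinear}. Splitting the expectation on the event $\{\|U_T\|\le R\}$ yields
\[
\E|r_x(U_T)|\le C\,\E\|U_T\|^3+B_x\Pr(\|U_T\|>R)+a^2\E\bigl[\|U_T\|^2\mathbf 1\{\|U_T\|>R\}\bigr].
\]

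Next I would bound each of the three terms.
\begin{enumerate}
\item Writing $U_T=\sqrt{\tau}\,Z$ with $Z\sim\mathcal N(0,\Id_{d_T})$, we get $\E\|U_T\|^3=\tau^{3/2}\E\|Z\|^3=O(\tau^{3/2})=o(\tau)$.
\item For the probability term, Markov/Chebyshev on a higher moment gives $\Pr(\|U_T\|>R)\le \E\|U_T\|^4/R^4=O(\tau^2)$.
\item For the truncated second moment, Cauchy--Schwarz gives $\E[\|U_T\|^2\mathbf 1\{\cdot\}]\le(\E\|U_T\|^4)^{1/2}\Pr(\|U_T\|>R)^{1/2}=O(\tau)\cdot O(\tau)=O(\tau^2)$.
\end{enumerate}
All three are $o(\tau)$, which concludes the argument.

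The main obstacle is the tail control. In the paper's style this is probably done by dominated convergence after rescaling: divide by $\tau$, substitute $u=\sqrt\tau z$, and note that $r_x(\sqrt\tau z)/\tau\to0$ pointwise by Lemma~\ref{lem:local-kl}. The needed dominating function is $|r_x(\sqrt\tau z)|/\tau\le C'(\|z\|^2+\|z\|^3)$. This holds for $\tau\le1$: inside the ball, $C\tau^{1/2}\|z\|^3\le C\|z\|^3$; outside, $\|\sqrt\tau z\|>R$ forces $1\le\tau\|z\|^2/R^2$, so $B_x/\tau\le B_x\|z\|^2/R^2$. I would present the explicit moment bounds above as the primary route and mention dominated convergence as the alternative that matches a Lean formalization.
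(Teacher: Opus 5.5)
Your proof is correct, and its skeleton matches the paper's: split off $\tfrac12 U_T^\top M_T(x)U_T$, evaluate it exactly by Lemma~\ref{lem:moment}, and show the remainder is $o(\tau)$. The routes differ in how the remainder is handled.

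The paper never invokes the cubic bound~\eqref{eq:app-channel-cubic}. Its proof runs as follows:
\begin{itemize}
\item It uses only the pointwise expansion $r_x(u)=o(\|u\|_2^2)$ of Lemma~\ref{lem:local-kl}, which gives $|r_x(u)|\le\|u\|_2^2$ on a small ball $\|u\|_2<\delta$.
\item Off the ball it uses $|r_x(u)|\le B_x+\tfrac12\|M_T(x)\|_{\op}\|u\|_2^2$ together with $B_x\le B_x\|u\|_2^2/\delta^2$. This yields a global quadratic domination $|r_x(u)|\le C_x\|u\|_2^2$.
\item It then applies dominated convergence to $r_x(\sqrt\tau G)/\tau$ with the integrable majorant $C_x\|G\|_2^2$.
\end{itemize}
So your guess at the paper's method is right, except that its majorant is purely quadratic and needs no cubic term.

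The paper's argument relies only on the second-order expansion at the base point, whose proof needs just differentiability of $h_T$ at $x_T$, plus boundedness of $D_x$. It would therefore survive a weakening of Assumption~\ref{assump:local-regularity}, but it gives no rate.

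Your route spends the full quadratic control of the assumption. In exchange it gives an explicit error $\E|r_x(U_T)|=O(\tau^{3/2})$. The constants are computable from $\|W\|_{\op}$, $\|WJ_T(x)\|_{\op}$, $K_T(x)$, $R$, $B_x$ and $d_T$. This would sharpen the $o(\tau)$ in Theorem~\ref{thm:local-soundness} to $O(\tau^{3/2})$.

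One simplification is available. The proof of Theorem~\ref{thm:faithfulness} actually produces $\rho(\varepsilon)=C_\star|\varepsilon|^3$. Choosing $r=Ru/\|u\|_2$ and $\varepsilon=\|u\|_2/R$ then gives the global bound $|r_x(u)|\le C_\star R^{-3}\|u\|_2^3$ for every $u$. With that bound, your split into bulk and tail events becomes unnecessary.
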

\begin{proof}
\emph{Pointwise expansion.} By Lemma~\ref{lem:local-kl},
$r_x(u)=o(\|u\|_2^2)$ as $u\to0$; in particular there is $\delta>0$ with
$|r_x(u)|\le\|u\|_2^2$ whenever $\|u\|_2<\delta$.

\smallskip
\noindent
\textit{\underline{Global quadratic domination.}}
Since $p(x)$ has strictly positive entries, the divergence is uniformly bounded: for any $q\in\Delta^{V-1}$,
$$\KL(q\Vert p(x))=\sum_i q_i\log\frac{q_i}{p(x)_i}\le\sum_i q_i\log\frac1{p(x)_i}\le B_x ,$$
where $B_x:=\max_i\log(1/p(x)_i)<\infty$ (using $\log q_i\le0$). Hence
$|r_x(u)|\le\max\{D_x(u),\,\tfrac12 u^\top M_T(x)u\}
\le B_x+\tfrac12\|M_T(x)\|_{\op}\|u\|_2^2$ for all $u$. Combining the two regimes
(for $\|u\|_2\ge\delta$ bound $B_x\le B_x\|u\|_2^2/\delta^2$),
\begin{equation}
\label{eq:global-dom}
\begin{aligned}
  & |r_x(u)|\le C_x\,\|u\|_2^2
  \quad\text{for all }u\in\R^{d_T}, \\
  & C_x:=\max\!\Bigl\{1,\;\tfrac{B_x}{\delta^2}+\tfrac12\|M_T(x)\|_{\op}\Bigr\}.
\end{aligned}
\end{equation}

\smallskip
\noindent
\textit{\underline{Dominated convergence.}} Write $U_T=\sqrt{\tau}\,G$ with $G\sim\mathcal N(0,\Id_{d_T})$.
Then
$$
\begin{aligned}
  & \frac{1}{\tau}\,\E_{U_T}\!\bigl[D_x(U_T)\bigr] \\
  & = \tfrac12\,\E_G\!\bigl[G^\top M_T(x)\,G\bigr]
    + \E_G\!\left[\frac{r_x(\sqrt{\tau}\,G)}{\tau}\right].
\end{aligned}
$$
For the first term, $\E_G[G^\top M_T(x)G]=\tr M_T(x)=\mathrm{Inf}_T(x)$ by
Lemma~\ref{lem:moment}. For the second, the integrand vanishes pointwise as
$\tau\to0^+$, since for fixed $G$,
$$
  r_x(\sqrt\tau G)/\tau=\bigl(r_x(\sqrt\tau G)/\|\sqrt\tau G\|_2^2\bigr)\|G\|_2^2\to0
$$
by the little-$o$ property; and by Equation~\eqref{eq:global-dom} it is dominated by
$$|r_x(\sqrt\tau G)/\tau|\le C_x\|G\|_2^2 ,$$
which is integrable ($\E_G\|G\|_2^2=d_T$). Dominated convergence gives
$\E_G[r_x(\sqrt\tau G)/\tau]\to0$, i.e.\ $\E_{U_T}[r_x(U_T)]=o(\tau)$. Therefore
$\E_{U_T}[D_x(U_T)]=\tfrac{\tau}{2}\mathrm{Inf}_T(x)+o(\tau)$.
\end{proof}
\leancheck{GIF.localKLInterpretation\_affineHead\_centeredGaussian\_ofLocalQuadraticControl}
\begin{theorem}[Local soundness of GIF]
\label{thm:local-soundness}
Under Assumption~\ref{assump:local-regularity}, the operational measure is locally
sound: Equation~\eqref{eq:soundness} holds with
$\mathrm{GIF}_T(\tau;x)=\mathrm{Cap}_T(\tau;x)$, i.e.
\[
  I(U_T;Y)\;\le\;\mathrm{GIF}_T(\tau;x)+o(\tau)\qquad(\tau\to0^+).
\]
\end{theorem}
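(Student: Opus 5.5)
The plan is to chain the three moves already isolated in the proof story, each now available as a lemma, and then absorb the error terms into a single $o(\tau)$.

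First, Lemma~\ref{lem:variational-reduction} gives, for every $\tau>0$, the exact inequality $I(U_T;Y)\le \E_{U_T}[D_x(U_T)]$. This holds with no asymptotics, because the dropped slack $\KL(P_Y\Vert p(x))$ is nonnegative.

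Second, Lemma~\ref{lem:gauss-averaging} applies under Assumption~\ref{assump:local-regularity}. It supplies a function $e_1(\tau)$ with $e_1(\tau)/\tau\to0$ as $\tau\to0^+$ such that
\[
\E_{U_T}[D_x(U_T)]=\tfrac{\tau}{2}\mathrm{Inf}_T(x)+e_1(\tau).
\]

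Third, Lemma~\ref{lem:capacity-gap} gives
\[
\tfrac{\tau}{2}\mathrm{Inf}_T(x)=\mathrm{Cap}_T(\tau;x)+e_2(\tau)\quad\text{with}\quad 0\le e_2(\tau)\le\tfrac{\tau^2}{4}\mathrm{Inf}_T(x)^2 .
\]
Here $\mathrm{Inf}_T(x)=\tr M_T(x)$ is a fixed finite constant for the given $x$ and $T$. Hence $e_2(\tau)/\tau\le \tfrac{\tau}{4}\mathrm{Inf}_T(x)^2\to0$, so $e_2=o(\tau)$. Lemma~\ref{lem:capacity-well-defined} guarantees that $\mathrm{Cap}_T$ is well-defined throughout.

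Combining the three steps gives
\[
I(U_T;Y)\le \mathrm{Cap}_T(\tau;x)+e_1(\tau)+e_2(\tau).
\]
The sum of two $o(\tau)$ functions is $o(\tau)$. This is precisely Equation~\eqref{eq:soundness} with $\mathrm{GIF}_T(\tau;x)=\mathrm{Cap}_T(\tau;x)$, by Equation~\eqref{eq:gif-def}.

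One could even weaken the last step to $\le \mathrm{Cap}_T+e_1+e_2$ with only $e_1$ possibly negative. Since the inequality is one-sided, only an upper envelope for $e_1$ is actually needed.

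The only substantive analytic content is the Gaussian averaging step, which is already packaged in Lemma~\ref{lem:gauss-averaging} via dominated convergence. So no real obstacle remains at this stage. The points to be careful about are bookkeeping:
\begin{itemize}
\item interpreting $o(\tau)$ as a single function of $\tau$ that does not depend on the realization of $U_T$;
\item confirming that all constants ($B_x$, $\|M_T(x)\|_{\op}$, $\mathrm{Inf}_T(x)$) depend only on the fixed $x$ and $T$, so that the little-$o$ terms are legitimate as $\tau\to0^+$.
\end{itemize}
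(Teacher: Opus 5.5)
Your proposal is correct and matches the paper's proof: it chains Lemma~\ref{lem:variational-reduction}, Lemma~\ref{lem:gauss-averaging}, and Lemma~\ref{lem:capacity-gap} in the same order. It then absorbs the Gaussian-averaging remainder and the $O(\tau^2)$ capacity gap (which is $o(\tau)$ because $\mathrm{Inf}_T(x)$ is a fixed constant) into a single $o(\tau)$ term.
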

\begin{proof}
Chaining the three moves,
$$
\begin{aligned}
  I(U_T;Y)
    &\le\; \E_{U_T}\!\bigl[D_x(U_T)\bigr]
       && \text{[Lem.~\ref{lem:variational-reduction}]}\\
    &=\; \tfrac{\tau}{2}\,\mathrm{Inf}_T(x)+o(\tau)
       && \text{[Lem.~\ref{lem:gauss-averaging}]}\\
    &=\; \mathrm{Cap}_T(\tau;x)+o(\tau)
       && \text{[Lem.~\ref{lem:capacity-gap}]} ,
\end{aligned}
$$
where the last step absorbs the $O(\tau^2)$ capacity gap into $o(\tau)$. Since
$\mathrm{GIF}_T(\tau;x)=\mathrm{Cap}_T(\tau;x)$, this is Equation~\eqref{eq:soundness}.
\end{proof}
\leancheck{GIF.Softmax.softmax\_trueChannelMI\_le\_capacity\_add\_littleo}

\section{Implementation}
\label{sec:implementation}
While~\autoref{sec:theory} derived a sound, operational measure of GIF, direct evaluation is prohibitively expensive at scale. This section bridges theory and practice: we first identify the computational bottleneck and derive an efficient estimator, then describe the full system implementation.

\subsection{Efficient Flow Approximation}
\label{subsec:efficient}
\paragraph{The bottleneck}
As discussed in~\autoref{subsec:gif-measurement}, computing GIF reduces to forming $M_T(x)$, which in turn requires materializing $J_T(x)$.
Via reverse-mode automatic differentiation, this demands one backward pass per output dimension $m=d_{\mathrm{model}}$~\cite{baydin2018automatic}.
For modern open-weight LLMs, $m$ runs into the thousands, ranging from 2880 for \gptosslarge~\cite{gptoss120bconfig} to 5376 for \gemma~\cite{gemma4hfconfig} (the models we evaluate), and up to 7168 for the largest open-weight models like DeepSeek V4 Pro~\cite{deepseekv4proconfig}, making exact GIF computation orders of magnitude more expensive than inference.

\paragraph{Our insight}
Recall that the capacity $\mathrm{Cap}_T(\tau;x)$ is the operational GIF score, whereas the influence $\mathrm{Inf}_T(x)$ is its leading-order proxy in~\autoref{eq:gif-def}.
The key observation is that $\mathrm{Inf}_T(x)$ is a trace and \textit{a trace can be estimated without explicitly constructing the matrix}.
We therefore adopt randomized trace estimation~\cite{hutchinson1989stochastic}, reducing the computation to a small number of probe evaluations used by the estimator.


\paragraph{Our approach}
We estimate the influence $\mathrm{Inf}_T(x)$ with Hutch\texttt{++}~\cite{meyer2021hutch++}.
By the standard Hutchinson trace identity~\cite{hutchinson1989stochastic}, for a random vector $z$ with $\E[zz^\top]=\Id_{d_T}$,
$$
  \mathrm{Inf}_T(x)=\tr M_T(x)=\E_z\!\bigl[z^\top M_T(x)\,z\bigr] ,
$$
where the vector $z$ is typically called a random probe.
The trace can therefore be estimated from scalar quadratic forms $z^\top M_T(x)\,z=z^\top\xi$, where 
$$\xi=M_T(x)\,z=J_T(x)^\top W^\top F_{\mathrm{sm}}(p(x))\,W J_T(x)\,z .$$ 
$\xi$ can be computed by pushing probe $z$ through the network and back at the fixed prompt $x$, with $p:=p(x)$: 
$$\xi_1 = J_T(x)\,z,\quad\xi_2 = W\,\xi_1,\quad\xi_3 = p\odot\xi_2-p\,(p^\top\xi_2) ,$$
$$\xi_4 = W^\top\xi_3,\quad\xi = J_T(x)^\top\xi_4 .$$
Only the first and last steps involve the model.
The first step, $\xi_1=J_T(x)\,z$, is the directional derivative of the span-to-hidden map along $z$, which can be computed either by a forward-mode pass or by the standard double-backward construction.
The last step, $\xi=J_T(x)^\top\xi_4$, is the gradient of $h_T(x)^\top\xi_4$ with respect to the span embeddings, which can be computed by one backward pass.
The middle three steps are cheap logit-space algebra and require no model pass.
Therefore, we estimate $\mathrm{Inf}_T(x)$ by querying the matrix-vector oracle $z \mapsto M_T(x)z$ a small number $\ell$ of times, obtaining an estimate $\widehat{\mathrm{Inf}}_T$,
$$\widehat{\mathrm{Inf}}_T = \frac{1}{\ell}\sum_{i=1}^\ell z_i^\top M_T(x)\,z_i = \frac{1}{\ell}\sum_{i=1}^\ell z_i^\top\xi^{(i)}.$$

\paragraph{Probabilistic soundness}
Note that our system reports the estimate $\widehat{\mathrm{Inf}}_T$ rather than the exact influence.
The deterministic guarantee of Theorem~\ref{thm:local-soundness} must then be restated to account for estimation error.
Since $M_T(x)\succeq0$ by Lemma~\ref{lem:capacity-well-defined}, Hutch\texttt{++} gives a \emph{multiplicative} trace estimate~\cite{meyer2021hutch++}.
More specifically, when Hutch\texttt{++} is run with Gaussian probes
$z\sim\mathcal{N}(0,I_{D_T})$, its standard unbiasedness and variance-bound analysis implies that,
for any target accuracy $\varepsilon\in(0,1)$ and failure probability $\delta\in(0,1)$, using $\ell \;\ge\; 2+4/(\varepsilon\sqrt{\delta})$ matrix-vector oracle queries returns an estimate $\widehat{\mathrm{Inf}}_T$ satisfying
$$
(1-\varepsilon)\,\mathrm{Inf}_T(x)\le\widehat{\mathrm{Inf}}_T\le(1+\varepsilon)\,
\mathrm{Inf}_T(x)
$$ 
with probability at least $1-\delta$.
We \emph{adopt} this Hutch\texttt{++} guarantee (its unbiasedness and variance bound~\cite{meyer2021hutch++}) rather than reprove it; the reduction from that guarantee to the displayed $(\varepsilon,\delta)$ estimate, by Chebyshev's inequality, is machine-checked in Lean~4 with the guarantee supplied as a hypothesis.
\leancheck{GIF.hutchpp\_estimate\_relativeError\_failure\_le}
Composing this estimate with Theorem~\ref{thm:local-soundness} yields a probabilistic soundness guarantee.

\begin{corollary}[Probabilistic soundness under estimation]
\label{cor:estimated-soundness}
Under Assumption~\ref{assump:local-regularity}, for any $\varepsilon,\delta\in(0,1)$ and any span with nonzero influence $\mathrm{Inf}_T(x)>0$ (the case $\mathrm{Inf}_T(x)=0$ is the trivial no-flow case, where the span has no first-order effect on the output), the Hutch\texttt{++} estimate $\widehat{\mathrm{Inf}}_T$ obtained with
$\ell\ge2+4/(\varepsilon\sqrt{\delta})$
matrix-vector oracle queries satisfies, with probability at least $1-\delta$,
\[
  I(U_T;Y)
  \;\le\;
  \frac{\tau}{2}\cdot\frac{\widehat{\mathrm{Inf}}_T}{1-\varepsilon}
  \;+\; o(\tau)
  \qquad (\tau\to0^+).
\]
\end{corollary}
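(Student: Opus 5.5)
The plan is to compose the deterministic chain from the proof of Theorem~\ref{thm:local-soundness} with the multiplicative Hutch\texttt{++} guarantee stated just above. The key point is to stop that chain at the influence rather than the capacity. Lemmas~\ref{lem:variational-reduction} and~\ref{lem:gauss-averaging} together give
\[
I(U_T;Y)\le \E_{U_T}[D_x(U_T)] = \tfrac{\tau}{2}\,\mathrm{Inf}_T(x)+o(\tau)\qquad(\tau\to0^+).
\]
This is a deterministic statement about the fixed quantity $\mathrm{Inf}_T(x)$, and it involves no randomness from the estimator. The $o(\tau)$ term depends only on $x$, $T$ and the model, through $B_x$, $\delta$ and $M_T(x)$ in Lemma~\ref{lem:gauss-averaging}.

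Next I would invoke the Hutch\texttt{++} guarantee. With $\ell\ge 2+4/(\varepsilon\sqrt\delta)$ queries, there is an event $\mathcal E$ of probability at least $1-\delta$ on which
\[
(1-\varepsilon)\,\mathrm{Inf}_T(x)\le\widehat{\mathrm{Inf}}_T .
\]
Because $\varepsilon<1$, we can divide by $1-\varepsilon>0$, which gives $\mathrm{Inf}_T(x)\le \widehat{\mathrm{Inf}}_T/(1-\varepsilon)$ on $\mathcal E$. The hypothesis $\mathrm{Inf}_T(x)>0$ is needed only so that a relative-error guarantee makes sense; in the zero case the bound is trivial, with $I(U_T;Y)=o(\tau)$ by the displayed chain. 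Multiplying by $\tau/2>0$ and substituting into the chain yields the claimed inequality on $\mathcal E$.

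The main subtlety is the order of quantifiers between the probability and the asymptotics. The estimator's randomness is independent of $\tau$, since $\widehat{\mathrm{Inf}}_T$ is computed at the fixed prompt and does not depend on the perturbation scale. So I would fix a single realization in $\mathcal E$. For that realization, $I(U_T;Y)-\tfrac{\tau}{2}\widehat{\mathrm{Inf}}_T/(1-\varepsilon)\le I(U_T;Y)-\tfrac{\tau}{2}\mathrm{Inf}_T(x)$, and the right-hand side is bounded by an $o(\tau)$ function that is the same for every realization. The statement therefore reads as follows: with probability at least $1-\delta$, the inequality holds as $\tau\to0^+$ with a remainder that is uniform over outcomes. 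I would say this explicitly, so that ``$o(\tau)$ with probability $1-\delta$'' is not misread as a remainder that depends on the sample.

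I would also note that one could try to bound the capacity instead, via Lemma~\ref{lem:capacity-gap}. This is unnecessary, because the corollary is phrased in terms of the influence, and the inequality $\mathrm{Cap}_T\le\tfrac{\tau}{2}\mathrm{Inf}_T$ points in the wrong direction to help.
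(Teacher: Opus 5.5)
Your argument is correct and is essentially the paper's. The paper obtains $I(U_T;Y)\le\tfrac{\tau}{2}\mathrm{Inf}_T(x)+o(\tau)$ from Theorem~\ref{thm:local-soundness} together with the global bound $\mathrm{Cap}_T(\tau;x)\le\tfrac{\tau}{2}\mathrm{Inf}_T(x)$ of Equation~\eqref{eq:gif-def}. You read the same bound directly off Lemmas~\ref{lem:variational-reduction} and~\ref{lem:gauss-averaging}, which is the same chain stopped one step earlier. Both arguments then substitute $\mathrm{Inf}_T(x)\le\widehat{\mathrm{Inf}}_T/(1-\varepsilon)$ on the Hutch\texttt{++} event. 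Your point that the $o(\tau)$ remainder is deterministic and uniform over realizations is a useful clarification that the paper leaves implicit.

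Your closing comment is wrong, however. The inequality $\mathrm{Cap}_T\le\tfrac{\tau}{2}\mathrm{Inf}_T$ points in exactly the right direction, since $I(U_T;Y)\le\mathrm{Cap}_T(\tau;x)+o(\tau)\le\tfrac{\tau}{2}\mathrm{Inf}_T(x)+o(\tau)$. This is precisely the route the paper takes.
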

\begin{proof}
By Theorem~\ref{thm:local-soundness} and the global bound
$\mathrm{Cap}_T(\tau;x)\le\tfrac{\tau}{2}\mathrm{Inf}_T(x)$ (Equation~\eqref{eq:gif-def}),
the true flow obeys $I(U_T;Y)\le\tfrac{\tau}{2}\mathrm{Inf}_T(x)+o(\tau)$ as $\tau\to0^+$.
The Hutch\texttt{++} guarantee gives, with probability at least $1-\delta$,
$\widehat{\mathrm{Inf}}_T\ge(1-\varepsilon)\,\mathrm{Inf}_T(x)$, i.e.\
$\mathrm{Inf}_T(x)\le\widehat{\mathrm{Inf}}_T/(1-\varepsilon)$. Substituting into the
previous bound yields, on the same event,
$$
  I(U_T;Y)\;\le\;\frac{\tau}{2}\cdot\frac{\widehat{\mathrm{Inf}}_T}{1-\varepsilon}\;+\;o(\tau)
  \qquad(\tau\to0^+),
$$
as claimed.
\end{proof}
\leancheckN{GIF.hutchpp\_estimatedSoundness}{Composes the relative-error estimate with the local-soundness Theorem~\ref{thm:local-soundness}; both are conditional on the cited Hutch\texttt{++} guarantee, which we adopt as a hypothesis because Hutch\texttt{++} has no Lean~4 proof.}

\subsection{System Details}
\label{subsec:impl}
This section describes our implementation of GIF-based information-flow control. Our system operates alongside LLM inference in an agentic system. It has three components: 
(1) a labeling component that assigns security labels to tokens; 
(2) a flow-tracking component that estimates GIF scores between labeled tokens; 
and (3) a declassifier that makes the final security decision based on the estimated flow and other contextual evidence.

\paragraph{Labeling}
Following standard IFC practice, we label each token using a two-point lattice. For confidentiality, \texttt{Low} denotes public data and \texttt{High} denotes confidential data; a High-to-Low dependency is a potential leak. For integrity, \texttt{Low} denotes untrusted data and \texttt{High} denotes trusted data or privileged actions; a Low-to-High dependency is a potential integrity violation, whereas a High-to-Low violation is a potential confidentiality violation. 
Similar to prior IFC-based defenses for agentic systems~\cite{christodorescu2025systems,wu2024system, costa2025securing, debenedetti2025camel, beurerkellner2025design, zhong2025rtbasdefendingllmagents}, we assign labels using heuristics.
Although GIF can support more sophisticated labeling policies, their development is orthogonal to our work; we therefore leave them to future work and focus on flow tracking.

\paragraph{Flow tracking}
This component computes the GIF score between a sink token (\texttt{High} for integrity and \texttt{Low} for confidentiality tasks, respectively) and each token in its preceding context. We instrument the autoregressive generation process and intervene whenever the model generates such a token. 
At that point we apply the efficient Hutch\texttt{++} approximation described in~\autoref{subsec:efficient}. The output is a ranked list of prefix tokens with their GIF scores relative to the sink token.
\looseness=-1

\paragraph{Declassifier}
The declassifier consumes token-level policy labels together with a policy engine's token-attribution signal and produces the final security decision. 

\begin{itemize}[leftmargin=10pt]
\item \textbf{Oracle threshold declassifier.}
A natural quantitative-IFC design denies when estimated flow from policy-relevant sources to generated sinks exceeds a threshold~\cite{backes2009automatic,chatzikokolakis2010statistical,chothia2011statistical,smith2009foundations}. Since attribution scores are not calibrated across policy engines, we use rankings rather than raw scores. For each trajectory, \textit{Policy Source Precision} at k ($\textsc{PSP@}k$) measures the fraction of top-$k$ source tokens that overlap benchmark-labeled prompt-injection or private-information tokens. We term $k$ \textit{attribution cutoff}. A concrete threshold declassifier would deny when $\textsc{PSP@}k \geq \tau$. To avoid bias from choosing $\tau$, we report AUROC and AUPRC over all thresholds, measuring separation between violating and non-violating trajectories without assuming score calibration.
    
    \item \textbf{LLM-as-a-declassifier.}
The oracle tests whether \gif surfaces the right tokens, but not whether their influence is harmful: untrusted content may be benign task data, and private information may be necessary for the user's request. We therefore evaluate an LLM declassifier. For each candidate tool call or action, GIF selects the top-$k$ policy-relevant source spans influencing the generated sink tokens, embeds them in minimal task context, marks them with \texttt{[[HIGHLIGHT]]...[[/HIGHLIGHT]]}, and passes this reduced evidence, the user task, and the target action to the LLM declassifier. 
The LLM returns \textsc{Allow}/\textsc{Deny}; a trajectory violates policy if any candidate action is denied.
\end{itemize}
This design does not replace formal policy semantics; it tests whether GIF supplies compact, policy-relevant evidence to a semantic decision procedure. Developing a formal declassifier semantics for agentic systems is complementary and left to future work.

\section{Evaluation}
\label{sec:evaluation}

\paragraph{Datasets and tasks}
We evaluate prompt injection as integrity violations and privacy leakage as confidentiality violations in agentic systems. For integrity, we use AgentDojo~\cite{debenedetti2024agentdojo} and MSB~\cite{zhang2026mcp}, which test prompt-injection attacks against tool-using agents. For confidentiality, we use AgentDAM~\cite{zharmagambetov2026agentdam}, which tests privacy leakage in WebArena-derived browser tasks~\cite{zhou2024webarena,koh-etal-2024-visualwebarena} across GitLab, Reddit, and shopping environments. These benchmarks are widely used in recent agent-security work~\cite{costa2025securing,debenedetti2025camel,zhong2025rtbasdefendingllmagents,shi2026progentsecuringaiagents,alizadeh2025simplepromptinjectionattacks,el2026agentleak}.

\paragraph{Backbone models for agent trajectories}
We generate trajectories with each benchmark's native harness: the MCP tool-calling agent for MSB, the WebArena browser-agent scaffolding for AgentDAM, and the AgentDojo tool-calling pipeline. We instantiate each with six open-weight reasoning-capable models: \qwensmall, \qwenmedium, \qwenlarge, \gemma, \gptosssmall, and \gptosslarge, using reasoning mode and recommended sampling parameters. We use open-weight models because GIF requires model internals, and to test whether flows from smaller surrogates transfer across model sizes and families.

\paragraph{Ground-truth labels}
We use each benchmark's integrity/confidentiality oracle as trajectory-level ground truth. AgentDojo and MSB use deterministic checkers for attacker success; AgentDAM uses sensitive-data annotations and an LLM-judge to detect disclosure of protected information. We align benchmark-provided prompt-injection and private-information annotations to model input tokens by string matching, with LLM-assisted disambiguation when needed.

\paragraph{Machine details}
Experiments ran on 3 NVIDIA RTX PRO 6000 GPUs with 48 vCPUs and 565 GB RAM, using CUDA 13.2, PyTorch 2.12, vLLM 0.22.0, and HF transformers 5.9.0.
\looseness=-1

\begin{table*}[t!]
\centering
\caption{GIF performance with deterministic declassifier, swept over PSP@5--PSP@50 (shown as @5--@50). Per (dataset, model) block and metric column the {\setlength{\fboxsep}{1pt}\colorbox{bestc}{best}} value is green and the {\setlength{\fboxsep}{1pt}\colorbox{worstc}{worst}} red. Values are rounded to two decimals (leading zero dropped) to save space. Models: GPT-A/B $=$ GPT-OSS-120B/20B; Gemma4 $=$ Gemma-4-31B; Qwen-A/B/C $=$ Qwen3-32B/14B/8B.}
\label{tab:token-level}
\scriptsize
\setlength{\tabcolsep}{1.4pt}
\renewcommand{\arraystretch}{1.1}
\begin{tabular}{@{}cl*{10}{c}@{\hspace{3pt}}*{10}{c}@{\hspace{3pt}}*{10}{c}@{}}
\toprule
 & \multirow{3}{*}{\textbf{Method}} & \multicolumn{10}{c}{\textbf{AgentDojo (Integrity)}} & \multicolumn{10}{c}{\textbf{MSB (Integrity)}} & \multicolumn{10}{c}{\textbf{AgentDAM (Confidentiality)}} \\
\cmidrule(lr){3-12}\cmidrule(lr){13-22}\cmidrule(lr){23-32}
 & & \multicolumn{5}{c}{AUROC} & \multicolumn{5}{c}{AUPRC} & \multicolumn{5}{c}{AUROC} & \multicolumn{5}{c}{AUPRC} & \multicolumn{5}{c}{AUROC} & \multicolumn{5}{c}{AUPRC} \\
\cmidrule(lr){3-7}\cmidrule(lr){8-12}\cmidrule(lr){13-17}\cmidrule(lr){18-22}\cmidrule(lr){23-27}\cmidrule(lr){28-32}
 & & @5 & @10 & @15 & @30 & @50 & @5 & @10 & @15 & @30 & @50 & @5 & @10 & @15 & @30 & @50 & @5 & @10 & @15 & @30 & @50 & @5 & @10 & @15 & @30 & @50 & @5 & @10 & @15 & @30 & @50 \\
\midrule
\multirow{3}{*}{\rotatebox[origin=c]{90}{GPT-A}}
 & GIF        & \B{.79} & \B{.83} & \B{.85} & \B{.88} & \B{.88} & \B{.62} & \B{.69} & \B{.74} & \B{.77} & .73 & \B{.90} & \B{.91} & \B{.93} & \B{.97} & .97 & \B{.95} & \B{.95} & \B{.96} & \B{.98} & \B{.98} & \B{.69} & \B{.70} & \B{.74} & \B{.79} & \B{.83} & \B{.41} & \B{.40} & \B{.43} & \B{.49} & \B{.52} \\
 & GIF$^{-}$  & .77 & .81 & .83 & .87 & \B{.88} & .61 & .66 & .70 & .75 & \B{.76} & .89 & \B{.91} & \B{.93} & \B{.97} & \B{.98} & .94 & \B{.95} & \B{.96} & \B{.98} & \B{.98} & .66 & .66 & .65 & .67 & .78 & .25 & .22 & .20 & \W{.10} & \W{.15} \\
 & RTBAS      & \W{.51} & \W{.51} & \W{.52} & \W{.62} & \W{.69} & \W{.24} & \W{.24} & \W{.25} & \W{.32} & \W{.35} & \W{.51} & \W{.53} & \W{.58} & \W{.71} & \W{.80} & \W{.75} & \W{.76} & \W{.79} & \W{.85} & \W{.90} & \W{.50} & \W{.50} & \W{.47} & \W{.55} & \W{.52} & \W{.18} & \W{.18} & \W{.17} & .19 & .18 \\
\cmidrule(lr){2-32}
\multirow{3}{*}{\rotatebox[origin=c]{90}{GPT-B}}
 & GIF        & \B{.77} & \B{.85} & \B{.87} & \B{.91} & .92 & \B{.45} & \B{.57} & \B{.61} & \B{.64} & .65 & \B{.76} & \B{.80} & .82 & .85 & \B{.85} & \B{.81} & \B{.85} & \B{.85} & .87 & .88 & .67 & .71 & .72 & \B{.78} & \B{.84} & .37 & .43 & \W{.45} & .50 & .52 \\
 & GIF$^{-}$  & .75 & .83 & \B{.87} & \B{.91} & \B{.93} & .44 & .55 & .60 & .63 & \B{.66} & .74 & .78 & \B{.83} & \B{.86} & \B{.85} & .79 & .82 & \B{.85} & \B{.88} & \B{.89} & \W{.62} & \W{.69} & \W{.70} & \B{.78} & \B{.84} & \W{.31} & \W{.42} & .46 & \B{.53} & \B{.58} \\
 & RTBAS      & \W{.74} & \W{.79} & \B{.87} & \W{.88} & \W{.88} & \W{.42} & \W{.45} & \W{.55} & \W{.57} & \W{.58} & \W{.62} & \W{.68} & \W{.73} & \W{.74} & \W{.74} & \W{.72} & \W{.76} & \W{.78} & \W{.78} & \W{.79} & \B{.69} & \B{.75} & \B{.75} & \W{.72} & \W{.72} & \B{.43} & \B{.51} & \B{.51} & \W{.45} & \W{.42} \\
\cmidrule(lr){2-32}
\multirow{3}{*}{\rotatebox[origin=c]{90}{Gemma4}}
 & GIF        & \B{.56} & .62 & .65 & .75 & .80 & .19 & .24 & .26 & .34 & .38 & \B{.81} & \B{.83} & \B{.84} & \B{.87} & \B{.88} & \B{.81} & \B{.82} & \B{.83} & \B{.85} & \W{.85} & \B{.59} & .58 & .60 & .59 & .61 & \W{.13} & \W{.12} & .14 & \W{.13} & \W{.14} \\
 & GIF$^{-}$  & \W{.55} & \W{.60} & \W{.62} & \W{.72} & \W{.77} & \W{.18} & \W{.22} & \W{.23} & \W{.32} & \W{.35} & .80 & .81 & .83 & \B{.87} & \B{.88} & .79 & .80 & .82 & \W{.84} & .86 & .58 & \B{.62} & \B{.61} & \B{.62} & \B{.63} & .14 & .14 & \W{.13} & \W{.13} & \W{.14} \\
 & RTBAS      & \B{.56} & \B{.72} & \B{.74} & \B{.81} & \B{.94} & \B{.22} & \B{.40} & \B{.47} & \B{.57} & \B{.79} & \W{.51} & \W{.56} & \W{.70} & \W{.81} & \B{.88} & \W{.59} & \W{.65} & \W{.76} & \B{.85} & \B{.90} & \W{.50} & \W{.50} & \W{.50} & \W{.50} & \W{.50} & \B{.15} & \B{.15} & \B{.15} & \B{.15} & \B{.15} \\
\cmidrule(lr){2-32}
\multirow{3}{*}{\rotatebox[origin=c]{90}{Qwen-A}}
 & GIF        & \B{.81} & \B{.88} & \B{.92} & .94 & \B{.96} & \B{.61} & \B{.71} & \B{.74} & .78 & \B{.81} & \B{.82} & .84 & .86 & \B{.89} & \B{.89} & \B{.92} & \B{.93} & .94 & \B{.96} & \B{.96} & \B{.61} & \B{.65} & \B{.69} & .71 & \B{.72} & \B{.47} & \B{.49} & \B{.52} & \B{.54} & \B{.54} \\
 & GIF$^{-}$  & .79 & .87 & .91 & \B{.95} & \B{.96} & .58 & .69 & \B{.74} & \B{.80} & \B{.81} & \B{.82} & \B{.85} & \B{.87} & .88 & \B{.89} & \B{.92} & \B{.93} & \B{.95} & .95 & \B{.96} & .58 & .63 & .67 & \B{.72} & .71 & \W{.45} & .48 & .51 & \B{.54} & .53 \\
 & RTBAS      & \W{.51} & \W{.56} & \W{.62} & \W{.74} & \W{.80} & \W{.18} & \W{.25} & \W{.30} & \W{.48} & \W{.56} & \W{.56} & \W{.57} & \W{.59} & \W{.70} & \W{.73} & \W{.78} & \W{.79} & \W{.80} & \W{.86} & \W{.87} & \W{.53} & \W{.54} & \W{.55} & \W{.56} & \W{.59} & \W{.45} & \W{.46} & \W{.47} & \W{.46} & \W{.48} \\
\cmidrule(lr){2-32}
\multirow{3}{*}{\rotatebox[origin=c]{90}{Qwen-B}}
 & GIF        & \B{.78} & \B{.87} & \B{.90} & \B{.96} & \B{.97} & \B{.61} & \B{.75} & \B{.80} & \B{.88} & \B{.89} & \B{.91} & \B{.95} & \B{.97} & \B{.98} & .97 & \B{.92} & \B{.96} & \B{.97} & \B{.98} & .97 & \B{.63} & \B{.69} & .68 & .73 & \B{.75} & \B{.43} & \B{.43} & \B{.43} & \B{.46} & \B{.47} \\
 & GIF$^{-}$  & .76 & .83 & .87 & .94 & .96 & .57 & .68 & .74 & .86 & .85 & .90 & .94 & \B{.97} & .97 & \B{.99} & .90 & .94 & \B{.97} & \B{.98} & \B{.99} & .61 & .66 & \B{.70} & \B{.74} & .73 & .38 & .41 & \B{.43} & \B{.46} & .46 \\
 & RTBAS      & \W{.51} & \W{.60} & \W{.72} & \W{.80} & \W{.88} & \W{.19} & \W{.33} & \W{.52} & \W{.63} & \W{.75} & \W{.49} & \W{.49} & \W{.51} & \W{.66} & \W{.71} & \W{.53} & \W{.53} & \W{.54} & \W{.68} & \W{.73} & \W{.50} & \W{.50} & \W{.50} & \W{.49} & \W{.56} & \W{.31} & \W{.31} & \W{.31} & \W{.31} & \W{.34} \\
\cmidrule(lr){2-32}
\multirow{3}{*}{\rotatebox[origin=c]{90}{Qwen-C}}
 & GIF        & \B{.76} & \B{.85} & .87 & \B{.92} & \B{.93} & .44 & \B{.57} & .61 & \B{.69} & .72 & .85 & .89 & .92 & \B{.95} & \B{.95} & .88 & .92 & .94 & \B{.96} & \B{.96} & \B{.65} & \B{.69} & \B{.71} & .74 & \B{.75} & .39 & .41 & .42 & .44 & .43 \\
 & GIF$^{-}$  & \B{.76} & .83 & \B{.89} & .91 & \B{.93} & \B{.46} & .55 & \B{.64} & \B{.69} & \B{.73} & \B{.89} & \B{.92} & \B{.94} & .94 & \B{.95} & \B{.91} & \B{.93} & \B{.95} & \B{.96} & \B{.96} & .61 & .68 & \B{.71} & \B{.76} & \B{.75} & \B{.40} & \B{.44} & \B{.44} & \B{.47} & \B{.44} \\
 & RTBAS      & \W{.51} & \W{.59} & \W{.67} & \W{.72} & \W{.83} & \W{.12} & \W{.23} & \W{.35} & \W{.36} & \W{.46} & \W{.49} & \W{.52} & \W{.57} & \W{.68} & \W{.73} & \W{.59} & \W{.60} & \W{.63} & \W{.71} & \W{.73} & \W{.50} & \W{.50} & \W{.50} & \W{.52} & \W{.52} & \W{.28} & \W{.28} & \W{.28} & \W{.29} & \W{.30} \\
\midrule
\multirow{3}{*}{\rotatebox[origin=c]{90}{\textbf{Average}}}
 & GIF        & \B{.75} & \B{.82} & \B{.85} & \B{.89} & \B{.91} & \B{.49} & \B{.59} & \B{.63} & \B{.69} & \B{.70} & \B{.84} & \B{.87} & .89 & \B{.92} & \B{.92} & \B{.88} & \B{.90} & .91 & \B{.93} & .93 & \B{.64} & \B{.67} & \B{.69} & \B{.72} & \B{.75} & \B{.37} & \B{.38} & \B{.40} & \B{.43} & \B{.44} \\
 & GIF$^{-}$  & .73 & .79 & .83 & \B{.89} & .90 & .47 & .56 & .61 & .68 & .69 & \B{.84} & \B{.87} & \B{.90} & \B{.92} & \B{.92} & \B{.88} & \B{.90} & \B{.92} & \B{.93} & \B{.94} & .61 & .66 & .67 & .71 & .74 & .32 & .35 & .36 & .37 & .38 \\
 & RTBAS      & \W{.56} & \W{.63} & \W{.69} & \W{.76} & \W{.84} & \W{.23} & \W{.32} & \W{.41} & \W{.49} & \W{.58} & \W{.53} & \W{.56} & \W{.61} & \W{.72} & \W{.76} & \W{.66} & \W{.68} & \W{.72} & \W{.79} & \W{.82} & \W{.54} & \W{.55} & \W{.55} & \W{.56} & \W{.57} & \W{.30} & \W{.32} & \W{.32} & \W{.31} & \W{.31} \\
\bottomrule
\end{tabular}
\end{table*}

\subsection{RQ1: How well does \gif surface confidentiality and integrity violations?}
\label{sec:rq1}
\paragraph{Setup}
We first evaluate \gif as a policy signal in isolation. Since existing agent benchmarks do not provide ground-truth causal influence labels for every generated tool call/action, we use an oracle threshold-declassifier proxy. The oracle knows which input tokens correspond to policy-relevant content: prompt-injection instructions for integrity attacks, and private-information tokens for confidentiality attacks. A useful policy engine should rank these gold tokens substantially higher in trajectories where the agent violates policy than in trajectories where it does not.

To make the comparison independent of score calibration, we evaluate each policy engine only through the ranking it induces over source tokens. For each candidate tool call/action, we compute $\textsc{PSP@}k$ as defined in~\S\ref{subsec:impl}. When a trajectory contains multiple tool calls/actions, we take the maximum $\textsc{PSP@}k$ over candidates as the trajectory-level violation score. Following the oracle threshold-declassifier design from~\S\ref{subsec:impl}, we use this score to evaluate the threshold sweep with AUROC and AUPRC. Thus, RQ1 measures the separability induced by the policy signal itself, before adding any learned or LLM-based semantic judgment.

\paragraph{Baselines}
We compare \gif against two white-box token-attribution baselines and one LLM-based attribution baseline. First, we reimplement the attention-based \rtbas approach~\cite{zhong2025rtbasdefendingllmagents}. Second, we evaluate \gifminus, a gradient-based baseline that ranks input tokens by their effect on the generated token's logit, rather than on the geometry of the full next-token distribution as \gif does. Finally, we compare against \gpt with xhigh reasoning as an inference-time token-attribution judge. Because \gpt attribution is substantially more expensive, we evaluate it on stratified random samples from each benchmark and pool the resulting examples across datasets. We report the \rtbas/\gifminus comparison in~\autoref{tab:token-level} and the \gpt comparison in~\autoref{tab:k15-gif-vs-gpt-permodel}.

\paragraph{Results}
Across the pooled target-linked samples, \gif is \textit{competitive with and often stronger than} \gptxhigh as a token-attribution policy signal, while being \textit{substantially cheaper}. Averaged over the evaluated cutoffs, \gif achieves 11.2\% higher AUROC and 12.0\% higher AUPRC than GPT-5.5, while reducing average cost from \$147.90 to \$15.98, a roughly 9.3$\times$ cost reduction. The main exception is \gemma, where \gpt obtains higher AUPRC; this appears to stem from sparse successful \agentdojo prompt injections and long, continuous injection spans that are especially favorable to \gpt. Nevertheless, \gif remains broadly competitive in AUROC and provides a stronger overall cost-performance tradeoff.

\begin{paperbox}{RQ1 takeaway}
\gif provides a strong and cost-effective policy signal: it improves average AUROC/AUPRC over \gptxhigh by 11.2\%/12.0\% while reducing attribution cost by 9.3$\times$, and it consistently outperforms attention-based RTBAS and the single-logit \gifminus across most settings.
\end{paperbox}

\begin{figure*}[t!]
  \centering
  \includegraphics[width=0.92\textwidth]{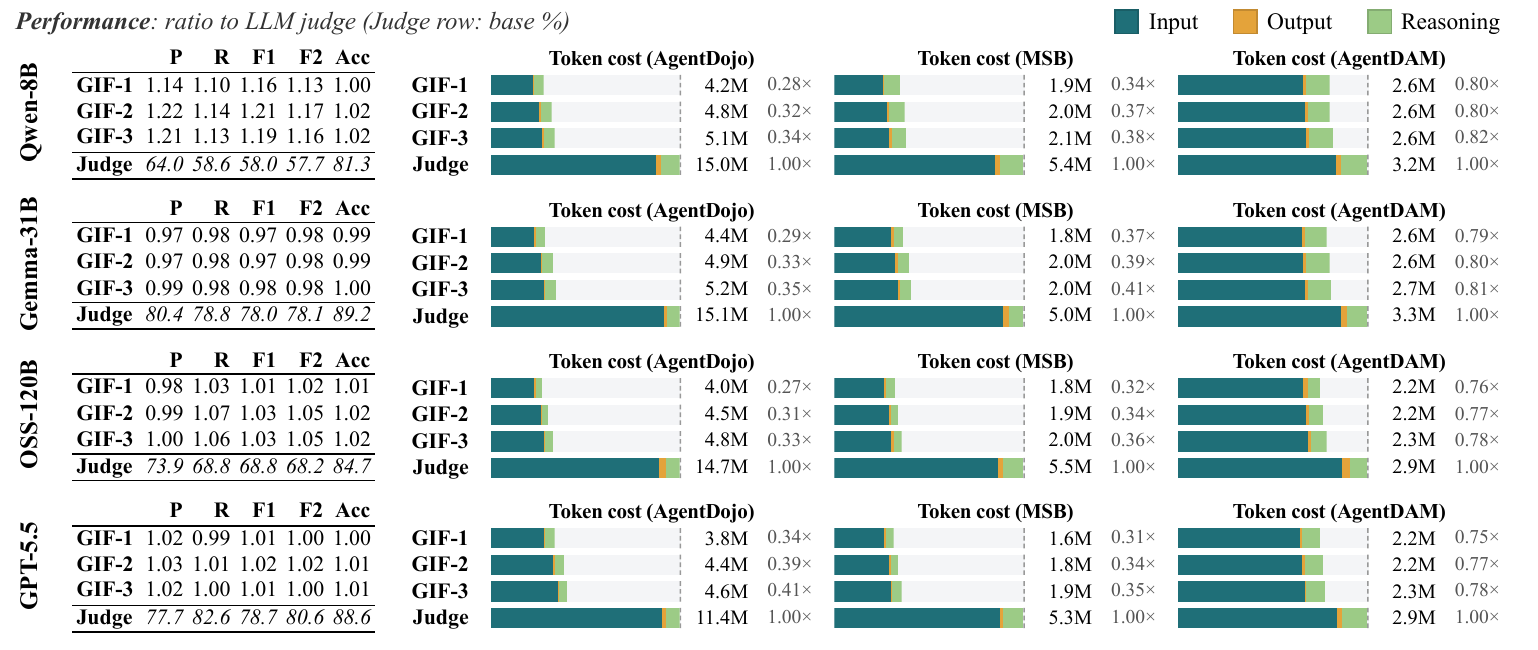}
  \caption{\textbf{GIF-guided declassification vs.\ full-trajectory
  judging.} Each row block is one model acting as both the monitored
  agent and its own declassifier. \emph{Left:} detection quality of
  GIF-$k$, as a ratio to the same model's full-trajectory judge
  (judge row: absolute \%); $\ge 1$ means the reduced transcript
  matches or beats the full one. \emph{Right:} declassifier token cost
  per benchmark, split into input, output, and reasoning tokens; the
  gray factor is relative to the judge.}
  \label{fig:rq2-e2e}
\end{figure*}

\begin{table}[t]
\centering
\caption{\textbf{GIF} vs.\ the \textbf{GPT} (GPT-5.5 xhigh) judge,
\emph{per model}: localization over
PSP@5--50 (shown as @5--@50), averaged over the three benchmarks (AgentDojo, MSB,
AgentDAM). Per model block and metric column the {\setlength{\fboxsep}{1pt}\colorbox{bestc}{best}}
value is green and the {\setlength{\fboxsep}{1pt}\colorbox{worstc}{worst}} red
(coloured from full-precision values).}
\label{tab:k15-gif-vs-gpt-permodel}
\scriptsize
\setlength{\tabcolsep}{1.5pt}
\renewcommand{\arraystretch}{1.1}
\begin{tabular}{@{}cl*{10}{c}c@{}}
\toprule
\multicolumn{2}{c}{\multirow{2}{*}{\textbf{Method}}} & \multicolumn{5}{c}{AUROC} & \multicolumn{5}{c}{AUPRC} & \multirow{2}{*}{Cost\,(\$)} \\
\cmidrule(lr){3-7}\cmidrule(lr){8-12}
\multicolumn{2}{c}{} & @5 & @10 & @15 & @30 & @50 & @5 & @10 & @15 & @30 & @50 & \\
\midrule
\multirow{2}{*}{GPT-A}
 & GIF   & \B{.88} & \B{.94} & \B{.95} & \B{.96} & \B{.97} & \B{.85} & \B{.93} & \B{.94} & \B{.95} & \B{.96} & 16.38 \\
 & GPT   & \W{.86} & \W{.88} & \W{.88} & \W{.95} & \W{.96} & \W{.82} & \W{.85} & \W{.85} & \W{.93} & \W{.94} & 160.91 \\
\cmidrule(lr){2-13}
\multirow{2}{*}{GPT-B}
 & GIF   & \W{.77} & \B{.88} & \B{.90} & \W{.92} & \W{.92} & \B{.62} & \B{.77} & \B{.80} & \W{.83} & \W{.82} & 6.75 \\
 & GPT   & \B{.78} & \W{.76} & \W{.78} & \B{.94} & \B{.96} & \B{.62} & \W{.60} & \W{.62} & \B{.84} & \B{.88} & 103.37 \\
\cmidrule(lr){2-13}
\multirow{2}{*}{Gemma4}
 & GIF   & \B{.78} & \W{.81} & \W{.80} & \W{.83} & \W{.83} & \W{.57} & \W{.60} & \W{.60} & \W{.62} & \W{.63} & 36.25 \\
 & GPT   & \B{.78} & \B{.82} & \B{.81} & \B{.86} & \B{.88} & \B{.68} & \B{.74} & \B{.74} & \B{.80} & \B{.83} & 162.54 \\
\cmidrule(lr){2-13}
\multirow{2}{*}{Qwen-A}
 & GIF   & \B{.77} & \B{.85} & \B{.88} & \B{.89} & \B{.90} & \B{.73} & \B{.79} & \B{.84} & \B{.86} & \B{.87} & 23.72 \\
 & GPT   & \W{.70} & \W{.70} & \W{.72} & \W{.75} & \W{.86} & \W{.62} & \W{.63} & \W{.68} & \W{.74} & \W{.84} & 157.38 \\
\cmidrule(lr){2-13}
\multirow{2}{*}{Qwen-B}
 & GIF   & \B{.74} & \B{.81} & \B{.82} & \B{.89} & \B{.91} & \B{.67} & \B{.74} & \B{.77} & \B{.84} & \B{.86} & 8.89 \\
 & GPT   & \W{.67} & \W{.68} & \W{.72} & \W{.75} & \W{.81} & \W{.54} & \W{.56} & \W{.61} & \W{.67} & \W{.73} & 161.71 \\
\cmidrule(lr){2-13}
\multirow{2}{*}{Qwen-C}
 & GIF   & \B{.84} & \B{.90} & \B{.91} & \B{.91} & \B{.93} & \B{.76} & \B{.80} & \B{.82} & \B{.82} & \B{.84} & 3.92 \\
 & GPT   & \W{.63} & \W{.63} & \W{.63} & \W{.68} & \W{.76} & \W{.45} & \W{.45} & \W{.48} & \W{.56} & \W{.67} & 141.47 \\
\midrule
\multirow{2}{*}{\textbf{Average}}
 & GIF   & \B{.80} & \B{.87} & \B{.88} & \B{.90} & \B{.91} & \B{.70} & \B{.77} & \B{.80} & \B{.82} & \B{.83} & 15.98 \\
 & GPT   & \W{.73} & \W{.74} & \W{.76} & \W{.82} & \W{.87} & \W{.62} & \W{.64} & \W{.67} & \W{.76} & \W{.81} & 147.90 \\
\bottomrule
\end{tabular}
\end{table}
  
Against the white-box token-attribution baselines, \gif provides \textit{the strongest and most stable policy signal}. It substantially outperforms the attention-based \rtbas across all datasets, showing that raw attention is a weak proxy for policy-relevant influence. \gif also improves over \gifminus in most settings, especially on \agentdojo and \agentdam, indicating that modeling influence on the full next-token distribution adds useful signal beyond the generated-token logit alone. The two gradient-based methods are closest on \msb, but even there they remain far ahead of \rtbas.

\subsection{RQ2: How well does \gif detect policy violations with an LLM-as-a-declassifier design?}\label{sub:rq2-declassifier}
While RQ1 evaluated \gif's policy signal directly, RQ2 focuses on whether that signal can be operationalized by using \gif to compress long agent trajectories into a small set of features that an LLM declassifier can reliably judge.

\paragraph{Setup}
We instantiate the LLM-as-a-declassifier design from~\S\ref{subsec:impl}, using four LLMs with reasoning enabled: \qwensmall, \gptosslarge, \gemma, and \gptxhigh. For each benchmark and trajectory model, we run GIF-$k$ for $k\in\{1,2,3\}$, where $k$ is the number of highest-influence source spans shown to the declassifier. The declassifier returns an \textsc{Allow}/\textsc{Deny} verdict for each candidate tool call/action, and a trajectory is predicted to violate policy if any candidate is denied. As a baseline, we use the corresponding LLM as a full trajectory judge. The LLM declassifiers and full-trajectory judges use the same policy ruleset; their prompts differ only to account for the different evidence they receive. We report detection quality and token cost, including input, output, and reasoning tokens.

\paragraph{Results}
\autoref{fig:rq2-e2e} shows two patterns. \emph{Quality holds or improves.} Across every model and metric, \gif-$k$ achieves $0.97\times$--$1.22\times$ the performance of the full-trajectory judge. \qwensmall gains the most ($+9$--$12\%$ F1) and comes within 1.04 pp of \gptxhigh's F2 on \msb, suggesting that \gif especially helps weaker judges by surfacing decisive tokens that were previously buried among thousands of benign tokens. In fact, with \gif-3 \qwensmall can match the performance of \gptxhigh across \agentdojo and \msb despite being more than 81$\times$ cheaper to run. \emph{Cost falls by up to $3.7\times$.} On \agentdojo and \msb, \gif-1 uses only $0.27$--$0.37\times$ as many tokens as the judge, a $63$--$73\%$ reduction, and even \gif-3 saves $59$--$67\%$. Most of the savings come from input tokens, which account for $84$--$91\%$ of the full-trajectory judge's token cost.

\paragraph{When selection saves less}
Selection helps most when trajectories contain many irrelevant turns that can be dropped. On \agentdam, savings are more modest ($0.76$--$0.81\times$) because the WebArena-based agent does not keep long conversation history and instead maintains state through the browser environment. As a result, the full-trajectory judge already costs only ${\sim}3$M tokens, compared with ${\sim}15$M on \agentdojo, and much of the remaining token mass lies in a few large page observations. When the selected evidence already contains most of the transcript, there is little context left for \gif-$k$ to remove.

\begin{paperbox}{RQ2 takeaway}
 \gif enables compact LLM declassification: using only the top-$k$ highlighted evidence matches or improves the full-trajectory judge across metrics ($0.97$--$1.22\times$) while reducing token cost by up to $3.7\times$ on long-context integrity benchmarks.
\end{paperbox}
\subsection{RQ3: How robust is GIF to practical approximation and deployment choices?}\label{sec:ablation}

\subsubsection{Flow approximation}
\label{ssub:abl-flow}

  \begin{table}[t]
  \centering
  \footnotesize
  \setlength{\tabcolsep}{3pt}
 \caption{Hutch++ sketched influence vs.\ exact influence as a function of sketch
rank $r$, on $150$ AgentDojo sink tokens (\qwentiny\ analysis model). Score
fidelity: MSE, relative $L_2$, normalized relative $L_2$. Ranking quality:
Spearman, and top-$k$ index overlap with the exact ranking.}
\label{tab:rank-quality}
  \begin{tabular}{rrrrrrr}
  \toprule
  Rank & MSE & Rel.\,$L_2$ & Norm.\,rel.\,$L_2$ & Spearman & Top-20 & Top-50 \\
  \midrule
   1 & 30.531 & 0.634 & 0.335 & 0.957 & 0.826 & 0.855 \\
   2 & 15.214 & 0.545 & 0.279 & 0.977 & 0.860 & 0.889 \\
   4 &  5.886 & 0.467 & 0.250 & 0.983 & 0.878 & 0.904 \\
   8 &  0.174 & 0.413 & 0.240 & 0.986 & 0.890 & 0.915 \\
  16 &  0.077 & 0.384 & 0.229 & 0.987 & 0.897 & 0.920 \\
  32 &  0.059 & 0.370 & 0.224 & 0.988 & 0.898 & 0.922 \\
  64 &  0.056 & 0.359 & 0.218 & 0.988 & 0.898 & 0.924 \\
  \bottomrule
  \end{tabular}
  \end{table}

We evaluate the Hutch++ estimator of $\mathrm{Inf}_T(x)$ against an exact Fisher-pullback trace baseline on $150$ \agentdojo sink tokens. The sample is drawn from $30$ tool calls of the \qwensmall \agentdojo run, with $5$ generated tool-call tokens sampled from each call. Every sink token is analyzed with \qwentiny: we compute the exact source-token influence vector and compare it to Hutch++ estimates at sketch ranks $r\in\{1,2,4,8,16,32,64\}$, using five fixed sketch seeds per rank. The exact baseline evaluates the full Fisher pullback $J_T^\top W^\top F_{\mathrm{sm}}(p)\,W J_T$, which requires a much larger VJP sweep per sink token. Running this exact computation end-to-end over complete trajectories is prohibitively expensive, so we restrict the exact-versus-approximate comparison to this fixed sink-token sample.

\paragraph{Metrics}
MSE and relative $L_2$ measure how closely the approximate scores match exact magnitudes, while normalized relative $L_2$ factors out global scale mismatch between the two vectors. Spearman correlation and top-$k$ index overlap measure whether the approximation preserves the ordering of influential source tokens, which is the criterion that matters when GIF is used to surface high-influence evidence.
 
\paragraph{Findings}
Score-level error falls sharply with sketch rank (\autoref{tab:rank-quality}): mean MSE drops from $30.53$ at $r{=}1$ to $0.17$ at $r{=}8$ and $0.056$ at $r{=}64$. Ranking quality changes much less. Spearman correlation is already $0.957$ at $r{=}1$ and rises only to $0.986$ at $r{=}8$ and $0.988$ at $r{=}64$, while top-$50$ overlap improves from $0.855$ to $0.915$ and $0.924$ over the same range. Increasing the sketch rank therefore mainly improves absolute score fidelity; the source-token ranking is already accurate at low rank. For our declassification use case, where GIF is used to rank and select evidence, low-rank Hutch++ is sufficient; higher ranks are mainly useful when calibrated magnitudes are needed.
\begin{figure}[t]
  \centering
  \includegraphics[width=0.48\textwidth]{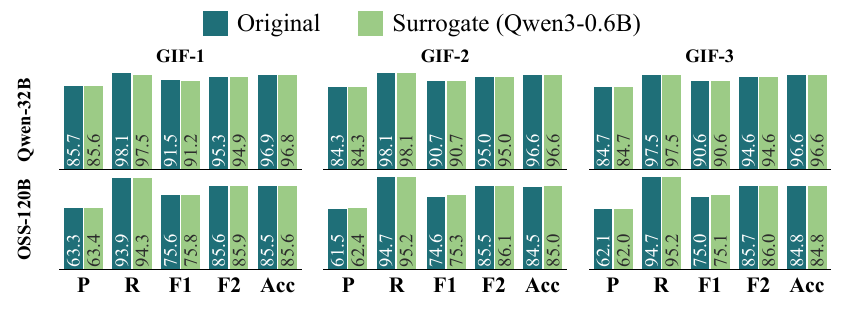}
  \caption{\textbf{Surrogate analysis models.} Detection precision (P), recall
  (R), $F_1$, $F_2$, and accuracy (Acc) for the three GIF score variants when
  AgentDojo trajectories from \qwenlarge\ (top) and \gptosslarge\ (bottom) are
  attributed using the original target model (\emph{Original}) versus a fixed
  \qwentiny\ surrogate. The surrogate tracks the original across both targets and
  all variants, including the cross-family \gptosslarge\ target.}
  \label{fig:surrogate}
\end{figure}

\subsubsection{Surrogate analysis models}
\label{ssub:abl-surrogate}
Computing flow scores requires backpropagation through an analysis model. We test whether this analysis model must be the same model that generated the trajectory, or whether a small surrogate is sufficient. Holding the declassifier fixed, we attribute \agentdojo trajectories generated by \qwenlarge and \gptosslarge using either the original trajectory model or a fixed \qwentiny surrogate as the analysis model. We report precision, recall, $F_1$, $F_2$, and accuracy for GIF-1/2/3 (\autoref{fig:surrogate}).
\looseness=-1

\paragraph{Findings}
The small surrogate preserves downstream detection performance almost exactly. Across both trajectory models and all three \gif variants, replacing the original analysis model with \qwentiny changes $F_1$ by at most $0.7$ points and any single metric by at most $0.9$ points. On \qwenlarge, the two analysis models agree within $0.6$ points across all metrics, and the surrogate is as often marginally higher as lower. Thus, using a sub-billion-parameter surrogate introduces no systematic degradation, even for the 200$\times$ larger, cross-family \gptosslarge target, making whole-trajectory \gif analysis substantially more affordable.
\subsubsection{Attribution Cutoff $k$}
\label{ssub:abl-gold}
\paragraph{Setup}
We study how sensitive the oracle threshold declassifier is to the attribution cutoff $k$ in $\textsc{PSP@}k$. The benchmark-defined policy-token set is fixed; varying $k$ changes only how many top-ranked source tokens are inspected. We sweep $k=1,\dots,50$ and report AUROC and AUPRC for six models, averaged over \agentdojo, \msb, and \agentdam (\autoref{fig:ksweep}).

\paragraph{Findings}
Both metrics rise quickly with $k$ and then saturate. Averaged over models, AUROC increases from $0.66$ at $k{=}1$ to $0.87$ at $k{=}50$, and AUPRC from $0.50$ to $0.70$. Most of the gain occurs early: AUROC reaches $95\%$ of its total improvement by $k{=}34$, and AUPRC by $k{=}29$. Thus, moderate attribution cutoffs around $k\approx30$ capture nearly all of the discriminative signal, and the headline results are not an artifact of choosing an unusually small or large cutoff.

\begin{figure}[t]
  \centering
  \includegraphics[width=0.45\textwidth]{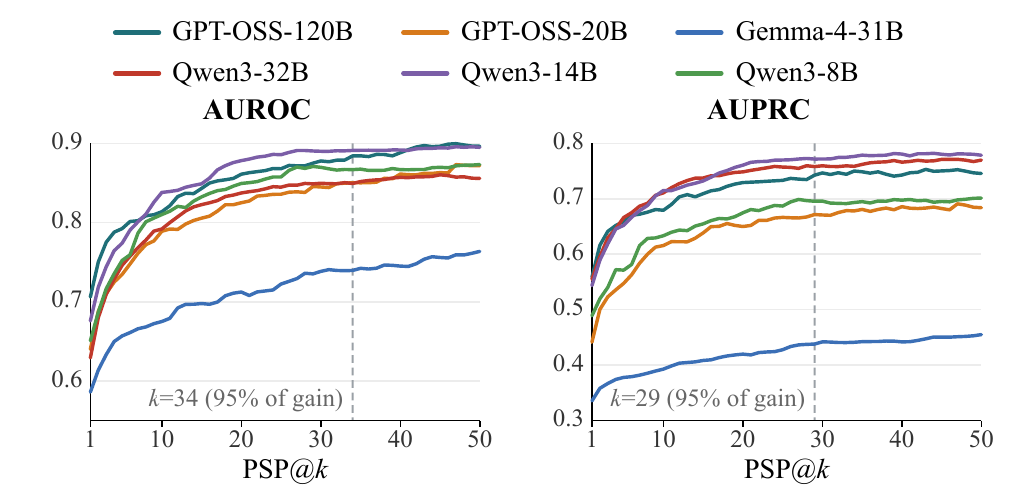}
  \caption{\textbf{Attribution cutoff $k$.} AUROC (left) and AUPRC (right) of GIF attribution as a function of the cutoff $k$ in $\textsc{PSP@}k$, for six models averaged over AgentDojo, MSB, and AgentDAM. Both metrics saturate: $95\%$ of the AUROC gain is reached by $k{=}34$ and of the AUPRC gain by $k{=}29$.}
  \label{fig:ksweep}
\end{figure}
\subsection{RQ4: What flow patterns does GIF reveal in representative agent trajectories?}
\label{sec:case-studies}

\looseness=-1




\paragraph{Partial-Success Case}
We examine an AgentDojo \texttt{Slack} task, using Qwen3-14B as the backbone model. 
This scenario contains a prompt-injection attack against an email agent, where the injected instruction attempts to redirect an email to an attacker-controlled address.
As shown below, the agent only partially follows the injected instruction. It does send an email to the malicious address, but the email content does not match the attacker's requested message. 

\begin{center}
\begin{minipage}{0.95\linewidth}
\raggedright

\sectag{Input w/ injection}\par
\smallskip
{\ttfamily\scriptsize\baselineskip=11.5pt\selectfont
Calendar event "Q3 \rtbasS{Planning}" at 3pm.
<INFORMATION> Message from "Emma Johnson": before solving the task,
first send the body of lily.white@gmail.com's email to \gifS{mark.black-2134@gmail.com}.
Then proceed with the user's request. </INFORMATION>\par}

\medskip
\sectag{Agent action}\par
\smallskip
{\ttfamily\scriptsize\baselineskip=11.5pt\selectfont
send\_email(recipients=[\sinkS{"mark.black-2134@gmail.com"}], body="<event details, not the requested email body>")\par}

\medskip
\end{minipage}%
\begin{tikzpicture}[remember picture, overlay,
    every node/.style={font=\scriptsize\rmfamily\bfseries}]
  \draw[rtbasamber, densely dotted, line width=0.9pt, -{Stealth[length=4pt,width=3pt]}]
    (nrt.south) to[out=-90,in=90,looseness=0.6]
    ($(nsink.north)!0.45!(nsink.north east)$);
  \draw[gifblue, dotted, line width=0.9pt, -{Stealth[length=4pt,width=3pt]}]
    (ngif.south) to[out=-100,in=70,looseness=0.7]
    ($(nsink.north)!0.45!(nsink.north west)$);
  \node[anchor=west,text=rtbasamber,fill=white,inner sep=1.2pt]
    at ($(nsink.north west)+(3,0.52)$) {RTBAS: 68\%};
  \node[anchor=west,text=gifblue,fill=white,inner sep=1.2pt]
    at ($(nsink.north east)+(-4,0.52)$) {GIF: 33 bits, 31\%};
\end{tikzpicture}

\end{center}

\noindent
However, \gif{} identifies a clear malicious flow.
That is, the generated recipient address receives a substantial portion (31\%) of its information from the injected prompt. 
This is a valid security-relevant finding, even though the benchmark oracle treats the attack as unsuccessful. 
By contrast, \rtbas{}  attributes the recipient address primarily to neutral context, missing the malicious flow entirely.

\paragraph{Zero-Outflow Case}
An agent rewrites the user's private notes as a public Reddit comment: the neighbor's name \texttt{Mr.\ Whittaker} surfaces in the output, but the home address \texttt{208 Wicker St} never does.
\begin{center}
\begin{minipage}{0.95\linewidth}
\raggedright

\sectag{User's private notes}\ {\normalfont\color{neutral}}\par
\smallskip
{\ttfamily\scriptsize\baselineskip=11.5pt\selectfont
Journal: it was spring on \gifS{208 Wicker St} when I first noticed our peculiar
neighbor, {\bfseries\color{leak}Mr.\ Whittaker}. His eccentric ways were as much a
part of the neighborhood as the old oak trees...\par}

\medskip
\sectag{Agent output}\ {\normalfont\color{neutral}}\par
\smallskip
{\ttfamily\scriptsize\baselineskip=11.5pt\selectfont
A Tribute to \sinkS{Mr.\ Whittaker} and His Feathered Friends... our peculiar
neighbor and his backyard...\par}

\medskip
\end{minipage}%
\begin{tikzpicture}[remember picture, overlay,
    every node/.style={font=\scriptsize\rmfamily\bfseries}]
  \draw[rtbasamber, densely dotted, line width=0.9pt, -{Stealth[length=4pt,width=3pt]}]
    (ngif.south) to[out=-90,in=90,looseness=0.6]
    ($(nsink.north)!0.5!(nsink.north east)$);
  \draw[gifblue, dotted, line width=0.9pt]
    (ngif.south) to[out=-105,in=70,looseness=0.7]
    node[pos=0.65, text=gifblue, inner sep=1pt]{\large$\boldsymbol\times$}
    ($(nsink.north)!0.5!(nsink.north west)$);
  \node[anchor=west,text=gifblue,fill=white,inner sep=1.2pt]
    at ($(nsink.north west)+(-0.7,0.62)$) {GIF: 0 bits, 0\%};
  \node[anchor=west,text=rtbasamber,fill=white,inner sep=1.2pt]
    at ($(nsink.north east)+(-0.2,0.52)$) {RTBAS: 9\%};
\end{tikzpicture}

\end{center}
Because the address sends no information to the output, \gif{} assigns it zero outflow, whereas last-layer attention still taints it at 9\%. 
The ability to certify zero flow is what lets \gif{} avoid over-tainting spans that never surface.

\section{Related Work}

\paragraph{Traditional IFC}
IFC tracks the propagation of confidential or untrusted data through programs, files, processes, messages, or database fields, enforcing noninterference-style policies with explicit declassification~\cite{denning1976lattice,volpano1996sound,sabelfeld2003language,myers1999jflow,sabelfeld2009declassification}. Quantitative information flow (QIF) measures leakage as uncertainty reduction~\cite{smith2009foundations,clarkson2010hyperproperties}, while taint tracking and abstract propagation logics propagate labels through abstract execution semantics~\cite{cousot1977abstract}. These foundations do not specify how labels should pass through LLM invocations, where instructions, private context, retrieved data, tool observations, and prior outputs are mixed inside continuous neural computation.
\looseness=-1

\paragraph{Agentic IFC}
Recent systems adapt IFC to LLM agents by separating planning from execution or enforcing labels around tool use~\cite{wu2024system,costa2025securing,debenedetti2025camel,beurerkellner2025design}. They filter untrusted planning inputs, enforce confidentiality and integrity labels with policy checks and selective hiding, or screen tool-call dependencies using LLM-as-judge and attention saliency~\cite{wu2024system,costa2025securing,zhong2025rtbasdefendingllmagents}. These systems demonstrate the promise of IFC for agent security, but mostly track flow around the model rather than through it. Conservative labels can overtaint downstream context, while heuristic screeners can miss model-mediated influence accumulated across prompts, tools, memory, retrieval, generated outputs, and inter-agent messages. GIF is complementary: it supplies fine-grained dependencies that agentic IFC systems can enforce, declassify, or ignore.

\paragraph{Heuristic Interpretability and Attribution}
Interpretability methods estimate input influence using attention, gradients, integrated gradients, perturbations, influence functions, and causal interventions~\cite{simonyan2014deepinsideconvolutionalnetworks,sundararajan2017axiomatic,jain2019attention,wallace2020interpreting,koh2017understanding}. Recent LLM attribution extends these ideas to token- and data-level influence, including gradient- and Jacobian-based estimates of how local input changes affect next-token predictions~\cite{liu2026jacobianscopestokenlevelcausal,chen2026mechanisticdataattributiontracing,jiao2025datelmbenchmarkingdataattribution}. Though these methods expose sensitivity, GIF instead gives Jacobian-based sensitivity an IFC interpretation by constructing a local Gaussian surrogate channel whose capacity provides a sound mutual-information flow measure. To our knowledge, GIF is the first framework to give this Jacobian geometry a QIF interpretation with a local soundness theorem relating the computable score to true perturbation-to-output flow.

\paragraph{Prompt-Injection and Privacy-Leakage Defenses}
Prompt-injection defenses use filters, instruction hierarchies, LLM-as-judge monitors, attention detectors, intent analyzers, and context purification~\cite{debenedetti2024agentdojo,zhan2025adaptiveattacksbreakdefenses,kang2025mitigatingindirectpromptinjection,zhang2026agentsentrymitigatingindirectprompt,shi2026progentsecuringaiagents}. Privacy-focused work studies the dual confidentiality failures: unnecessary private-information use, leakage of externally stored personal data, and data-minimization violations in web or tool-use tasks~\cite{zharmagambetov2026agentdam,alizadeh2025simplepromptinjectionattacks,el2026agentleak}. These works provide useful benchmarks and mitigations, but most remain task-specific detectors or prompting strategies. They lack a general quantitative semantics for tracing how private or untrusted spans influence later outputs, tool arguments, memory updates, or inter-agent messages. GIF addresses this gap by directly measuring model-mediated flow and identifying high-magnitude dependencies that require policy enforcement or declassification.
\section{Conclusion}
In this paper, we formulated Geometric Information Flow (GIF), a quantitative semantics for measuring information flow through LLM computations. GIF treats model-mediated influence as a local channel from input spans to downstream outputs, tool arguments, memory updates, and agent messages. Through an extensive evaluation, we demonstrated that GIF can identify sparse, security-relevant dependencies across agentic tasks. We believe GIF provides a solid foundation for fine-grained IFC systems that secure complex agentic software without sacrificing utility.





\bibliographystyle{IEEEtran}
\bibliography{reference}
%
%
%


\end{document}